\definecolor{LightCyan}{rgb}{0.8, 0.9, 1}
\newcommand{\good}{\textit{good}}
\newcommand{\bad}{\textit{bad}}
\newcommand{\la}{\langle}
\newcommand{\ra}{\rangle}
\newtheorem{condition}[theorem]{Condition}
\def \poly {\mathrm{poly}}
\renewcommand{\zeta}{\overline{\rho}}
\renewcommand{\omega}{\underline{\rho}}
\def \poly {\mathrm{poly}}
\def \CC {}
\DeclareMathOperator{\polylog}{\rm polylog}
\title{Matching the Statistical Query Lower Bound for $k$-Sparse Parity Problems
with Sign Stochastic Gradient Descent}
\author{
 Yiwen Kou\thanks{Equal contribution} \\ Department of Computer Science \\
 University of California, Los Angeles \\
Los Angeles, CA 90095, USA \\
\texttt{evankou@cs.ucla.edu}
    \And
     Zixiang Chen\footnotemark[1]  \\
    Department of Computer Science \\
 University of California, Los Angeles \\
Los Angeles, CA 90095, USA \\
\texttt{chenzx19@cs.ucla.edu}
\AND
    Quanquan Gu \\
    Department of Computer Science \\ University of California, Los Angeles \\
   Los Angeles,  CA 90095, USA \\
    \texttt{qgu@cs.ucla.edu} 
    \And
Sham M. Kakade \\
Kempner Institute at Harvard University \\
Harvard University \\
 Cambridge, MA 02138, USA \\
 \texttt{sham@seas.harvard.edu}
}
\begin{document}
\date{}
\maketitle

\begin{abstract}
The $k$-sparse parity problem is a classical problem in computational complexity and algorithmic theory, serving as a key benchmark for understanding computational classes. In this paper, we solve the $k$-sparse parity problem with sign stochastic gradient descent, a variant of stochastic gradient descent (SGD) on two-layer fully-connected neural networks. We demonstrate that this approach can efficiently solve the $k$-sparse parity problem on a $d$-dimensional hypercube ($k\le O(\sqrt{d})$) with a sample complexity of $\tilde{O}(d^{k-1})$ using $2^{\Theta(k)}$ neurons, matching the established $\Omega(d^{k})$ lower bounds of Statistical Query (SQ) models\footnote{The $\tilde{O}(d^{k-1})$ sample complexity implies $\tilde{O}(d^{k})$ query complexity, because each sample corresponds to $d$ scalar-valued queries.}. Our theoretical analysis begins by constructing a $\good$ neural network capable of correctly solving the $k$-parity problem. We then demonstrate how a trained neural network with sign SGD can effectively approximate this good network, solving the $k$-parity problem with small statistical errors. To the best of our knowledge, this is the first result that matches the SQ lower bound for solving $k$-sparse parity problem using gradient-based methods.
\end{abstract}

\section{Introduction}\label{sec:intro}
The $k$-parity problem, defined on a binary sequence of length $d$, is a fundamental problem in the field of computational complexity and algorithmic theory. This problem involves finding a subset of cardinality $k$ by assessing if the occurrence of $1$'s in this subset is even or odd. The complexity of the problem escalates as the parameter $k$ increases. Its significance, while evidently practical, is rooted in its theoretical implications; it serves as a vital benchmark in the study of computational complexity classes and has profound implications for our understanding of P versus NP \citep{vardy1997intractability,downey1999parametrized,dumer2003hardness} and other cornerstone questions in computational theory \citep{blum2005line,klivans2006toward}.  Furthermore, the $k$-parity problem's complexity underpins many theoretical models in error detection and information theory \citep{dutta2008tight}, and is instrumental in delineating the limitations and power of algorithmic efficiency \citep{farhi1998limit}. This paper tackles the $k$-sparse parity problem~\citep {daniely2020learning}, where the focus is on the parity of a subset with cardinality $k \ll d$. 

Recent progress in computational learning theory has focused on improving the sample complexity guarantees for learning $k$-sparse parity functions using stochastic gradient descent (SGD). Under the framework of the Statistical Query (SQ) model \citep{kearns1998efficient}, it has been established that learning the $k$-sparse parity function requires a minimum of $\Omega(d^k)$ queries \citep{barak2022hidden}, highlighting the challenge in efficiently learning these functions.  On the other hand, considerable effort has been devoted to establishing sample complexity upper bounds for the special XOR case ($k=2$), with notable successes including $O(d)$ sample complexity using infinite-width or exponential-width (i.e., $O(2^{d})$) two-layer neural networks trained via gradient flow \citep{wei2018regularization, chizat2020implicit, telgarsky2022feature}, and $O(d^2)$ sample complexity with polynomial-width networks via SGD \citep{ji2019polylogarithmic, telgarsky2022feature}. A significant advancement in solving the $2$-parity problem was recently introduced by \cite{glasgow2023sgd}.  They proved a sample complexity bound of $\tilde{O}(d)$ using a two-layer ReLU network with logarithmic width trained by SGD, thus matching the SQ lower bound when $k=2$.

In the general case of $k\geq 2$,  \citet{barak2022hidden} has made significant progress, achieving a sample complexity of $\tilde{O}(d^{k+1})$ with a network width requirement of $2^{\Theta(k)}$, which is independent of the input dimension $d$. Additionally, \citet{barak2022hidden} demonstrated that the neural tangent kernel (NTK)-based method \citep{jacot2018neural} requires a network of polynomial width $d^{\Omega(k)}$ to solve the $k$-parity problem. Recently, \citet{suzuki2023feature} achieved a sample complexity of $O(d)$ by the mean-field Langevin dynamics (MFLD) \citep{mei2018mean, hu2019mean}, 
which requires neural networks with an exponential width in $d$, i.e., $O(e^{d})$, and an exponential number of iterations (i.e., $O(e^{d})$ ) to converge. Thus, their method is not computationally efficient and does not match the SQ lower bound. Notably, \cite{abbe2023sgd} introduced the leap-$k$ function for binary and Gaussian sequences, which extends the scope of the $k$-parity problem. They also proved  Correlational Statistical Query (CSQ) \citep{kearns1998efficient, bshouty2002using} 
lower bounds for learning leap-$k$ function for both Gaussian and Boolean inputs. In detail, they proved CSQ lower bounds of $\Omega(d^{k-1})$ for Boolean input and $\Omega(d^{k/2})$ for Gaussian input, which suggests that learning from Boolean input can be substantially harder than learning from Gaussian input. They also proved that SGD
can learn low dimensional target functions with Gaussian isotropic data and
2-layer neural networks using $n \gtrsim d^{\mathrm{Leap} - 1}$ examples. However, their upper bound analysis is based on the assumption that the input data $\xb$ follows a Gaussian distribution and relies on Hermite polynomials, making it unclear how to extend it to analyze Boolean input. Based on the above review of existing literature, it raises a natural but unresolved question: 
\begin{center}
    \emph{Is it possible to match the statistical query lower bound for $k$-sparse parity problems with stochastic gradient descent?} 
\end{center}

In this paper, we give an affirmative answer to the above question. In particular, we consider the standard $k$-sparse parity problem, where the input $\xb$ is drawn from a uniform distribution over $d$-dimensional hypercube $\mathrm{Unif}(\{-1,1\}^{d})$. 
Our approach involves training two-layer fully-connected neural networks with \CC{$m = 2^{\Theta(k)}$} width using sign SGD \citep{bernstein2018signsgd} 
with batch size $B=O(d^{k-1}\polylog(d))$.  We prove that the neural network trained by SGD can achieve a constant-order positive margin with high probability after $T=O(\log d)$ iterations. Therefore, the total number of examples required in our approach is $n = BT = \tilde{O}(d^{k-1})$. Thus, the total number of scalar-valued queries required in our paper is $m\cdot d \cdot n = 2^{\Theta(k)}\cdot d \cdot (d^{k-1}\cdot \text{polylog}d) = 2^{\Theta(k)}d^{k}\cdot \text{polylog}d$, where $m$ is the number of neurons, $d$ is the input dimension, and $n$ is the total number of fresh examples seen by the algorithm\footnote{Note that $m\cdot d$ is the total number of scalar-valued queries used for one example.}. \citet{abbe2023sgd} also proved a CSQ lower bound $\Omega(d^{k})$ 
for learning $d$-dimensional $k$-parity problems, which implies the sample complexity lower bound $n \gtrsim d^{k-1}$. Thus, our sample complexity result also matches the CSQ lower bound in \citet{abbe2023sgd}.

\subsection{Our Contributions}
The Statistical Query (SQ) lower bound indicates that, regardless of architecture, SGD 
requires a query complexity of $\Omega(d^k)$ for learning $k$-sparse $d$-dimensional parities under a constant noise level. We push the sample complexity frontier of $k$-sparse parity problem to $\tilde{O}(d^{k-1})$ via SGD, specifically with online stochastic sign gradient descent. 
Our main result is stated in the following informal theorem:

\begin{theorem}[Informal]
For a two-layer fully-connected neural networks of width $2^{\Theta(k)}$, online sign SGD with batch size $\tilde{O}(d^{k-1})$ can find a solution to the $k$-parity problem with a small test error within $O(k\log d)$ iterations.
\end{theorem}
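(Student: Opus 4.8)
The plan is to follow the two-stage ``good-network then approximation'' strategy announced in the abstract. Write the network as $f(\xb) = \sum_{r=1}^m a_r\, \phi(\la \mathbf{w}_r, \xb\ra + b_r)$ with input $\xb \sim \mathrm{Unif}(\{-1,1\}^d)$ and label $y = \chi_S(\xb) = \prod_{j\in S} x_j$ for an unknown support $S$ with $|S| = k$. The first goal is to exhibit a \emph{good} target network: since $\chi_S(\xb)$ is a function of the integer sum $\sum_{j\in S} x_j \in \{-k,\dots,k\}$, one can place neurons whose first-layer weights are supported (up to sign) on $S$ and whose biases tile the $O(k)$ possible values of that sum, and then choose the second-layer weights $a_r$ so that the thresholded responses reconstruct the parity with a constant positive margin $y f(\xb) \ge c$ for every $\xb$. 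With width $m = 2^{\Theta(k)}$ there are enough neurons to realize all the needed sign patterns on $S$, which is the reason the width is exponential in $k$ but independent of $d$.

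The technical core is the feature-learning phase: I would show that sign SGD, run online with a fresh batch of size $B = \tilde O(d^{k-1})$ at each step, drives the first-layer weights of a randomly initialized network toward $S$. The key computation is the population gradient in a single coordinate, $\mathbb{E}[\, y\, x_j\, \phi'(\la\mathbf{w}_r,\xb\ra + b_r)\,]$, which by a Fourier expansion of $\phi'$ equals its coefficient on the monomial $\chi_{S\triangle\{j\}}$. For a relevant coordinate $j\in S$ this is the degree-$(k-1)$ coefficient, of order $\prod_{i\in S\setminus\{j\}} |w_{r,i}| \asymp d^{-(k-1)/2}$ under the scaling $|w_{r,i}| \asymp d^{-1/2}$ that keeps preactivations $\Theta(1)$; for an irrelevant coordinate $j\notin S$ it is the degree-$(k+1)$ coefficient, smaller by a factor $\asymp 1/d$. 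Because each per-sample gradient coordinate is $O(1)$-subgaussian, a batch of size $B \gtrsim d^{k-1}\polylog(d)$ makes the empirical gradient concentrate tightly enough that its \emph{sign} on relevant coordinates matches the population drift with high probability, while on irrelevant coordinates the sign stays essentially a fair coin. Sign SGD then moves each relevant weight by $+\eta$ in the correct direction every step, so with $\eta \asymp 1/T$ and $T = O(k\log d)$ steps the relevant weights reach $\Theta(1)$ magnitude, whereas the irrelevant weights only perform a bounded random walk.

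Finally I would translate the learned first layer into a guarantee on the output. Here one shows that the trained network is close, neuron by neuron, to the good network of the first step: every preactivation $\la \mathbf{w}_r, \xb\ra$ is dominated by the $\Theta(1)$ mass accumulated on $S$, and after also fitting the biases and second-layer weights the network inherits a constant positive margin on the \emph{population}, which immediately yields small test error $\Pr[y f(\xb) \le 0]$. Since the total sample count is $n = BT = \tilde O(d^{k-1})$, this matches the SQ/CSQ lower bound.

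I expect the feature-learning phase to be the main obstacle, for two reasons. First, one must certify the sign of the batch gradient on relevant coordinates with the \emph{tight} batch size $d^{k-1}$ rather than the naive $d^{2(k-1)}$; this is precisely where sign SGD helps, since only the sign of the mean — not its magnitude — has to be resolved. Second, and more delicate, is the aggregate control of the $\Theta(d)$ irrelevant coordinates: although each individually stays small, their collective contribution to every preactivation must be shown not to swamp the $\Theta(1)$ signal on $S$, which requires a careful martingale and concentration argument over the coupled random walks across all coordinates, steps, and neurons, together with a union bound that the exponential width $m = 2^{\Theta(k)}$ and the $O(k\log d)$ horizon keep affordable.
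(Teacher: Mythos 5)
Your high-level outline --- construct a \emph{good} network of width $2^{\Theta(k)}$, show that a batch of size $B\gtrsim d^{k-1}\polylog(d)$ suffices to resolve the sign of the population gradient on the relevant coordinates (which is of order $d^{-(k-1)/2}$ relative to the per-sample fluctuation), and then argue the trained network inherits a constant margin --- is the same two-stage strategy the paper follows, and your batch-size heuristic is exactly the one given in Section~\ref{subsec:sgd}. The architectural route differs: the paper uses the polynomial activation $\sigma(z)=z^k$ with $\pm 1$ initialization and no biases, which makes the population gradient on a relevant coordinate \emph{exactly} $k!\,a_r\prod_{i\neq j}w_{r,i}$ and \emph{exactly} zero on irrelevant coordinates, and yields the good-network margin $k!\,2^k$ via a finite-difference identity; your ReLU-with-tiled-biases construction and Fourier-coefficient computation are a legitimate alternative for the first stage.

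The genuine gap is in the step you yourself flag as delicate: the aggregate control of the irrelevant coordinates over $T$ iterations. Under plain sign SGD the population gradient on an irrelevant coordinate is essentially zero, so the sign of the batch gradient there is determined by sampling noise --- a fair coin, as you say --- and sign SGD then applies a \emph{full-magnitude} $\pm\eta$ update based on that coin at every one of the $T$ steps. Each irrelevant weight therefore performs a random walk of typical displacement $\eta\sqrt{T}$, and the $d-k$ irrelevant coordinates together contribute $\Theta(\eta\sqrt{Td})$ to every preactivation; with $\eta\asymp 1/T$ and $T=O(k\log d)$ this is $\sqrt{d/T}\gg 1$, which swamps the $\Theta(1)$ signal accumulated on $S$. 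No martingale refinement rescues this, because the problem is the magnitude of the noise-driven steps, not their correlation structure. The paper's resolution is structural rather than probabilistic: the modified sign function $\tilde{\sign}$ has a dead zone of width $\rho$, so that whenever the batch gradient on a coordinate is below threshold (which, given the batch size, happens with high probability precisely on the irrelevant coordinates) the update is \emph{exactly zero}; combined with weight decay $\lambda=1$, the irrelevant coordinates and the \textit{bad} neurons then shrink geometrically to $d^{-(k+1)}$, while the relevant coordinates of \textit{good} neurons are held fixed at $\pm 1$ because the $+\eta$ gradient step exactly cancels the $-\eta\lambda$ decay (Lemmas~\ref{lm:feature change} and~\ref{lm:exact order4}). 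Your proposal includes neither the threshold nor the weight decay, and without some such mechanism the feature-learning phase does not close.
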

The above theorem improves the sample complexity in \citet{barak2022hidden} from $\tilde{O}(d^{k+1})$ to $\tilde{O}(d^{k-1})$. Moreover, the total number of queries required is $\tilde{O}(d^{k})$, which matches the SQ/CSQ lower bound up to logarithmic factors. Additionally, under the standard basis setting, our result matches the sample complexity in \citet{glasgow2023sgd} for solving the XOR (i.e., $2$-parity) problem with sign SGD. It is worth noting that our result only requires two-layer fully connected neural networks with $2^{\Theta(k)}$ width and sign SGD training with $O(k\log d)$ iterations, which gives a computationally efficient algorithm. Finally, we empirically verify our theory in Appendix~\ref{sec:exp}, showcasing the efficiency and efficacy of our approach.

\paragraph{Notation.} We use $[N]$ to denote the index set $\{1, \dots, N\}$. We use lowercase letters, lowercase boldface letters, and uppercase boldface letters to denote scalars, vectors, and matrices, respectively. For a vector $\vb=(v_1,\cdots,v_d)^{\top}$, we denote by $\|\vb\|_{2}:=(\sum_{j=1}^{d}v_{j}^{2})^{1/2}$ its $L_2$ norm. For a vector $\vb=(v_1,\cdots,v_d)^{\top}$, we denote by $\vb_{[i_1:i_2]}:=(v_{i_1},\cdots,v_{i_2})^{\top}$ its truncated vector ranging from the $i_1$-th coordinate to the $i_2$-th coordinate. We denote by $\zero$ a vector of all zeros. For two sequence $\{a_k\}$ and $\{b_k\}$, we denote $a_k=O(b_k)$ if $|a_k|\leq C|b_k|$ for some absolute constant $C$, denote $a_k=\Omega(b_k)$ if $b_k=O(a_k)$, and denote $a_k=\Theta(b_k)$ if $a_k=O(b_k)$ and $a_k=\Omega(b_k)$. We also denote $a_k=o(b_k)$ if $\lim|a_k/b_k|=0$. We use $\tilde{O}(\cdot)$ and $\tilde{\Omega}(\cdot)$ to omit logarithmic terms in the notation. Finally, we denote $x_n=\poly(y_n)$ if $x_n=\cO( y_n^{D})$ for some positive constant $D$, and $x_n = \polylog(y_n)$ if $x_n= \poly( \log (y_n))$. 

\section{Related Work}
\paragraph{XOR Problem.} The performance of two-layer neural networks in the task of learning $2$-parity has been the subject of extensive research in recent years. \cite{wei2018regularization,chizat2020implicit,telgarsky2022feature} employed margin techniques to establish the convergence toward a global margin maximization solution, utilizing gradient flow and sample complexity of $O(d)$. Notably, \cite{wei2018regularization} and \cite{chizat2020implicit} employed infinite-width neural networks, while \cite{telgarsky2022feature} employed a more relaxed width condition of $O(d^{d})$. A significant breakthrough in this domain was achieved by \cite{glasgow2023sgd}, who demonstrated a sample complexity of $\tilde{O}(d)$ by employing SGD in conjunction with a ReLU network of width $\polylog(d)$. Furthermore, several other studies have shown that when input distribution follows Gaussian distribution, neural networks can be effectively trained to learn the XOR cluster distribution \citep{frei2022random, meng2023benign, xu2023benign}. A comparison between the results in this paper and those of related work solving the XOR problem can be found in Table~\ref{table:1}.

\paragraph{$k$-parity Problem.} The challenge of training neural networks to learn parities has been explored in previous research from diverse angles.  Several papers studied learning the $k$-parity function by using two-layer neural networks. \cite{daniely2020learning} studied learning $k$-parity function by applying gradient descent on the population risk (infinite sample size). Notably, \cite{barak2022hidden} presented both empirical and theoretical evidence that the $k$-parity function can be effectively learned using SGD and a neural network of constant width, demonstrating a sample complexity of $O(d^{k+1})$ and query complexity of $O(d^{k+2})$. \CC{\citet{edelman2024pareto} demonstrated that sparse initialization and increased network width lead to
improvements in sample efficiency. Specifically, they showed that the sample complexity can be reduced at the cost of increasing the width. However, the best statistical query complexity they can achieve is $O(d^{k+2})$, which is the same as  that in \citet{barak2022hidden}. }\cite{suzuki2023feature} reported achieving a sample complexity of $O(d)$ by employing mean-field Langevin dynamics (MFLD). 
Furthermore, \cite{abbe2022merged} and \cite{abbe2023sgd} introduced a novel complexity measure termed ``leap'' and established that leap-$k$ (with $k$-parity as a special case) functions can be learned through SGD with a sample complexity of $\tilde{O}(d^{\max(k-1,1)})$.
Additionally, \cite{abbe2023provable} demonstrated that a curriculum-based noisy-GD (or SGD) approach could attain a sample complexity of $O(d)$, provided the data distribution comprises a mix of sparse and dense inputs. The conditions outlined in this paper are compared to those from related work involving uniform Boolean data distribution, as presented in Table~\ref{table:2}.

\newcolumntype{g}{>{\columncolor{LightCyan}}c}

\begin{table*}[!tb]

\centering
\resizebox{\textwidth}{!}{%
\begin{tabular}{ccccccc}
\toprule
    & Activation & Loss & \multirow{2}{*}{Algorithm} &  Width ($m$)         & Sample ($n$)    &  Iterations ($t$)  
\\
    & Function & Function & &  Requirement    & Requirement   & to Converge 
\\ 
\midrule 
Theorem 2.1 & & & &\\
\multirow{-1}{*}{\citep{wei2018regularization}} & \multirow{-2}{*}{ReLU} & \multirow{-2}{*}{logistic} & \multirow{-2}{*}{WF with Noise} &\multirow{-2}{*}{$\infty$}&\multirow{-2}{*}{$d/\epsilon$}& \multirow{-2}{*}{$\infty$}\\
\midrule 
Theorem 8 & & & &\\
\multirow{-1}{*}{\citep{chizat2020implicit}} & \multirow{-2}{*}{2-homogenous} & \multirow{-2}{*}{logistic/hinge} & \multirow{-2}{*}{WF} & \multirow{-2}{*}{$\infty$}&\multirow{-2}{*}{$d/\epsilon$}& \multirow{-2}{*}{$\infty$}\\
\midrule 
Theorem 3.3& & & &\\
\small{\citep{telgarsky2022feature}}  & \multirow{-2}{*}{ReLU} & \multirow{-2}{*}{logistic} & \multirow{-2}{*}{scalar GF}&\multirow{-2}{*}{$d^{d}$}&\multirow{-2}{*}{$d/\epsilon$}& \multirow{-2}{*}{$d/\epsilon$}\\

\midrule

Theorem 3.2& & & &\\
\multirow{-1}{*}{\citep{ji2019polylogarithmic}} & \multirow{-2}{*}{ReLU} & \multirow{-2}{*}{logistic} &\multirow{-2}{*}{SGD}&\multirow{-2}{*}{$d^{8}$}&\multirow{-2}{*}{$d^{2}/\epsilon$}& \multirow{-2}{*}{$d^{2}/\epsilon$}\\
\midrule
Theorem 2.1& & & &\\
\small{\citep{telgarsky2022feature}} & \multirow{-2}{*}{ReLU } & \multirow{-2}{*}{logistic} & \multirow{-2}{*}{SGD}&\multirow{-2}{*}{$d^{2}$}&\multirow{-2}{*}{$d^{2}/\epsilon$}& \multirow{-2}{*}{$d^{2}/\epsilon$}\\
\midrule
Theorem 3.1 & & & &\\
\multirow{-1}{*}{\citep{glasgow2023sgd}} & \multirow{-2}{*}{ReLU} & \multirow{-2}{*}{logistic} & \multirow{-2}{*}{SGD} &\multirow{-2}{*}{$\mathrm{polylog}(d)$}& \multirow{-2}{*}{$d\cdot \mathrm{polylog}(d)$}& \multirow{-2}{*}{$\mathrm{polylog}(d)$}\\
\midrule
\rowcolor{LightCyan}
& & & & & &\\
\rowcolor{LightCyan}
\multirow{-2}{*}{Ours} & \multirow{-2}{*}{$x^2$} & \multirow{-2}{*}{correlation} &\multirow{-2}{*}{Sign SGD}&\multirow{-2}{*}{$O(1)$}&\multirow{-2}{*}{$d\cdot \mathrm{polylog}(d)$}& \multirow{-2}{*}{$\log d$}\\
\bottomrule
\end{tabular}
}
\caption{%
Comparison of existing works on the XOR ($2$-parity) problem. We mainly focus on the dependence on the input dimension $d$ and test error $\epsilon$ and treat other arguments as constant. Here WF denotes Wasserstein flow technique from the mean-field analysis, and GF denotes gradient flow. The sample requirement and convergence iteration in both \citet{glasgow2023sgd} and our method do not explicitly depend on the test error $\epsilon$. Instead, the dependence on $\epsilon$ is implicitly incorporated within the condition for $d$. Specifically, our approach requires that $d \geq C\log^2(2m/\epsilon)$ while \citet{glasgow2023sgd} requires $d  \geq \exp((1/\epsilon)^C)$ where $C$ is a constant.
}\label{table:1}. 
\end{table*}

\begin{table*}[!tb]
\centering
\resizebox{\textwidth}{!}{%
\begin{tabular}{ccccccc}
\toprule
    & Activation & Loss & \multirow{2}{*}{Algorithm} &  Width ($m$)         & Sample ($n$)    &  Iterations ($t$)  
\\
    & Function & Function & &  Requirement    & Requirement   & to Converge 
\\ 
\midrule
Theorem 4 & & & &\\
\multirow{-1}{*}{\citep{barak2022hidden}} & \multirow{-2}{*}{ReLU} & \multirow{-2}{*}{hinge} & \multirow{-2}{*}{SGD} &\multirow{-2}{*}{$2^{\Theta(k)}$}& \multirow{-2}{*}{$d^{k+1}\cdot\log(d/\epsilon)/\epsilon^2$}& \multirow{-2}{*}{$d/\epsilon^{2}$}\\
\midrule
Theorem 4 & & & &\\
\multirow{-1}{*}{\citep{edelman2024pareto}} & \multirow{-2}{*}{ReLU} & \multirow{-2}{*}{hinge} & \multirow{-2}{*}{SGD} &\multirow{-2}{*}{$(d/s)^{k}$}& \multirow{-2}{*}{$(s/k)^{k-1}d^{2}\log (d) /\epsilon^2 $}& \multirow{-2}{*}{$1/\epsilon^{2}$}\\
\midrule
Corollary 1 & & & &\\
\citep{suzuki2023feature} & \multirow{-2}{*}{Variant of Tanh} & \multirow{-2}{*}{ logistic} & \multirow{-2}{*}{MFLD} &\multirow{-2}{*}{$e^{d}$}& \multirow{-2}{*}{$d/\epsilon$}& \multirow{-2}{*}{$e^{d}$}\\
\midrule
\rowcolor{LightCyan}
& & & & & &\\
\rowcolor{LightCyan}
\multirow{-2}{*}{Ours} & \multirow{-2}{*}{$x^{k}$} & \multirow{-2}{*}{correlation} &\multirow{-2}{*}{Sign SGD}&\multirow{-2}{*}{$2^{\Theta(k)}$}&\multirow{-2}{*}{$d^{k-1}\cdot \mathrm{polylog}(d)$}& \multirow{-2}{*}{$\log d$}\\ 
\bottomrule
\end{tabular}
}
\caption{\CC{Comparison of existing works for the general $k$-parity problem, focusing primarily on the dimension $d$ and error $\epsilon$, treating other parameters as constants. $s$ in \citet{edelman2024pareto} is the sparsity of the initialization that satisfies $s>k$. }The activation function by \cite{suzuki2023feature} is defined as $h_{\wb}(\xb)=\Bar{R}[\tanh(\xb^{\top}\wb_1+w_2)+2\tanh(w_3)]/3$, where $\wb=(\wb_1,w_2,w_3)^{\top}\in\mathbb{R}^{d+2}$ and $\Bar{R}$ is a hyper-parameter determining the network's scale. For the sample requirement and convergence iteration, we focus on the dependency of $d, \epsilon$ and omit another terms. Our method's sample requirement and convergence iteration are independent of the test error $\epsilon$, instead relying on a condition for $d$ that implicitly includes $\epsilon$. Specifically, we require $d \geq C\log^2(2m/\epsilon)$. 
}\label{table:2}
\end{table*}

\section{Problem Setup}\label{sec:problem}
In this section, we introduce the $k$-sparse parity problem and the neural network we consider in this paper.  
\begin{definition}[$\boldsymbol{k}$-\textbf{parity}]\label{def:data}
Let each data point $(\xb,y)$ with $\xb \in \RR^{d}$ and $y\in\{-1,1\}$ be generated from the following distribution $\cD_{A}$, where $A$ is a non-empty set satisfying $A \subseteq [d]$: 
\begin{enumerate}[leftmargin=*,nosep]
    \item $x_{j} \sim \{-1,1\}$ as a uniform random bit for $j \in [d]$.
    
    \item The label $y$ is generated as $\Pi_{j \in A}x_j$. 
\end{enumerate}
The $k$-parity problem with dimension $d$ is defined as the task of recovering $A$, where $|A|=k$, using samples from $\cD_{A}$. 

\end{definition}

Without loss of generality, we assume that $A = \{1,\ldots, k\}$ if $|A| = k$. Under this assumption, we denote $\mathcal{D}_{A}$ by $\mathcal{D}$ for simplicity. This $k$-parity problem in Definition~\ref{def:data} is a classical one, which has been studied by \citet{daniely2020learning, barak2022hidden} using neural network learning. When restricted to the $2$-parity function, the problem is reduced to the XOR problem \citep{wei2018regularization}.

\noindent\textbf{Two-layer Neural Networks.} We consider a two-layer fully-connected neural network, which is defined as follows:
\begin{align}
f(\Wb,\xb)  &= \sum_{r=1}^m a_{r}\sigma(\la\wb_{r},\xb\ra), \label{eq:MLP}
\end{align}
where $m$ is the number of neurons. Here, we employ a polynomial activation function defined by $\sigma(z) = z^k$. The term $\wb_{r} \in \mathbb{R}^{d}$ represents the weight vector for the $r$-th neuron, and $\Wb$ denotes the aggregate of all first-layer model weights. The second-layer weights $a_{r}$'s are sampled uniformly from the set $\{-1, 1\}$ and fixed during training.

\paragraph{Algorithm.} We train the above neural network model by minimizing the correlation loss function:
\begin{align*}
    L_{\cD}(\Wb) 
    &= \EE_{(\xb,y)\sim \cD} \ell[ y \cdot f(\Wb, \xb) ],
\end{align*}
where $\ell(z)=1-z$. We consider binary initialization with $\wb_{r}^{(0)}\sim\mathrm{Unif}(\{\pm1\}^{d})$, which is widely used for neural networks solving parity problem \citep{barak2022hidden,abbe2022non}. We then employ the stochastic sign gradient descent with constant step size and weight decay (i.e., $\ell_2$ norm regularization on the first-layer weights) to minimize the correlation loss as follows:
\begin{align*}
\Wb^{(t+1)}=(1-\lambda\eta)\Wb^{(t)}-\eta\cdot\Tilde{\sign}\Big(\frac{\partial L^{(t)}}{\partial\Wb}\Big),
\end{align*}
where $\lambda>0$ is the weight decay parameter, $\eta>0$ is the step size, and $\Tilde{\sign}(x)$ is the modified sign function defined as:
\begin{figure}
    \centering
    \includegraphics[width=1.0\textwidth]{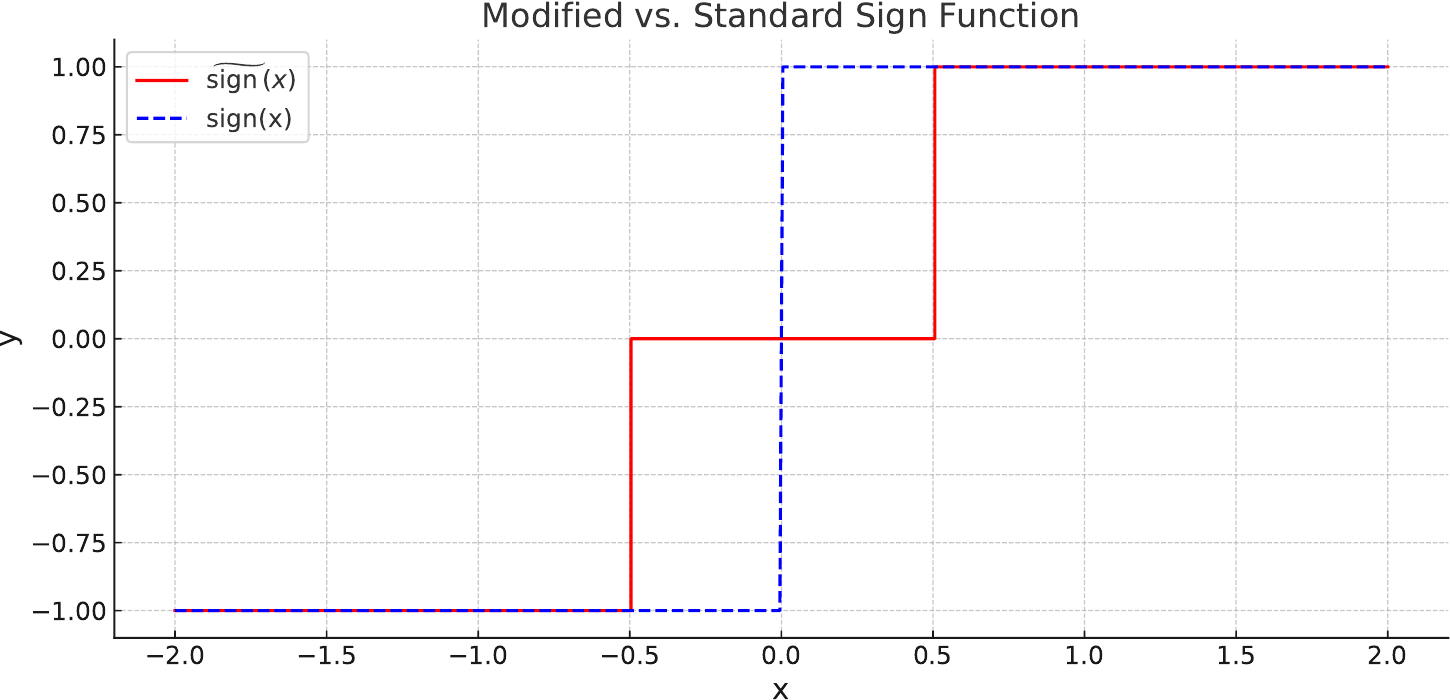}
    \caption{The plot above illustrates the comparison between the modified sign function $\tilde{\sign}(x) (\rho=0.5)$ and the standard sign function $\sign(x)$. The $\tilde{\sign}(x)$ function introduces a `dead zone' between $-\rho$ and $\rho$ where the function value is zero, which is not present in the standard sign function. This modification effectively creates a threshold effect, only outputting non-zero values when the input $x$ exceeds the specified bounds of $\rho$ in either direction.}
    \label{fig:enter-label}
\end{figure}
\begin{align*}
    \Tilde{\sign}(x)=\sign(x)\cdot\ind_{\{|x|\geq\rho\}}=\begin{cases}
        1,&\text{ for }x\geq\rho,\\
        0,&\text{ for }-\rho<x<\rho,\\
        -1,&\text{ for }x\leq-\rho.
    \end{cases}
\end{align*}
Here, $\rho>0$ is a threshold parameter. In this context, $L^{(t)}$ is computed using a randomly sampled online batch $S_{t}$ with batch size $|S_{t}|=B$:
\begin{align*}
    L^{(t)}=\frac{1}{B}\sum_{(\xb,y)\in S_{t}}\ell[ y \cdot f(\Wb^{(t)}, \xb) ].
\end{align*}
Consequently, the update rule for each $\wb_{r}$ is given by:
\begin{equation}\label{eq:w update}
    \wb_{r}^{(t+1)}=(1-\lambda\eta)\wb_{ r}^{(t)}+\eta\cdot\Tilde{\sign}\bigg(\frac{1}{B}\sum_{(\xb,y)\in S_{t}}\sigma'(\la\wb_{r}^{(t)},\xb\ra)\cdot a_r y\xb\bigg),
\end{equation}
where $\Tilde{\sign}$ is applied on an element-wise basis.

\begin{remark}
\CC{Sign SGD has been previously studied in \citet{riedmiller1993direct, bernstein2018signsgd}. Recently, it has become increasingly popular and has been utilized in adaptive optimizers for training large models \citep{chen2024symbolic, liu2023sophia}. Previous studies \citep{balles2018dissecting, bernstein2018signsgd, zou2021understanding} have demonstrated that Sign SGD behaves similarly to Adam when using sufficiently small step sizes or small moving average parameters, $\beta_1$ and $\beta_2$.} In our work, the choice of sign SGD over standard SGD stems primarily from our adoption of the polynomial activation function $\sigma(z) = z^k$. As later explained in Section~\ref{sec:main}, this specific activation function is pivotal in constructing a neural network that accurately tackles the $k$-parity problem. However, it introduces a trade-off: the gradient's dependency on the weights becomes polynomial rather than linear. Sign SGD addresses this issue by normalizing the gradient, ensuring that all neurons progress uniformly towards identifying the parity. Moreover, incorporating a threshold within the sign function plays a crucial role as it effectively nullifies the gradient of noisy coordinates. This, together with weight decay, aids in reducing noise, thereby enhancing the overall performance of the network. 
\end{remark}

\section{Main Results}\label{sec:main}
In this section, we begin by demonstrating the capability of the two-layer fully connected neural network~\eqref{eq:MLP} to classify all examples correctly. Specifically, we construct the following $\good$ network:
\begin{equation}\label{eq:optimal}
    f(\Wb^{*},\xb)=\sum_{r=1}^{2^{k}}a_r^{*}\sigma(\la\wb_{r}^{*},\xb\ra),
\end{equation}
where $\big\{\wb_{r,[1:k]}^{*}\big|r\in[2^{k}]\big\}=\{\pm1\}^{k}$, $a_r^{*}=\prod_{j=1}^{k}\sign(w_{r,j}^{*})$ and $\wb_{r,[k+1:d]}^{*}=\boldsymbol{0}_{d-k}$ for any $r\in[2^{k}]$. Notably, leveraging the inherent symmetry within our neural network model, we can formally assert the following proposition: $y f(\Wb^*,\mathbf{x}) = y' f(\Wb^*,\mathbf{x'})$ for any $(y, \xb)$ and $(y', \xb')$ generated from $\cD_{A}$. The subsequent proposition demonstrates the precise value of the margin.
\begin{proposition}\label{prop:symmetry}
For any data point $(\xb, y)$ generated from the distribution $\cD_{A}$, it holds that
\begin{align}
y f(\Wb^*,\xb)= k!\cdot 2^k. \label{eq:comb}  
\end{align}
\end{proposition}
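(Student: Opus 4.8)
The plan is to reduce everything to an explicit finite algebraic identity on the first $k$ coordinates and then exploit a sign-averaging argument to isolate a single surviving monomial. First I would use $\wb_{r,[k+1:d]}^{*}=\zero_{d-k}$ to write $\la\wb_{r}^{*},\xb\ra=\sum_{j=1}^{k}w_{r,j}^{*}x_j$, so that only the first $k$ coordinates of $\xb$ enter $f(\Wb^{*},\xb)$. Reparametrizing the outer sum, since $\{\wb_{r,[1:k]}^{*}\mid r\in[2^{k}]\}=\{\pm1\}^{k}$ and $a_r^{*}=\prod_{j=1}^{k}\sign(w_{r,j}^{*})$, the network rewrites as $f(\Wb^{*},\xb)=\sum_{\boldsymbol{\epsilon}\in\{\pm1\}^{k}}\big(\prod_{j=1}^{k}\epsilon_j\big)\big(\sum_{j=1}^{k}\epsilon_j x_j\big)^{k}$, where $\boldsymbol{\epsilon}$ ranges over all sign patterns.

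Next I would expand the $k$-th power via the multinomial theorem and swap the order of summation, pulling the sum over sign patterns inside. For each multi-index $\boldsymbol{\alpha}$ with $\sum_j \alpha_j = k$, the coefficient attached to the monomial $\prod_{j=1}^{k}x_j^{\alpha_j}$ carries the factor $\sum_{\boldsymbol{\epsilon}}\prod_{j=1}^{k}\epsilon_j^{\alpha_j+1}$, which factorizes across coordinates as $\prod_{j=1}^{k}\big(\sum_{\epsilon_j\in\{\pm1\}}\epsilon_j^{\alpha_j+1}\big)$. Each single-coordinate factor equals $2$ when $\alpha_j$ is odd and $0$ when $\alpha_j$ is even.

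This is the crux of the argument: the sign averaging annihilates every multi-index unless all $\alpha_j$ are odd, and combined with $\sum_j \alpha_j = k$ together with $\alpha_j \geq 0$, requiring all $\alpha_j$ odd forces $\alpha_j = 1$ for every $j$ (since $k$ positive odd integers summing to $k$ must each equal one). Hence only the monomial $\prod_{j=1}^{k}x_j$ survives, with multinomial coefficient $\binom{k}{1,\ldots,1}=k!$ and sign-averaging factor $2^{k}$, so $f(\Wb^{*},\xb)=k!\cdot 2^{k}\cdot\prod_{j=1}^{k}x_j$. Finally, since $y=\prod_{j\in A}x_j=\prod_{j=1}^{k}x_j$ under the convention $A=\{1,\ldots,k\}$, and each $x_j\in\{\pm1\}$ satisfies $x_j^{2}=1$, multiplying by $y$ gives $y f(\Wb^{*},\xb)=\big(\prod_{j=1}^{k}x_j\big)^{2}k!\,2^{k}=k!\cdot 2^{k}$.

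I expect no serious obstacle, as the whole statement is an exact finite computation with no approximation. The only step demanding care is the bookkeeping of the multinomial expansion and the justification that the sum over $\boldsymbol{\epsilon}$ factorizes coordinatewise; once that is in place, the clean observation that odd-exponent selection plus the degree constraint pins down the unique surviving term $\boldsymbol{\alpha}=(1,\ldots,1)$ finishes the proof.
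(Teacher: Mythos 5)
Your proof is correct, but it takes a genuinely different route from the paper's. The paper fixes $\xb$ and groups the $2^k$ neurons by the number $i$ of coordinates on which $\wb_{r,[1:k]}^*$ disagrees with $\xb_{[1:k]}$, which collapses $y f(\Wb^*,\xb)$ to the alternating sum $\sum_{i=0}^{k}\binom{k}{i}(-1)^{i}(k-2i)^{k}$; that sum is then evaluated as $2^{k}k!$ via a finite-difference identity for $x^{k}$ (Lemma~\ref{lm:auxi}, built on Lemma~\ref{lm:generatorsum2}). You instead expand $\big(\sum_{j}\epsilon_j x_j\big)^{k}$ by the multinomial theorem and average over the sign patterns $\boldsymbol{\epsilon}\in\{\pm1\}^{k}$, so that $\sum_{\boldsymbol{\epsilon}}\prod_{j}\epsilon_j^{\alpha_j+1}$ factorizes coordinatewise and annihilates every multi-index except $\boldsymbol{\alpha}=(1,\dots,1)$ (correctly: all $\alpha_j$ odd forces $\alpha_j\geq 1$, and $k$ such integers summing to $k$ must each equal $1$). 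The two computations are equivalent --- the paper's alternating sum is precisely your identity specialized to $x_1=\cdots=x_k=1$ --- but your argument is self-contained, needing no external finite-difference lemma, and it establishes the stronger polynomial identity $f(\Wb^{*},\xb)=k!\cdot 2^{k}\prod_{j=1}^{k}x_j$ valid for all $\xb\in\RR^{d}$, which makes the constancy of the margin on the hypercube transparent rather than something verified pointwise. Both proofs are complete and exact; yours trades the paper's neuron-grouping combinatorics for multinomial bookkeeping plus a parity (Fourier-type) cancellation.
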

\begin{proof}
Given a $(y, \xb) \in \cD_{A}$, we have that $y = \Pi_{i=1}^{k}x_i$. We divide the neurons into $(k+1)$ groups $\Omega_{i}, i \in \{0, \ldots, k\}$. A neuron $r\in \Omega_{i}$ if and only if $\sum_{s=j}^{k}\ind(\wb_{r}^{*} = x_{j}) = i$.
Then we have that 
\begin{align*}
y f(\Wb^*,\xb) &= \sum_{r=1}^{2^{k}}(y\cdot a_r^{*})\cdot \sigma(\la\wb_{r}^{*},\xb\ra)\\
&= \sum_{i=0}^{k}\sum_{r\in \Omega_i}(y\cdot a_r^{*})\cdot \sigma(\la\wb_{r}^{*},\xb\ra) \\
&= \sum_{i=0}^{k}\sum_{r\in \Omega_i}\bigg(\prod_{j=1}^{k}\sign(x_{j})\cdot \prod_{j=1}^{k}\sign(w_{r,j}^{*})\bigg)\cdot \sigma(\la\wb_{r}^{*},\xb\ra)\\
&= \sum_{i=0}^{k}{k\choose i}(-1)^{i}\sigma(k-2i)\\
&=k!\cdot 2^k,
\end{align*}
where the third equality is due to the fact that $y = \prod_{j=1}^{k}\sign(x_{j})$ and  $a_{r}^{*} = \prod_{j=1}^{k}\sign(w_{r,j}^{*})$, the fourth equality is due to the definition of $\Omega_i$, the last equality holds because $\sigma$ is $k$-th order polynomial activation function and Lemma~\ref{lm:auxi}.
\end{proof}
Therefore, we can conclude that for any $(\xb,y)$, we have $yf(\Wb^*,\xb)=k!\cdot 2^k>0$. We will demonstrate in the next section that training using large batch size online SGD, as long as Condition~\ref{Cond} is met, will lead to the trained neural network $f(\Wb^{(T)},\xb)$ approximating $(m/2^{k+1})\cdot f(\Wb^{*},\xb)$  effectively after $T=O(k\log(d))$ iterations. Our main theorem is based on the following conditions on the training strategy.
\begin{condition}\label{Cond}
Suppose there exists a sufficiently large constant $C$, such that the following conditions hold:
\begin{itemize}[leftmargin=*,nosep]
    \item Neural network width $m$ satisfies $m\geq C\cdot 5^{k}\log(1/\delta)$.
    \item Dimension is sufficiently large: $d\geq C\log^{2}(2m/\epsilon)$. 
    \item Online SGD batch size $B\geq C2^{k}((k-1)!)^{-2}d^{k-1}\log^{k-1}(16mdBT/\delta)\log^{2}(8mdT/\delta)$.
    \item Learning rate $\eta$ satisfies $\eta\leq C^{-1}$.
    \item Regularization parameter $\lambda$ is taken as $\lambda=1$.
    \item The threshold $\rho$ for the modified sign function satisfies $\rho=0.1k!$. 
\end{itemize}
\end{condition}
In the $k$-parity problem, the label $y$ is determined by a set of $k$ bits. Consequently, the total count of distinct features is $2^{k}$, reflecting all possible combinations of these bits. The condition of $m$ is established to guarantee a roughly equal number of neurons within the $\good$ neuron class, each correctly aligned with distinct features. The condition of $d$ ensures that the problem is in a sufficiently high-dimensional setting. The condition of $m, d$ implies that $d \geq \Omega(\log^2 m) \geq \Omega(k^2)$, which is a mild requirement for the sparsity $k$. In comparison, \citet{barak2022hidden} requires $d \geq \Omega(k^{4})$ for neural networks solving $k$-parity problem. By Stirling's approximation, the condition of $B$ can be simplified to $B\geq \tilde{\Omega}\big(k(2e\log(16mdBT/\delta)d/k^2)^{k-1}\big)$. Therefore, the conditional batch size $B$ will exponentially increase as parity $k \leq O(\sqrt{d})$ goes up, which ensures that the stochastic gradient can sufficiently approximate the population gradient. Finally, the conditions of $\eta, \lambda$ ensure that gradient descent with weight decay can effectively learn the relevant features while simultaneously denoising the data.  Finally, the threshold condition $\rho$ increases as sparsity $k$ increases to accommodate the increase of the population gradient. Based on these conditions, we give our main result on solving the parity problem in the following theorem. 
\begin{theorem}\label{main:theorem}
Under Condition~\ref{Cond}, we run online SGD for iteration $T=\Theta\big(k\eta^{-1}\lambda^{-1}\log d\big)$ iterations. Then with probability at least $1-\delta$ we can find $\Wb^{(T)}$ such that 
\begin{align*}
\mathbb{P}\big(yf(\Wb^{(T)}, \xb) \geq \gamma m\big) \geq 1-\epsilon, \end{align*}
where $\gamma=0.25k!$ is a constant. 
\end{theorem}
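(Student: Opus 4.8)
The plan is to treat sign SGD with weight decay as a \emph{denoising} dynamics that drives each neuron towards a rescaled copy of the \good{} network \eqref{eq:optimal}, and then to read off the margin from Proposition~\ref{prop:symmetry}. The starting point is an exact computation of the population gradient. Since $\ell(z)=1-z$ and $\sigma'(z)=kz^{k-1}$, the population gradient of neuron $r$ in coordinate $j$ is $g_{r,j}^{(t)}=a_r k\,\EE_{(\xb,y)\sim\cD}\big[\la\wb_r^{(t)},\xb\ra^{k-1} y x_j\big]$. A monomial-matching argument on $\{\pm1\}^d$ shows that for a relevant coordinate $j\in[k]$ only the monomial $\prod_{i\in[k]\setminus\{j\}}x_i$ survives, giving $g_{r,j}^{(t)}=a_r\,k!\,\prod_{i\in[k]\setminus\{j\}}w_{r,i}^{(t)}$ \emph{independently} of the irrelevant coordinates, whereas for an irrelevant coordinate $j>k$ the required degree $k+1$ exceeds $k-1$ and the gradient is exactly zero. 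Calling a neuron \good{} when $a_r\prod_{i\le k}w_{r,i}^{(0)}=1$, the sign of the relevant-coordinate gradient then equals $w_{r,j}^{(0)}$, so with $\lambda=1$ the update $(1-\eta)w_{r,j}^{(0)}+\eta\,\sign(g_{r,j})$ leaves $w_{r,j}^{(0)}\in\{\pm1\}$ unchanged, while each irrelevant coordinate (zero gradient) contracts by a factor $(1-\eta)$.

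The central step is to replace the population gradient by the empirical batch gradient $\hat g_{r,j}^{(t)}$ and control the error uniformly. Because the batches $S_t$ are drawn online, hence fresh and independent of $\Wb^{(t)}$, conditioning on $\Wb^{(t)}$ reduces this to concentration of an average of $B$ i.i.d.\ bounded terms $k\la\wb_r^{(t)},\xb\ra^{k-1}a_r y x_j$. Each summand is a degree-$k$ polynomial of Rademacher bits, so I would use hypercontractive / high-moment tail bounds to obtain $|\hat g_{r,j}^{(t)}-g_{r,j}^{(t)}|<\tau$ with $\tau<\rho=0.1k!$, after a union bound over the $m$ neurons, the $d$ coordinates and the $T$ iterations. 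Matching $\tau$ to the threshold is exactly what forces the batch-size condition $B\geq C2^k((k-1)!)^{-2}d^{k-1}\log^{k-1}(\cdot)\log^2(\cdot)$: the $d^{k-1}$ scaling (hence the $\tilde O(d^{k-1})$ sample complexity) and the $\log^{k-1}$ factors come precisely from the variance $\asymp d^{k-1}$ and the heavy tail of the degree-$(k-1)$ polynomial $\la\wb_r^{(t)},\xb\ra^{k-1}$. On this good event the modified sign behaves ideally: relevant coordinates satisfy $|\hat g_{r,j}^{(t)}|\geq k!-\tau>\rho$ with the correct sign, while every irrelevant coordinate satisfies $|\hat g_{r,j}^{(t)}|<\tau<\rho$ and is therefore frozen by $\tilde{\sign}$, leaving only weight decay to act on it.

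With the concentration event in hand I would run an induction on $t$ establishing two invariants simultaneously: (i) for every \good{} neuron, $w_{r,j}^{(t)}=w_{r,j}^{(0)}$ for all $j\in[k]$; and (ii) for every neuron, $|w_{r,j}^{(t)}|\leq(1-\eta)^t$ for all $j>k$, with the relevant coordinates of \bad{} neurons likewise contracting. The inductive step follows from the gradient characterization once one checks that shrinking irrelevant coordinates neither change the relevant-coordinate gradient (shown independent of them) nor flip the relevant signs. At $T=\Theta(k\eta^{-1}\log d)$ this yields irrelevant coordinates of size $(1-\eta)^T=d^{-\Theta(k)}$ and \bad{} neurons of vanishing contribution. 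A Chernoff bound together with $m\geq C5^k\log(1/\delta)$ guarantees that each of the $2^k$ sign patterns in $\{\pm1\}^k$ is carried by $\Theta(m/2^k)$ \good{} neurons. For a fixed $\xb$, grouping the surviving \good{} neurons by pattern makes $\sum_{r:\,\mathrm{good}}a_r y\,\sigma(\la\wb_r^{(T)},\xb\ra)$ a near-uniform superposition of $\Theta(m/2^{k})$ rescaled copies of the \good{} network, so Proposition~\ref{prop:symmetry} gives a leading value $\approx (m/2^{k+1})\,k!\,2^k=mk!/2$. Finally I would expand $\la\wb_r^{(T)},\xb\ra^k$ around its relevant part and bound the residual from the $d^{-\Theta(k)}$ noise coordinates; since each residual is a sub-Gaussian sum, $d\geq C\log^2(2m/\epsilon)$ makes it $o(1)$ for all but an $\epsilon$-fraction of inputs, giving $yf(\Wb^{(T)},\xb)\geq mk!/2-o(m)\geq \gamma m$ with probability at least $1-\epsilon$.

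The main obstacle is the uniform concentration of the second step: one must show the degree-$(k-1)$ polynomial batch gradient concentrates below the threshold $\rho$ simultaneously for all neurons, coordinates and iterations, with a batch size whose leading dependence is exactly $d^{k-1}$. Keeping both the $d^{k-1}$ factor and the $\log^{k-1}$ moment factors tight — rather than the looser $d^{k+1}$ of \citet{barak2022hidden} — is what lets the argument match the SQ/CSQ lower bound, and it is the technically delicate part; by comparison the gradient identity, the trajectory induction and the final margin bound are essentially bookkeeping once the concentration and the invariants are in place.
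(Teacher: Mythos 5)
Your proposal is correct and follows essentially the same route as the paper: the exact population-gradient identity (Lemma~\ref{lm:exact gradient}), the good/bad neuron dichotomy with feature coordinates frozen and noise coordinates decaying under $\lambda=1$ (Lemmas~\ref{lm:exact gd feature}--\ref{lm:exact order}), a uniform concentration bound forcing $B\gtrsim d^{k-1}$ so that the stochastic $\tilde{\sign}$ coincides with the population $\tilde{\sign}$ (Lemmas~\ref{lm:gradient approx error} and \ref{lm:sign sgd}), and the final margin via the neuron-count balance of Lemma~\ref{lm:set size} and Proposition~\ref{prop:symmetry}. The only cosmetic difference is the concentration tool --- you invoke hypercontractive tail bounds for the degree-$(k-1)$ Rademacher polynomial, whereas the paper truncates at $\gamma=\sqrt{2\log(16B/\delta)}\,\|\wb_r^{(t)}\|_2$ and applies Hoeffding's inequality; both yield the same $\log^{k-1}$ factors in the batch-size requirement.
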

Theorem~\ref{main:theorem} establishes that, under certain conditions, a neural network is capable of learning to solve the $k$-parity problem within $\Theta(k\eta^{-1}\lambda^{-1}\log d)$ iterations, achieving a population error of at most $\epsilon$. According to Condition~\ref{Cond}, the total number of examples utilized amounts to $BT = \tilde{O}(d^{k-1})$ given a polynomial logarithmic width requirement of $m = O(1)$ with respect to $d$.

\begin{remark}
While Theorem~\ref{main:theorem} works for fixed second-layer training, we demonstrate that comparable results can be obtained when the second layer of the network is simultaneously trained with a lower learning rate. Detailed results and further elaboration of this aspect are provided in Appendix~\ref{sec:D}. Our findings present a sample complexity of $\tilde{O}(d^{k-1})$, aligning with Conjecture 2 posited by \citet{abbe2023sgd}, which suggests a sample complexity lower bound of $\Tilde{\Omega}(d^{k-1})$. Besides, our results for the uniform Boolean distribution match the complexity achieved by \citet{abbe2023sgd} under the isotropic Gaussian scenario. Despite these similarities in outcomes, our methodology diverges significantly: we employ online sign SGD utilizing a large batch size of $\tilde{O}(d^{k-1})$ and conduct training over merely $\tilde{O}(1)$ iterations. In contrast, \citet{abbe2023sgd} implement projected online SGD with a minimal batch size of $1$, extending training over $\tilde{O}(d^{k-1})$ iterations. \citet{abbe2023sgd} also requires a two-phase training process for the first and second layer weights, requiring them to be trained separately.
\end{remark}

\section{Overview of Proof Technique}
In this section, we discuss the main ideas used in the proof. Based on these main ideas, the proof of our main Theorem~\ref{main:theorem} will follow naturally. The complete proofs of all the results are given in the appendix. Section~\ref{subsec:warmup} serves as a warmup by examining population sign gradient descent. Here, three pivotal ideas crucial to the proof of stochastic sign gradient descent are introduced:
\begin{enumerate}[leftmargin=*,nosep]
\item The impact of the initialization's positivity or negativity on the trajectory of neuron weights.
\item The divergence between feature coordinates and noise coordinates of different neurons.
\item How a trained neural network can effectively approximate the $\good$ neural network \eqref{eq:optimal}.
\end{enumerate}
Moving on to Section~\ref{subsec:sgd}, we delve into the analysis of sign SGD. Contrasting with population GD, the addition in SGD analysis involves accounting for the approximation error between the population gradient and the stochastic gradient. This consideration leads to the stipulation of the batch size $B$ outlined in Condition~\ref{Cond}.

\subsection{Warmup: Population Gradient Descent}\label{subsec:warmup}
For population gradient, we perform the following updates:
\begin{align*}
    \wb_{r}^{(t+1)}&=(1-\eta\lambda)\cdot\wb_{r}^{(t)}-\eta\cdot\Tilde{\sign}\big(\nabla_{\wb_r}L_{\cD}(\Wb^{(t)})\big),
\end{align*}
where $L_{\cD}(\Wb)=\EE_{(\xb,y)\sim\cD}[\ell(y,f(\Wb,\xb))]=1-\EE_{(\xb,y)\sim\cD}[yf(\Wb,\xb)]$. Then, the following coordinate-wise population gradient update rules hold:
\begin{align}
    w_{r,j}^{(t+1)}&=(1-\eta\lambda)w_{r,j}^{(t)}+\eta \cdot\Tilde{\sign}\Bigg(k!a_{r}\frac{w_{r,1}^{(t)}w_{r,2}^{(t)}\cdots w_{r,k}^{(t)}}{w_{r,j}^{(t)}}\Bigg), &&j\in[k],\\
    w_{r,j}^{(t+1)}&=(1-\eta\lambda)w_{r,j}^{(t)}, &&j\notin[k].\label{eq:noise change}
\end{align}
In the preceding discussion of the update rule, we have identified that the noise coordinates ($j\notin[k]$) exhibit exponential decay, characterized by a decay constant of $1-\eta\lambda$. To further dissect the dynamics of this system, we turn our attention to the behavior of feature coordinates. We categorize neurons into two distinct types based on their initial alignment: a neuron is classified as a $\good$ neuron if $a_r=\prod_{j=1}^{k}\sign(w_{r,j}^{(0)})$, and conversely, as a $\bad$ neuron if $a_r=-\prod_{j=1}^{k}\sign(w_{r,j}^{(0)})$. This distinction is pivotal, as it divides the neuron population into two distinct classes: $\good$ and $\bad$. Neurons in the $\good$ class are integral to the functionality of the final trained neural network, playing a significant role in its test accuracy. Conversely, neurons classified as $\bad$ tend to diminish in influence over the course of training, ultimately contributing minimally to the network's overall performance. For $\good$ neurons, the update rules for feature coordinates can be reformulated to
\begin{equation}\label{eq:good neuron}
\begin{aligned}
    \sign(w_{r,j}^{(0)})w_{r,j}^{(t+1)}&=(1-\eta\lambda)\sign(w_{r,j}^{(0)})w_{r,j}^{(t)}\\
    &\qquad+\eta \cdot\frac{\sign(w_{r,1}^{(0)}w_{r,1}^{(t)})\sign(w_{r,2}^{(0)}w_{r,2}^{(t)})\cdots \sign(w_{r,k}^{(0)}w_{r,k}^{(t)})}{\sign(w_{r,j}^{(0)}w_{r,j}^{(t)})}.
\end{aligned}
\end{equation}
For neurons classified as \bad, the update rules for feature coordinates can be rewritten as:
\begin{equation}\label{eq:bad neuron}
\begin{aligned}
    \sign(w_{r,j}^{(0)})w_{r,j}^{(t+1)}&=(1-\eta\lambda)\sign(w_{r,j}^{(0)})w_{r,j}^{(t)}\\
    &\qquad-\eta \cdot\frac{\sign(w_{r,1}^{(0)}w_{r,1}^{(t)})\sign(w_{r,2}^{(0)}w_{r,2}^{(t)})\cdots \sign(w_{r,k}^{(0)}w_{r,k}^{(t)})}{\sign(w_{r,j}^{(0)}w_{r,j}^{(t)})}.
\end{aligned}
\end{equation}
Comparing equations \eqref{eq:good neuron} and \eqref{eq:bad neuron}, it becomes apparent that the feature coordinates of $\good$ and $\bad$  neurons exhibit divergent behaviors. Consequently, the feature coordinates of $\good$ neurons will significantly outweigh those of $\bad$ neurons in the long term. With the regularization parameter $\lambda$ set as $1$ in Condition~\ref{Cond}, we derive the following lemma illustrating the divergent trajectories of population gradient descent for both neuron types:
\begin{lemma}\label{lm:feature change}
Under Condition~\ref{Cond}, for $\good$ neurons $r\in\Omega_{\mathrm{g}}:=\{r\in[m]:a_r=\prod_{j=1}^{k}\sign(w_{r,j}^{(0)})\}$, the feature coordinates will remain the same as initialization throughout the training:
\begin{align*}
    w_{r,j}^{(t)}=w_{r,j}^{(0)},&&\forall j\in[k], t\geq 0.
\end{align*}
For $\bad$  neurons $r\in\Omega_{\mathrm{b}}:=\{r\in[m]:a_r=-\prod_{j=1}^{k}\sign(w_{r,j}^{(0)})\}$, the feature coordinates will decay faster than noise coordiantes:
\begin{align*}
    0<\sign(w_{r,j}^{(0)})w_{r,j}^{(t+1)}\leq(1-\eta\lambda)\sign(w_{r,j}^{(0)})w_{r,j}^{(t)},&&\forall j\in[k], t\geq 0.
\end{align*}
\end{lemma}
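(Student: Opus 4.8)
The plan is to prove both assertions simultaneously by induction on $t$, working in the rescaled variables $u_{r,j}^{(t)} := \sign(w_{r,j}^{(0)})\,w_{r,j}^{(t)}$. Since the initialization is binary, $u_{r,j}^{(0)} = |w_{r,j}^{(0)}| = 1$, and the two claims read $u_{r,j}^{(t)} = 1$ for $\good$ neurons and $0 < u_{r,j}^{(t+1)} \le (1-\eta\lambda)\,u_{r,j}^{(t)}$ for $\bad$ neurons. Using $a_r \in \{\pm 1\}$ and that $\tilde{\sign}$ is odd, multiplying the feature-coordinate update in \eqref{eq:w update} by $\sign(w_{r,j}^{(0)})$ gives, for every $j \in [k]$,
\begin{align*}
u_{r,j}^{(t+1)} = (1-\eta\lambda)\,u_{r,j}^{(t)} + \eta\,\varepsilon_{r,j}^{(t)}\,\ind\Big\{\big|\textstyle\prod_{s\ne j} w_{r,s}^{(t)}\big|\ge 0.1\Big\},
\end{align*}
where $\varepsilon_{r,j}^{(t)} := \sign(w_{r,j}^{(0)})\,a_r\prod_{s\ne j}\sign(w_{r,s}^{(t)}) \in \{\pm1\}$ and the threshold has been simplified using $\prod_{s}w_{r,s}^{(t)}/w_{r,j}^{(t)} = \prod_{s\ne j}w_{r,s}^{(t)}$ and $\rho = 0.1\,k!$. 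The entire argument reduces to pinning down the sign $\varepsilon_{r,j}^{(t)}$, which requires knowing $\sign(w_{r,s}^{(t)}) = \sign(w_{r,s}^{(0)})$, i.e. $u_{r,s}^{(t)} > 0$ — precisely the positivity the induction carries.

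For a $\good$ neuron ($a_r = \prod_{s}\sign(w_{r,s}^{(0)})$), the inductive hypothesis $u_{r,s}^{(t)} = 1$ gives $w_{r,s}^{(t)} = w_{r,s}^{(0)} \in \{\pm1\}$, so $|\prod_{s\ne j}w_{r,s}^{(t)}| = 1 \ge 0.1$ activates the indicator and $\varepsilon_{r,j}^{(t)} = \sign(w_{r,j}^{(0)})\,a_r\prod_{s\ne j}\sign(w_{r,s}^{(0)}) = 1$, since substituting $a_r$ leaves only squared $\pm1$ products. Hence $u_{r,j}^{(t+1)} = (1-\eta\lambda) + \eta = 1$ exactly when $\lambda = 1$, closing the induction: the weight decay cancels the unit gradient increment, freezing the feature coordinate at initialization. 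This is the structural role of the choice $\lambda = 1$ in Condition~\ref{Cond}.

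For a $\bad$ neuron ($a_r = -\prod_{s}\sign(w_{r,s}^{(0)})$) the same sign computation flips to $\varepsilon_{r,j}^{(t)} = -1$ under the hypothesis $u_{r,s}^{(t)} > 0$. Moreover, since all $k$ coordinates start at $u_{r,j}^{(0)}=1$ and the right-hand side above is symmetric in $(u_{r,1}^{(t)}, \dots, u_{r,k}^{(t)})$, they remain equal for all $t$; writing $u^{(t)}$ for the common value, the recursion collapses to the scalar map $u^{(t+1)} = (1-\eta\lambda)\,u^{(t)} - \eta\,\ind\{(u^{(t)})^{k-1}\ge 0.1\}$. The upper bound $u_{r,j}^{(t+1)} \le (1-\eta\lambda)\,u_{r,j}^{(t)}$ — decay at least as fast as the noise coordinates, which by \eqref{eq:noise change} decay by exactly $1-\eta\lambda$ — is then immediate from subtracting a nonnegative term.

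The main obstacle is the strict positivity $u^{(t+1)} > 0$, which is where the $\tilde{\sign}$ threshold and the step-size bound enter. When $(u^{(t)})^{k-1} < 0.1$ the indicator vanishes and $u^{(t+1)} = (1-\eta\lambda)u^{(t)} > 0$ trivially. When it is active, $u^{(t)} \ge \tau := 0.1^{1/(k-1)} \ge 0.1$, and since $u \mapsto (1-\eta\lambda)u - \eta$ is increasing, $u^{(t+1)} \ge (1-\eta\lambda)\tau - \eta = \tau - \eta(1+\tau)$ for $\lambda = 1$, which is strictly positive as soon as $\eta < \tau/(1+\tau)$. Because $\tau/(1+\tau)$ is minimized over $k \ge 2$ at $\tau = 0.1$, yielding $1/11$, the bound $\eta \le C^{-1}$ from Condition~\ref{Cond} (with $C$ large) guarantees positivity throughout the active phase and hence for all $t$. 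This closes the induction and proves both parts; the delicate point is precisely that the threshold $\rho = 0.1\,k!$ switches off the gradient before the decaying coordinate can overshoot past zero, so the $\bad$-neuron coordinates decay monotonically while never changing sign.
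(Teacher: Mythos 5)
Your proof is correct and follows essentially the same route as the paper's (Lemmas~\ref{lm:exact gd feature} and \ref{lm:exact dud feature}): induction maintaining positivity and equality of the feature coordinates, with $\lambda=1$ exactly cancelling the unit gradient increment for \good{} neurons and the threshold $\rho=0.1k!$ together with the small-step condition $\eta<\tau/(1+\tau)$ (the paper's $\eta/(1-\eta\lambda)<(\rho/k!)^{1/(k-1)}$) preserving positivity for \bad{} neurons. The only nit is that you cite the stochastic update \eqref{eq:w update} where the population update rule of Lemma~\ref{lm:exact gradient} is the one actually being used, but the displayed recursion you work with is the correct population one.
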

According to \eqref{eq:noise change} and Lemma~\ref{lm:feature change}, after training $T=\Theta(k\eta^{-1}\lambda^{-1}\log(d))$ iterations, $\bad$ neurons and noise coordinates in $\good$ neurons diminish to a magnitude of $\Theta(1/\poly(d))$, as shown in Lemma~\ref{lm:exact order4}. In contrast, the feature coordinates of $\good$ neurons remain unchanged.
\begin{lemma}\label{lm:exact order4}
Under Condition~\ref{Cond}, for $T\geq (k+1)\eta^{-1}\lambda^{-1}\log(d)$, it holds that
\begin{align*}
    |w_{r,j}^{(T)}|&\leq d^{-(k+1)}, &&\forall r\in\Omega_{\mathrm{g}},j\in[d]\setminus[k],\\
    |w_{r,j}^{(T)}|&\leq d^{-(k+1)}, &&\forall r\in\Omega_{\mathrm{b}},j\in[d].
\end{align*}  
\end{lemma}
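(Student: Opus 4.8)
The plan is to reduce the whole statement to a single geometric-decay estimate, because the two displayed bounds concern exactly the coordinates that \eqref{eq:noise change} and Lemma~\ref{lm:feature change} already show to contract by a factor $(1-\eta\lambda)$ at every iteration. The binary initialization $\wb_{r}^{(0)}\sim\mathrm{Unif}(\{\pm1\}^{d})$ gives $|w_{r,j}^{(0)}|=1$ for every coordinate, so it suffices to show that running a $(1-\eta\lambda)$-contraction for $T\geq(k+1)\eta^{-1}\lambda^{-1}\log d$ steps drives a unit quantity below $d^{-(k+1)}$.

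First I would handle the noise coordinates $j\notin[k]$, which for \emph{both} neuron types satisfy the exact update \eqref{eq:noise change}, namely $w_{r,j}^{(t+1)}=(1-\eta\lambda)w_{r,j}^{(t)}$. Unrolling this recursion yields the closed form $w_{r,j}^{(T)}=(1-\eta\lambda)^{T}w_{r,j}^{(0)}$, so $|w_{r,j}^{(T)}|=(1-\eta\lambda)^{T}$. This simultaneously covers the case $r\in\Omega_{\mathrm{g}},\,j\in[d]\setminus[k]$ and the noise part of the $r\in\Omega_{\mathrm{b}}$ bound.

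Next, for the feature coordinates $j\in[k]$ of $\bad$ neurons, I would invoke the second half of Lemma~\ref{lm:feature change}, which gives $0<\sign(w_{r,j}^{(0)})w_{r,j}^{(t+1)}\leq(1-\eta\lambda)\sign(w_{r,j}^{(0)})w_{r,j}^{(t)}$. Since the signed quantity stays positive, it is precisely the magnitude $|w_{r,j}^{(t)}|$, so the inequality reads $|w_{r,j}^{(t+1)}|\leq(1-\eta\lambda)|w_{r,j}^{(t)}|$; iterating gives $|w_{r,j}^{(T)}|\leq(1-\eta\lambda)^{T}$ as well. Hence every coordinate under consideration obeys the uniform bound $|w_{r,j}^{(T)}|\leq(1-\eta\lambda)^{T}$.

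Finally I would close the argument with the elementary estimate $1-x\leq e^{-x}$: since $\eta\lambda=\eta<1$ under Condition~\ref{Cond}, we have $(1-\eta\lambda)^{T}\leq\exp(-\eta\lambda T)$, and substituting $T\geq(k+1)\eta^{-1}\lambda^{-1}\log d$ yields $\exp(-\eta\lambda T)\leq\exp(-(k+1)\log d)=d^{-(k+1)}$, which is exactly the claimed bound. I do not expect a genuine obstacle here, as the lemma is a bookkeeping consequence of the contraction rates already extracted in \eqref{eq:noise change} and Lemma~\ref{lm:feature change}; the only points needing care are (i) recognizing that the signed quantity in Lemma~\ref{lm:feature change} equals the coordinate magnitude, so that the contraction transfers cleanly to $|w_{r,j}^{(t)}|$, and (ii) checking that the prescribed horizon $T$ is exactly large enough for the $1-x\leq e^{-x}$ bound to reach the target $d^{-(k+1)}$.
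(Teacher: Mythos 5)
Your proposal is correct and follows essentially the same route as the paper's own proof: the paper likewise unrolls the exact contraction $w_{r,j}^{(t+1)}=(1-\eta\lambda)w_{r,j}^{(t)}$ for noise coordinates and the inequality $0<\sign(w_{r,j}^{(0)})w_{r,j}^{(t+1)}\leq(1-\eta\lambda)\sign(w_{r,j}^{(0)})w_{r,j}^{(t)}$ for \emph{bad}-neuron feature coordinates, then applies $(1-\eta\lambda)^{T}\leq\exp(-T\eta\lambda)\leq d^{-(k+1)}$ using $|w_{r,j}^{(0)}|=1$. The only difference is cosmetic (you cite the main-text statements where the paper cites their appendix counterparts), so no changes are needed.
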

This leads to the following approximation for the trained neural network:
\begin{align*}
    f(\Wb^{(T)},\xb)&=\sum_{r=1}^{m}a_r\sigma(\la\wb_{r}^{(T)},\xb\ra)\approx\sum_{r\in\Omega_{\mathrm{g}}}a_r\sigma(\la\wb_{r}^{(T)},\xb\ra)\\
    &=\sum_{r\in\Omega_{\mathrm{g}}}\bigg(\prod_{j=1}^{k}\sign(w_{r,j}^{(0)})\bigg)\sigma(\la\wb_{r}^{(T)},\xb\ra)\approx\sum_{r\in\Omega_{\mathrm{g}}}\bigg(\prod_{j=1}^{k}\sign(w_{r,j}^{(0)})\bigg)\sigma(\la\wb_{r,[1:k]}^{(T)},\xb_{[1:k]}\ra).
\end{align*}
Under Condition~\ref{Cond}, the condition on $m$ ensures a balanced distribution of neurons across different initializations, approximately $m/2^{k+1}$, given $2^{k+1}$ kinds of possible initializations. This results in the trained neural network $f(\Wb^{(T)},\xb)$ closely approximating $(m/2^{k+1})\cdot f(\Wb^{*},\xb)$. 
\subsection{Stochastic Sign Gradient Descent}\label{subsec:sgd}

Transitioning from the trajectory trained by population gradient descent, this section delves into the dynamics under sign stochastic gradient descent (Sign SGD). We commence by presenting a lemma that estimates the approximation error between the population gradient and the stochastic gradient.
\begin{lemma}\label{lm:approx}
Under Condition~\ref{Cond}, with probability at least $1-\delta$ with respect to the online data generation, the stochastic gradient approximates the population gradient well:
\begin{align*}
\bigg|\frac{\partial L_{\cD}(\Wb^{(t)})}{\partial w_{r,j}}-\frac{\partial L^{(t)}}{\partial w_{r,j}}\bigg|\leq \epsilon_{1}\cdot\|\wb_r^{(t)}\|_{2}^{k-1},&&\forall t\in[0,T], r\in[m], j\in[d],
\end{align*}
where $L^{(t)}$ is the loss of randomly sampled online batch $S_t$ and 
\begin{align*}
  \epsilon_1 = \tilde{O}(B^{-1/2} + d^{(k-3)/2}B^{-1}).
\end{align*}
\end{lemma}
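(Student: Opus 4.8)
My plan is to reduce the statement to a standard concentration bound for an i.i.d. empirical mean and then apply a Bernstein-type inequality followed by a union bound. The first step is to write both gradients explicitly. Since $\sigma(z)=z^k$ gives $\sigma'(z)=kz^{k-1}$, the per-sample gradient appearing in \eqref{eq:w update} is $a_r\,y\,\sigma'(\la\wb_r^{(t)},\xb\ra)\,x_j = k\,a_r\,y\,x_j\,\la\wb_r^{(t)},\xb\ra^{k-1}$, so the correlation loss yields $\partial L_{\cD}(\Wb^{(t)})/\partial w_{r,j} = -a_r\,\EE_{(\xb,y)\sim\cD}[g_{r,j}]$ and $\partial L^{(t)}/\partial w_{r,j} = -a_r\,B^{-1}\sum_{(\xb,y)\in S_t} g_{r,j}$, where $g_{r,j}(\xb,y):=k\,y\,x_j\,\la\wb_r^{(t)},\xb\ra^{k-1}$. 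Because $|a_r|=1$, the quantity to control is precisely the deviation of the batch average of $g_{r,j}$ from its expectation, and I need this to hold simultaneously for every triple $(t,r,j)$.

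Second, I would exploit the online structure to handle adaptivity. Each batch $S_t$ consists of fresh i.i.d. samples that are independent of $\Wb^{(t)}$, since $\Wb^{(t)}$ is measurable with respect to $S_0,\dots,S_{t-1}$. I can therefore condition on $\Wb^{(t)}$ and treat $\wb_r^{(t)}$ as a fixed vector; the points of $S_t$ are then i.i.d. copies of $(\xb,y)\sim\cD$, and $B^{-1}\sum_{S_t} g_{r,j}$ is an honest i.i.d. average amenable to concentration. The final statement then follows from a union bound over all $(T+1)\,m\,d$ triples, each at confidence level $\delta/((T+1)md)$; this is exactly the origin of the $\log(mdT/\delta)$-type factors that reappear in the batch-size requirement of Condition~\ref{Cond}.

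Third, for a fixed $(t,r,j)$ I would estimate the two ingredients that Bernstein's inequality requires. For the variance, $y^2=x_j^2=1$ gives $\EE[g_{r,j}^2]=k^2\,\EE[\la\wb_r^{(t)},\xb\ra^{2(k-1)}]$, and since $\la\wb_r^{(t)},\xb\ra$ is a Rademacher sum with variance $\|\wb_r^{(t)}\|_2^2$, Khintchine/hypercontractivity moment bounds give $\EE[\la\wb_r^{(t)},\xb\ra^{2(k-1)}]=\tilde O(\|\wb_r^{(t)}\|_2^{2(k-1)})$ up to $k$-dependent factors; the resulting variance proxy of order $\|\wb_r^{(t)}\|_2^{2(k-1)}$ produces the $B^{-1/2}$ term. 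For the additive large-deviation correction I would bound the magnitude of $g_{r,j}$ using $|\la\wb_r^{(t)},\xb\ra|\le\sqrt{d}\,\|\wb_r^{(t)}\|_2$ from Cauchy–Schwarz, and, balancing this worst-case magnitude against the variance rather than using it alone, feed it into the Bernstein term of order $M\log(1/\delta')/B$ to obtain the second term $d^{(k-3)/2}B^{-1}$. Collecting both and factoring out the common $\|\wb_r^{(t)}\|_2^{k-1}$ scaling gives $\epsilon_1=\tilde O(B^{-1/2}+d^{(k-3)/2}B^{-1})$, as claimed.

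Fourth, the main obstacle. The union bound and the conditioning on $\Wb^{(t)}$ are routine once the online independence is observed, and Bernstein is a black box; the delicate part is the high-power moment and tail analysis of the Rademacher sum $\la\wb_r^{(t)},\xb\ra$, carried out uniformly over the data-dependent (but $S_t$-independent) weights $\wb_r^{(t)}$, so that the two terms come out at the stated exponents. In particular, arriving at $d^{(k-3)/2}$ rather than the cruder $d^{(k-1)/2}$ obtained by plugging the a priori magnitude bound directly into Bernstein requires a careful interplay between the variance estimate and the magnitude control, and I expect the bulk of the technical work to lie exactly in this moment/tail bookkeeping for powers of $\la\wb_r^{(t)},\xb\ra$.
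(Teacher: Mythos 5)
Your skeleton coincides with the paper's: write both gradients as the expectation and the batch average of $g_{r,j}(\xb,y)=k\,a_r y x_j\la\wb_r^{(t)},\xb\ra^{k-1}$, use the freshness of the online batch $S_t$ to condition on $\Wb^{(t)}$ and reduce to an i.i.d. average, concentrate for a fixed triple, and union bound over all $(t,r,j)$ (your explicit remark that $\Wb^{(t)}$ is measurable with respect to past batches is a point the paper leaves implicit). Where you diverge is the concentration tool, and that is where the one real issue sits. The paper does \emph{not} use Bernstein: it truncates, setting $h_{r,j}=g_{r,j}\cdot\ind[|\la\wb_r^{(t)},\xb\ra|\le\gamma]$ with $\gamma=\sqrt{2\log(16B/\delta)}\,\|\wb_r^{(t)}\|_2$, applies plain Hoeffding to the bounded variable $|h_{r,j}|\le k\gamma^{k-1}=\tilde O(\|\wb_r^{(t)}\|_2^{k-1})$ (this is the source of the $B^{-1/2}$ term and of the $\log^{(k-1)/2}$ factor in Condition~\ref{Cond}), checks that truncation is inactive on the entire batch with probability $1-2B\exp(-\gamma^2/2\|\wb_r^{(t)}\|_2^2)$, and confines the Cauchy--Schwarz worst case $kd^{(k-1)/2}\|\wb_r^{(t)}\|_2^{k-1}$ to the truncation bias $|\EE h_{r,j}-\EE g_{r,j}|$, where it is multiplied by the exponentially small tail probability $\delta/(8B)$. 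The exponent $d^{(k-3)/2}$ is then an artifact: it is $d^{(k-1)/2}$ divided by the extra $d$ coming from the union bound over $j\in[d]$, not the output of any variance--magnitude balancing.

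Consequently, the step you flag as delicate is genuinely not deliverable by the tool you name: vanilla Bernstein with variance proxy $O_k(\|\wb_r^{(t)}\|_2^{2(k-1)})$ and sup-norm $M=kd^{(k-1)/2}\|\wb_r^{(t)}\|_2^{k-1}$ gives an additive term of order $M\log(1/\delta')/B$, i.e.\ $d^{(k-1)/2}B^{-1}\cdot\|\wb_r^{(t)}\|_2^{k-1}$, a full factor of $d$ worse than claimed, and no rearrangement of the standard Bernstein bound removes it. Two things rescue you. First, under Condition~\ref{Cond} one has $B\gtrsim d^{k-1}\polylog$, so $d^{(k-1)/2}B^{-1}\le B^{-1/2}$ and your weaker formula still implies the stated $\tilde O(B^{-1/2}+d^{(k-3)/2}B^{-1})$ in the regime where the lemma is invoked; the downstream batch-size requirement is unchanged since $\epsilon_1\lesssim d^{-(k-1)/2}$ forces $B\gtrsim d^{k-1}$ from either term. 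Second, if you want the second term at (or below) the stated exponent for all $B$, you must either adopt the paper's truncation device or upgrade to a sub-Weibull Bernstein inequality that uses the full tail decay of the degree-$(k-1)$ Rademacher chaos $\la\wb_r^{(t)},\xb\ra^{k-1}$ rather than only its variance and sup-norm; the latter would in fact eliminate the polynomial-in-$d$ factor from the second term entirely. As written, your proposal proves a slightly weaker lemma that happens to suffice; make one of these two repairs explicit and the argument is complete.
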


The choice of batch size $B=O(d^{k-1})$ in our algorithm is crucial for gradient concentration. According to Lemma~\ref{lm:approx}, the gap between stochastic gradient and population gradient is bounded by $\epsilon_1\cdot\|\wb_r\|_2^{k-1}$. At initialization, the absolute value of the population gradient on the signal coordinate is approximately $\tilde{O}(d^{-(k-1)/2})\cdot\|\wb_r\|_2^{k-1}$. To ensure the stochastic sign gradient matches the population sign gradient, the approximation error $\epsilon_1$ must be smaller than this value, which requires $\epsilon_1 = \tilde{O}(d^{-(k-1)/2})$. Solving $B^{-1/2} + d^{(k-3)/2}B^{-1} = d^{-(k-1)/2}$ yields our sufficient batch size.

Building upon this approximation guarantee from Lemma~\ref{lm:approx}, and considering the established order of $\rho,\epsilon_1$ and $\|\wb_r^{(t)}\|_{2}$, we arrive at an important corollary.

\begin{corollary}
Under Condition~\ref{Cond}, given the same initialization, with probability at least $1-\delta$, the stochastic sign gradient is the same as the population sign gradient:
\begin{align*}
    \Tilde{\sign}\bigg(\frac{\partial L_{\cD}(\Wb^{(t)})}{\partial w_{r,j}}\bigg)=\Tilde{\sign}\bigg(\frac{\partial L^{(t)}}{\partial w_{r,j}}\bigg),&&\forall t\in[0,T], r\in[m], j\in[d].
\end{align*}
\end{corollary}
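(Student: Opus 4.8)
The plan is to prove the corollary by coupling the sign-SGD trajectory to the population sign-GD trajectory via a joint induction on $t$, showing at each step that the two modified signs agree coordinate-wise. Since the update \eqref{eq:w update} depends on the stochastic gradient only through $\tilde{\sign}$, matching signs force the iterates to coincide, i.e. $\Wb^{(t)}$ produced by SGD equals $\Wb^{(t)}$ produced by population GD, so that the population-GD structure established in Lemmas~\ref{lm:feature change} and \ref{lm:exact order4} transfers verbatim. First I would condition on the event of Lemma~\ref{lm:approx}, which holds with probability at least $1-\delta$ uniformly over all $t\in[0,T]$, $r\in[m]$, $j\in[d]$, giving the gap bound $|\partial_{w_{r,j}}L_{\cD}(\Wb^{(t)})-\partial_{w_{r,j}}L^{(t)}|\le \epsilon_1\|\wb_r^{(t)}\|_2^{k-1}$ throughout.

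Second I would establish a uniform norm bound $\|\wb_r^{(t)}\|_2=O(\sqrt{d})$ for all $t\le T$: at initialization $\|\wb_r^{(0)}\|_2=\sqrt{d}$, and each step contracts by $(1-\eta\lambda)$ while adding a sign increment of $\ell_2$-norm at most $\eta\sqrt{d}$, so summing the geometric series over the $T=\tilde{O}(1)$ iterations keeps the norm at $O(\sqrt{d})$. Combined with the batch-size condition, which is calibrated precisely so that $\epsilon_1=\tilde{O}(B^{-1/2}+d^{(k-3)/2}B^{-1})\le c\,k!\,d^{-(k-1)/2}$ for a small constant $c$ (tunable through the large constant $C$ in Condition~\ref{Cond}), this yields the effective bound $|\partial_{w_{r,j}}L_{\cD}-\partial_{w_{r,j}}L^{(t)}|\le c'\,k!$ with $c'$ as small as desired, in particular $c'<0.05$.

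Third comes the coordinate-wise case analysis, whose goal is to show the population gradient always sits a full gap away from both thresholds $\pm\rho=\pm0.1k!$. For a noise coordinate $j\notin[k]$, a parity/monomial expansion of $\EE[y\,\sigma'(\la\wb_r,\xb\ra)\,a_r x_j]$ shows the population gradient is exactly $0$ (this is exactly \eqref{eq:noise change}), so the stochastic gradient lies in $(-\rho,\rho)$ and both modified signs equal $0$ provided $c'<0.1$. For a feature coordinate of a \good\ neuron, Lemma~\ref{lm:feature change} freezes $w_{r,j}^{(t)}$ at its $\pm1$ initialization, so the population gradient equals $\pm k!$ exactly; since $(1-c')k!>\rho$, the stochastic gradient stays on the same side of $\pm\rho$ and the signs match.

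The hard part will be the feature coordinates of \bad\ neurons, whose population gradient has magnitude $k!\prod_{i\in[k],\,i\ne j}|w_{r,i}^{(t)}|$ that, by Lemma~\ref{lm:feature change}, decreases monotonically and must therefore cross the threshold $\rho$ somewhere in $[0,T]$; near that crossing the population gradient could a priori lie within a gap of $\pm\rho$, letting the stochastic perturbation flip $\tilde{\sign}$ and break the coupling. To handle this I would exploit the explicit population dynamics of the adjusted magnitude $u^{(t)}:=\sign(w_{r,j}^{(0)})w_{r,j}^{(t)}$, namely $u^{(t+1)}=(1-\eta\lambda)u^{(t)}-\eta$ while its gradient exceeds $\rho$ and $u^{(t+1)}=(1-\eta\lambda)u^{(t)}$ thereafter, to argue that the attained gradient values avoid the forbidden window $(\rho-c'k!,\,\rho+c'k!)$: whenever the population gradient is at least $\rho$ it in fact exceeds $\rho+c'k!$, and once it has dropped below $\rho$ it is already below $\rho-c'k!$, using the specific choice $\rho=0.1k!$ and the smallness of $c'$. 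Establishing this window-avoidance is the delicate step and may require tracking the decay rate through the critical region; once it is in place, the inductive hypothesis $\Wb^{(t)}_{\mathrm{SGD}}=\Wb^{(t)}_{\mathrm{pop}}$ propagates to step $t+1$ and the proof closes.
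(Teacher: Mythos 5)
Your strategy---condition on the uniform gradient--concentration event, couple the SGD and population--GD trajectories by induction on $t$, and split into noise coordinates, \good-neuron feature coordinates, and \bad-neuron feature coordinates---is the same route the paper takes: the corollary is really proved there as hypotheses (H$_2$)--(H$_3$) inside the induction of Lemma~\ref{lm:sign sgd}. Your first two cases match the paper's argument exactly (population gradient identically $0$ on noise coordinates, exactly $\pm k!$ on \good-neuron feature coordinates whose weights stay frozen at $\pm1$, and a perturbation of size $\epsilon_1\|\wb_r^{(t)}\|_2^{k-1}\le\epsilon_1 d^{(k-1)/2}\le k!/\sqrt{C}\ll\rho$ cannot cross either threshold), and your norm bound via the one-step contraction $\|\wb_r^{(t+1)}\|_2\le(1-\eta\lambda)\|\wb_r^{(t)}\|_2+\eta\sqrt{d}$ is if anything cleaner than the paper's monotonicity argument.

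The genuine gap is the one you flag yourself. For a \bad\ neuron the population gradient on a feature coordinate has magnitude $k!\,(u^{(t)})^{k-1}$ with $u^{(t)}:=b_jw_{r,j}^{(t)}$ decaying through the threshold $\rho=0.1k!$, and your proposed ``window avoidance'' claim---that the trajectory never lands in $(\rho-\epsilon_1 d^{(k-1)/2},\,\rho+\epsilon_1 d^{(k-1)/2})$---cannot be extracted from Condition~\ref{Cond}: nothing there couples the step size of $u^{(t)}$ near the crossing (of order $\eta$) to the window width (of order $k!/\sqrt{C}$), so for generic admissible $\eta$ and $B$ the discrete trajectory can land inside the window and the two modified signs genuinely disagree there. (The paper's own justification of (H$_2$) for $t>0$ is a one-line pointer back to the $t=0$ computation, which uses $|w_{r,i}^{(0)}|=1$, so it elides exactly this case.) The workable fix is not to chase exact sign agreement but to \emph{weaken} the claim for \bad-neuron feature coordinates to the invariant $0<b_jw_{r,j}^{(t+1)}\le(1-\eta\lambda)\,b_jw_{r,j}^{(t)}$, which is all the downstream lemmas use. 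That invariant is robust to the window: the stochastic quantity can only produce $\Tilde{\sign}$-values in $\{0,-b_j\}$, since an error of size at most $k!/\sqrt{C}<\rho$ cannot push a population value of sign $-b_j$ past $+\rho$ in the opposite direction, and both resulting updates, $(1-\eta\lambda)u^{(t)}$ and $(1-\eta\lambda)u^{(t)}-\eta$, satisfy the invariant. As literally stated, though, the corollary's claim of exact sign agreement on every coordinate is not delivered by your argument (nor, strictly, by the paper's).
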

This corollary suggests that, under identical initialization, the trajectory of a model trained using population gradient descent will, with high probability, align with the trajectory of a model trained using stochastic gradient descent.

\section{Conclusion and Future Work}
In our study, we have conducted a detailed analysis of the $k$-parity problem, investigating how sign Stochastic Gradient Descent (sign SGD) can effectively learn intricate features from binary datasets. Our findings reveal that sign SGD, when employed in two-layer fully-connected neural networks solving $k$-sparse parity problem, is capable of achieving a sample complexity $\tilde{O}(d^{k-1})$. Remarkably, this result matches the theoretical expectations set by the Statistical Query (SQ) model, underscoring the efficiency and adaptability of sign SGD. 

Looking ahead, an intriguing direction for future research is to explore the possibility of learning $k$-parity using SGD with even smaller queries that surpass the SQ lower bond, and understand whether more standard neural network architectures allow such improvement. This potential advancement could pave the way for developing more efficient algorithms capable of tackling complex problems with weaker data requirements. Another promising direction is to extend our results to non-isotropic data settings \citep{nitanda2024improved}, where sign gradient descent with momentum could be effective in handling the transformed feature space.

\section*{Acknowledgements}
We thank the anonymous reviewers for their helpful comments. YK, ZC, and QG are supported in part by the National Science Foundation CAREER Award 1906169, IIS-2008981, and the Sloan Research Fellowship. SK acknowledges: this work has been made possible in part by a gift from the Chan Zuckerberg Initiative Foundation to establish the Kempner Institute
for the Study of Natural and Artificial Intelligence; support from the Office of Naval Research under award N00014-22-1-2377, and the
National Science Foundation Grant under award \#IIS 2229881.

\bibliography{deeplearningreference}
\bibliographystyle{ims}

\newpage
\appendix

\section*{Limitations}
While our study provides valuable insights into the effectiveness of SGD in learning the $k$-parity problem, there are some limitations:
\begin{itemize}[leftmargin=*]
\item Our study focuses on sign gradient descent (Sign SGD). This approach normalizes the gradient, ensures uniform neuron updates toward identifying parity, and effectively nullifies noisy coordinate gradients. However, it's worth noting that data may be presented in non-standard or unknown coordinate systems, which could limit Sign SGD's effectiveness. To address this limitation, future work could explore alternatives such as normalized gradient descent with an adaptive learning rate or incorporating momentum into Sign SGD. 

\item Our analysis is primarily based on polynomial activation functions. While effective for the standard k-parity problem, extending our approach to other activation functions like sigmoid or ReLU presents challenges. This extension could potentially be achieved through polynomial function approximation. However, the main challenge lies in identifying an appropriate functional decomposition and accurately characterizing the approximation error during training.

\end{itemize}
By acknowledging these limitations, we aim to provide a transparent assessment of our work's scope and potential areas for future exploration.

\section{Experiments}\label{sec:exp}
In this section, we present a series of experiments designed to empirically validate the theoretical results established in our main theorem. The primary objectives of these experiments are to (1) assess the test accuracy of the trained neural network, thereby corroborating the theorem's results, and (2) verify the key lemmas concerning the behavior of $\good$ and $\bad$  neurons. Specifically, we aim to demonstrate that for $\good$ neurons, the feature coordinates remain largely unchanged from initialization while the noise coordinates decay exponentially. Conversely, for $\bad$  neurons, we expect both feature and noise coordinates to exhibit exponential decay.

\paragraph{Model.} We generated synthetic $k$-parity data based on Definition~\ref{def:data}. Each data point $(\xb,y)$ with $\xb\in\RR^{d}$ and $y\in\{\pm1\}$ is produced from distribution $\cD_{A}$, where $A$ specifically is taken as $[k]$. We utilized two-layer fully-connected neural networks with $m$ number of neurons. The network employs a polynomial activation function $\sigma(z)=z^{k}$. The first layer weights for the $r$-th neuron, $\wb_{r}^{(0)}\in\RR^{d}$, were initialized following a binary scheme, where $\wb_{r}^{(0)}\sim\text{Unif}(\{\pm1\}^{d})$. The second layer weights for the $r$-th neuron, $a_r$, is randomly initialized as $1$ or $-1$ with equal probability.

\paragraph{Computation Resources} The experiments are conducted on an A6000 server. As these are synthetic experiments, the requirement for computational resources is minimal.

\paragraph{Training.} Our model was trained to minimize the empirical correlation loss function, incorporating $L_2$ regularization with a regularization parameter set at $\lambda=1$. The training process utilized online stochastic sign gradient descent (Sign SGD) with a fixed step size $\eta$ and a predetermined batch size $B$. The principal metric for assessment was test accuracy. Our experiments were conducted considering parity $k \in \{2,3,4\}$.

For $k=2$, the model configuration included a data dimension of $d=8$, a hidden layer width of $m=12$, a total of $T=25$ epochs, a learning rate $\eta=0.1$, an online batch size of $B=64$, and a threshold for $\Tilde{\sign}$ set at $\rho=0.3$. In the case of $k=3$, we employed a data dimension of $d=16$, increased the hidden layer width to $m=48$, extended the training to $T=50$ epochs, adjusted the learning rate to $\eta=0.05$, used an online batch size of $B=256$, and set the threshold for $\Tilde{\sign}$ at $\rho=1$. For $k=4$, the model was further scaled up with a data dimension of $d=20$, a hidden layer width of $m=128$, a training epoch of $T=100$, a smaller learning rate of $\eta=0.02$, an online batch size of $B=2048$, and a threshold for $\Tilde{\sign}$ established at $\rho=3$.

\paragraph{Experimental Results.} The evaluation of test accuracy across various configurations is presented in Table~\ref{tab:test_acc}. Using neural networks with merely $2^{\Theta(k)}$ neurons, we observed high test accuracy for $k$-parity problem with $k\in\{2,3,4\}$, confirming the results of our main theorem (Theorem~\ref{main:theorem}). These empirical results validate the efficacy of our studied model architecture \eqref{eq:MLP} and training methodology in tackling the $k$-sparse parity problem.

To further validate our theoretical findings, we examined the change of feature and noise coordinates for the first neuron $\wb_{1}^{(t)}$ over multiple iterations, focusing specifically on the setting $k\in\{2,3,4\}$. Figures~\ref{fig:good2}, \ref{fig:bad2}, \ref{fig:good3}, \ref{fig:bad3}, \ref{fig:good4}, and \ref{fig:bad4} visually represent these trajectories. Our empirical findings reveal a consistent pattern: the feature coordinates ($w_{1,1}^{(t)}, \ldots, w_{1,k}^{(t)}$) of the neurons identified as $\good$ (with initialization satisfying $a_{r} = \prod_{j=1}^{k} w_{r,j}^{(0)}$) exhibit relative stability, while their noise coordinates ($w_{1,k+1}^{(t)}, \ldots$) show a decreasing trend over time. In contrast, the trajectories for neurons classified as $\bad$ (with initialization satisfying $a_{r} = -\prod_{j=1}^{k} w_{r,j}^{(0)}$) indicate a general reduction in all coordinate values.

These empirical observations support our theoretical analyses, as outlined in Lemma~\ref{lm:feature change} and Lemma~\ref{lm:exact order4}, showcasing the consistency between the theoretical foundations of our model and its practical performance. The disparities between $\text{good}$ and $\text{bad}$ neurons, as evidenced by their feature and noise coordinate behaviors, underscore the nuanced dynamics inherent in the learning process of $k$-parity problems.

\begin{table*}[ht!]
\centering
\begin{tabular}{cccc}
\toprule
$\boldsymbol{k}$ & $2$ & $3$ & $4$ \\
\midrule
\textbf{Test Accuracy (\%)} & $99.69\% \pm 0.29\%$ & $97.75\% \pm 1.37\%$ & $96.89\% \pm 0.44\%$ \\
\bottomrule
\end{tabular}
\caption{Test accuracy for solving $k$-sparse parity problem with $k \in \{2, 3, 4\}$, averaged over 10 runs. }
\label{tab:test_acc}
\end{table*}


\begin{figure}[ht!]
\centering
\begin{minipage}{.48\textwidth}
  \centering
  \includegraphics[width=1\linewidth]{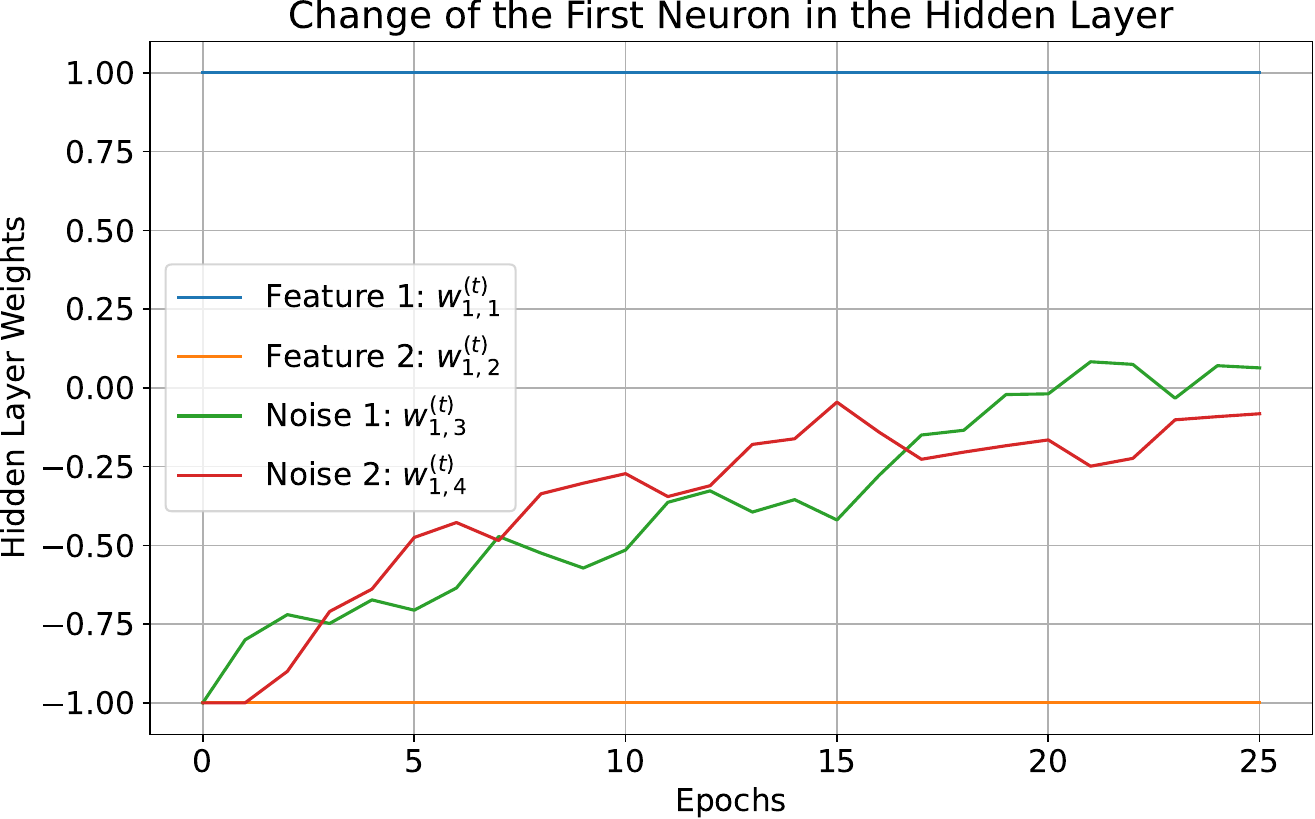}
  \captionof{figure}{Illustration of a $2$-parity $\good$ neuron with initial weights $w_{1,1}^{(0)} = 1$, $w_{1,2}^{(0)} = -1$, and $a_{1} = -1$.}
  \label{fig:good2}
\end{minipage}%
\hspace{0.04\textwidth}%
\begin{minipage}{.48\textwidth}
  \centering
  \includegraphics[width=1\linewidth]{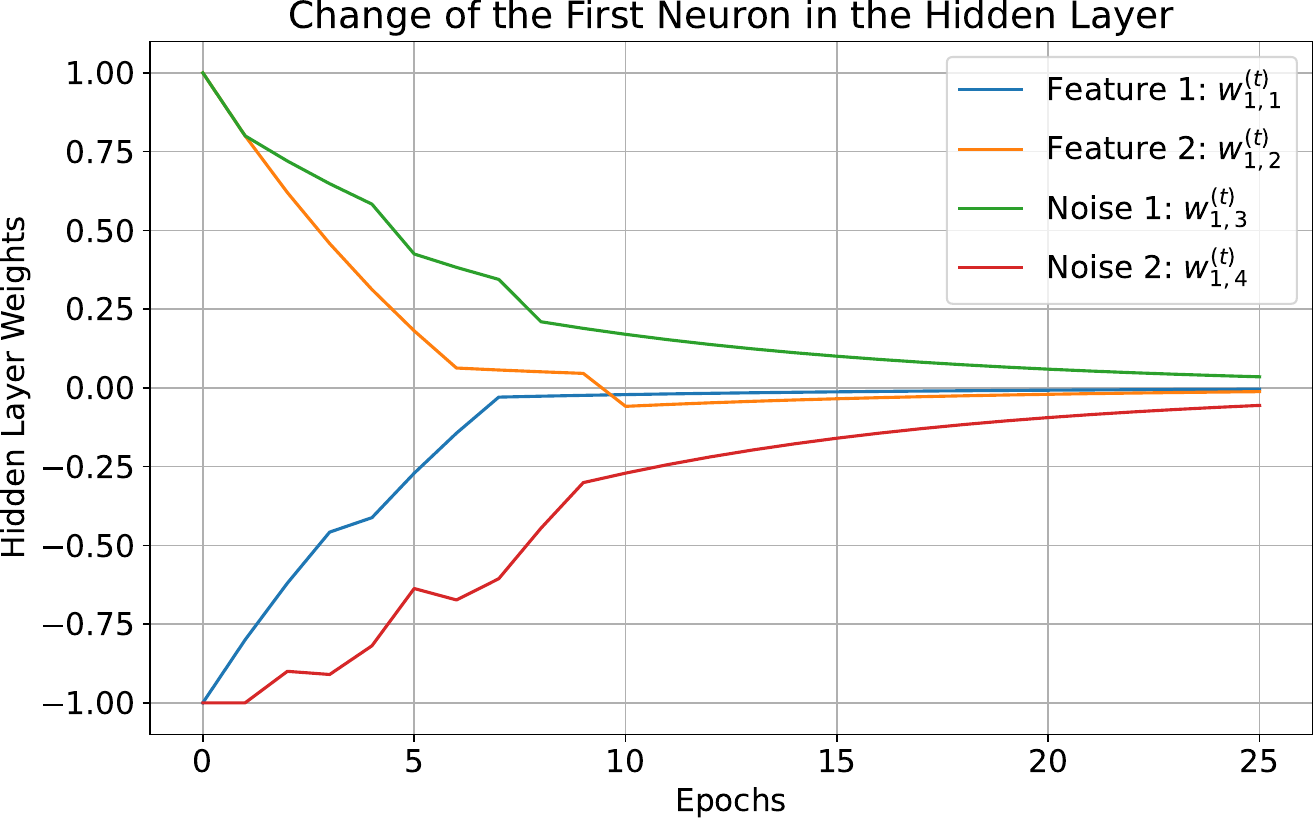}
  \captionof{figure}{Illustration of a $2$-parity $\bad$ neuron with initial weights $w_{1,1}^{(0)} = -1$, $w_{1,2}^{(0)} = 1$, and $a_{1} = 1$.}
  \label{fig:bad2}
\end{minipage}
\end{figure}

\begin{figure}[ht!]
\centering
\begin{minipage}{.48\textwidth}
  \centering
  \includegraphics[width=1\linewidth]{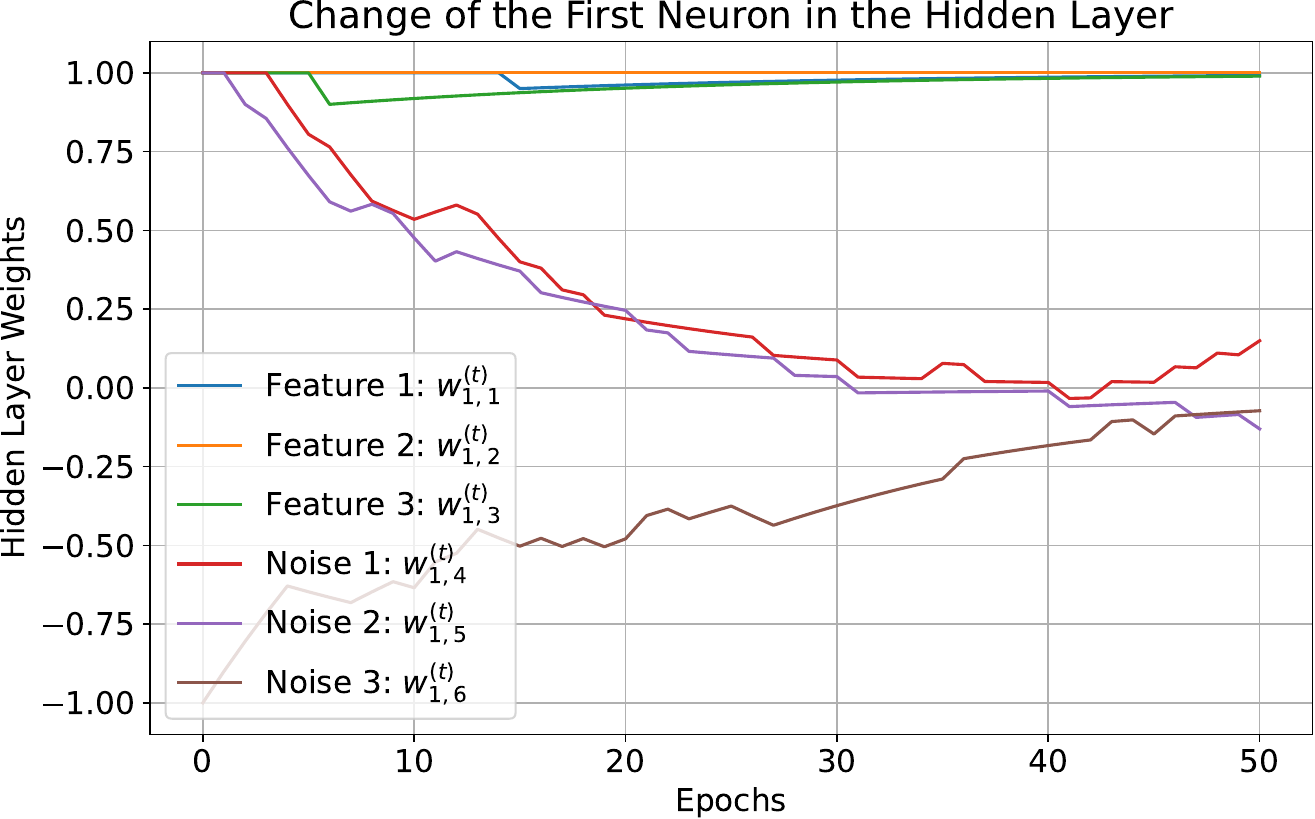}
  \captionof{figure}{Illustration of a $3$-parity $\good$ neuron with initial weights $w_{1,1}^{(0)} = 1$, $w_{1,2}^{(0)} = 1$, $w_{1,3}^{(0)} = 1$, and $a_{1} = 1$.}
  \label{fig:good3}
\end{minipage}%
\hspace{0.04\textwidth}%
\begin{minipage}{.48\textwidth}
  \centering
  \includegraphics[width=1\linewidth]{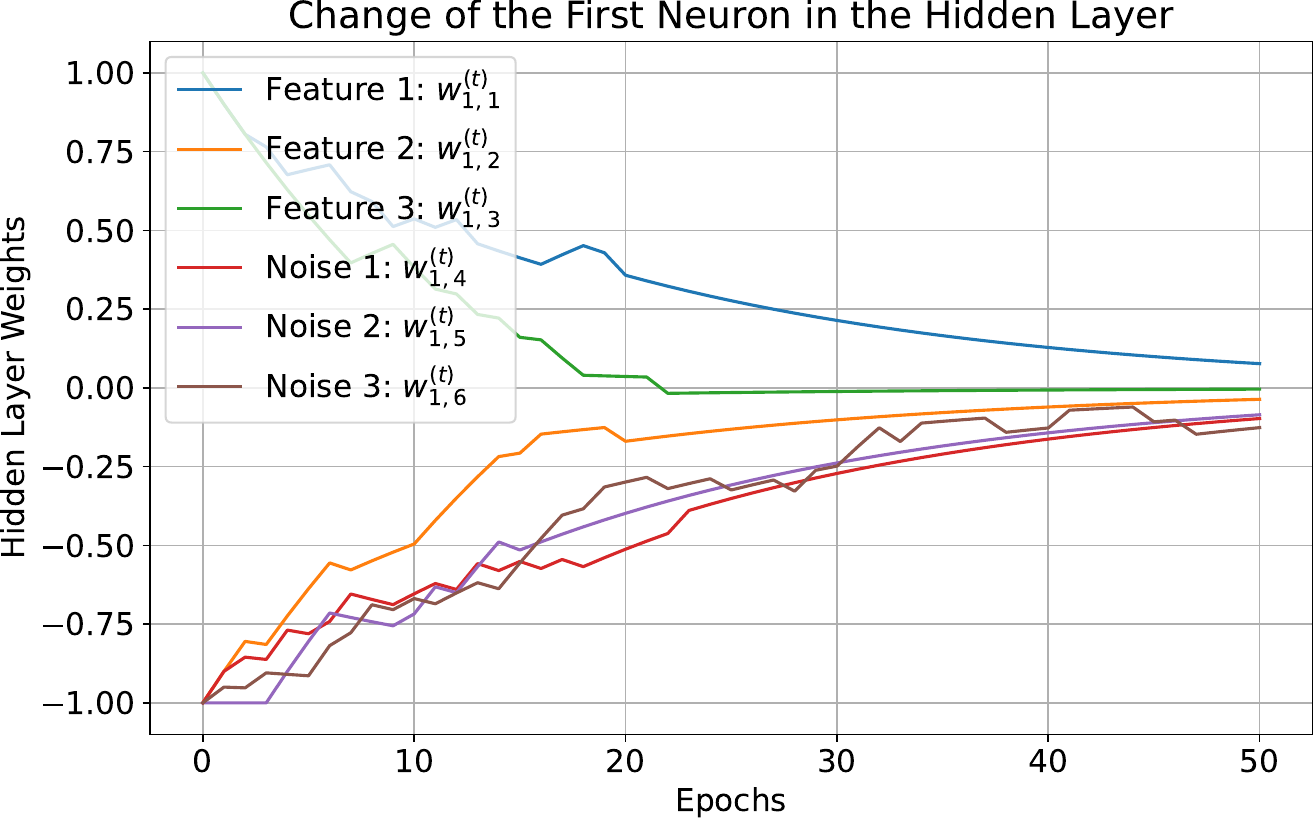}
  \captionof{figure}{Illustration of a $3$-parity $\bad$ neuron with initial weights $w_{1,1}^{(0)} = 1$, $w_{1,2}^{(0)} = -1$, $w_{1,3}^{(0)} = 1$, and $a_{1} = 1$.}
  \label{fig:bad3}
\end{minipage}
\end{figure}

\begin{figure}[ht!]
\centering
\begin{minipage}{.48\textwidth}
  \centering
  \includegraphics[width=1\linewidth]{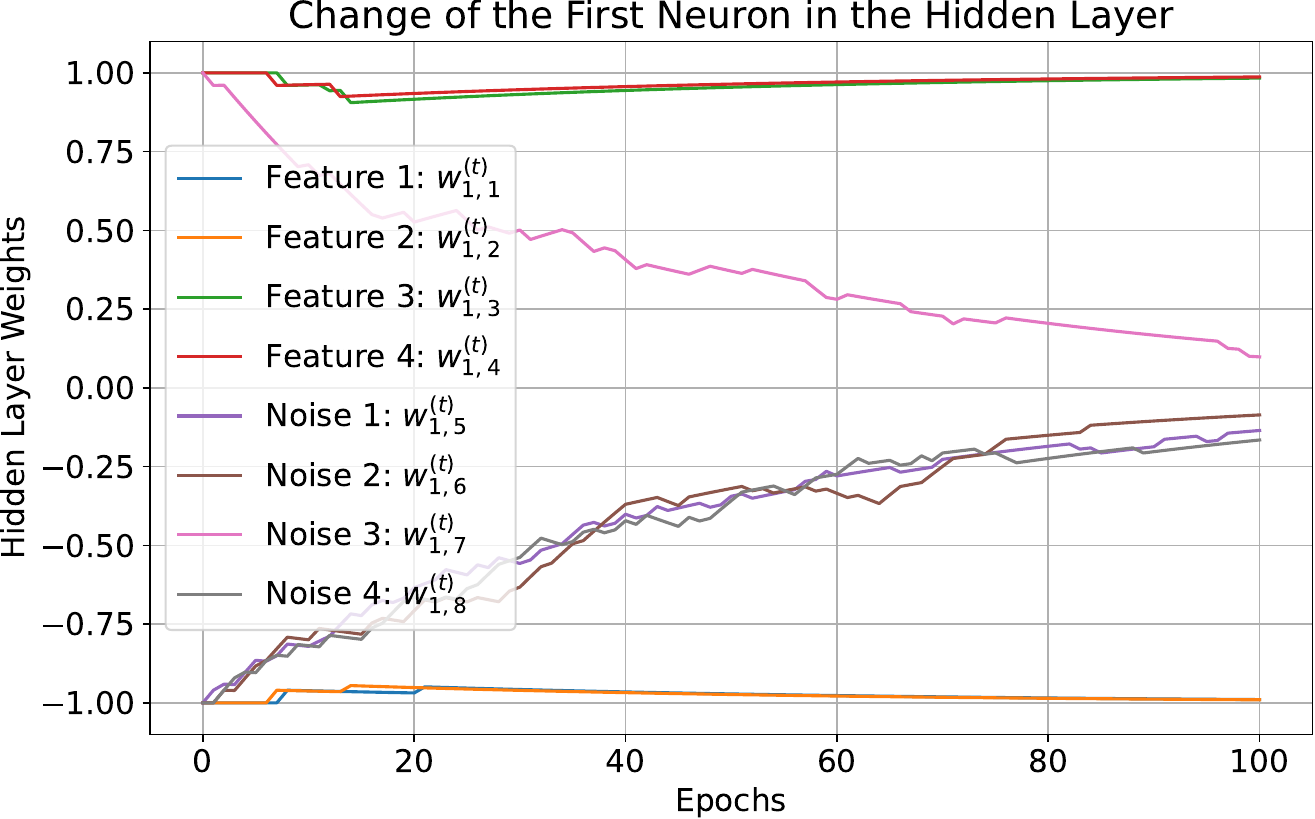}
  \captionof{figure}{Illustration of a $4$-parity $\good$ neuron with initial weights $w_{1,1}^{(0)} = -1$, $w_{1,2}^{(0)} = -1$, $w_{1,3}^{(0)} = 1$, $w_{1,4}^{(0)} = 1$, and $a_{1} = 1$.}
  \label{fig:good4}
\end{minipage}%
\hspace{0.04\textwidth}%
\begin{minipage}{.48\textwidth}
  \centering
  \includegraphics[width=1\linewidth]{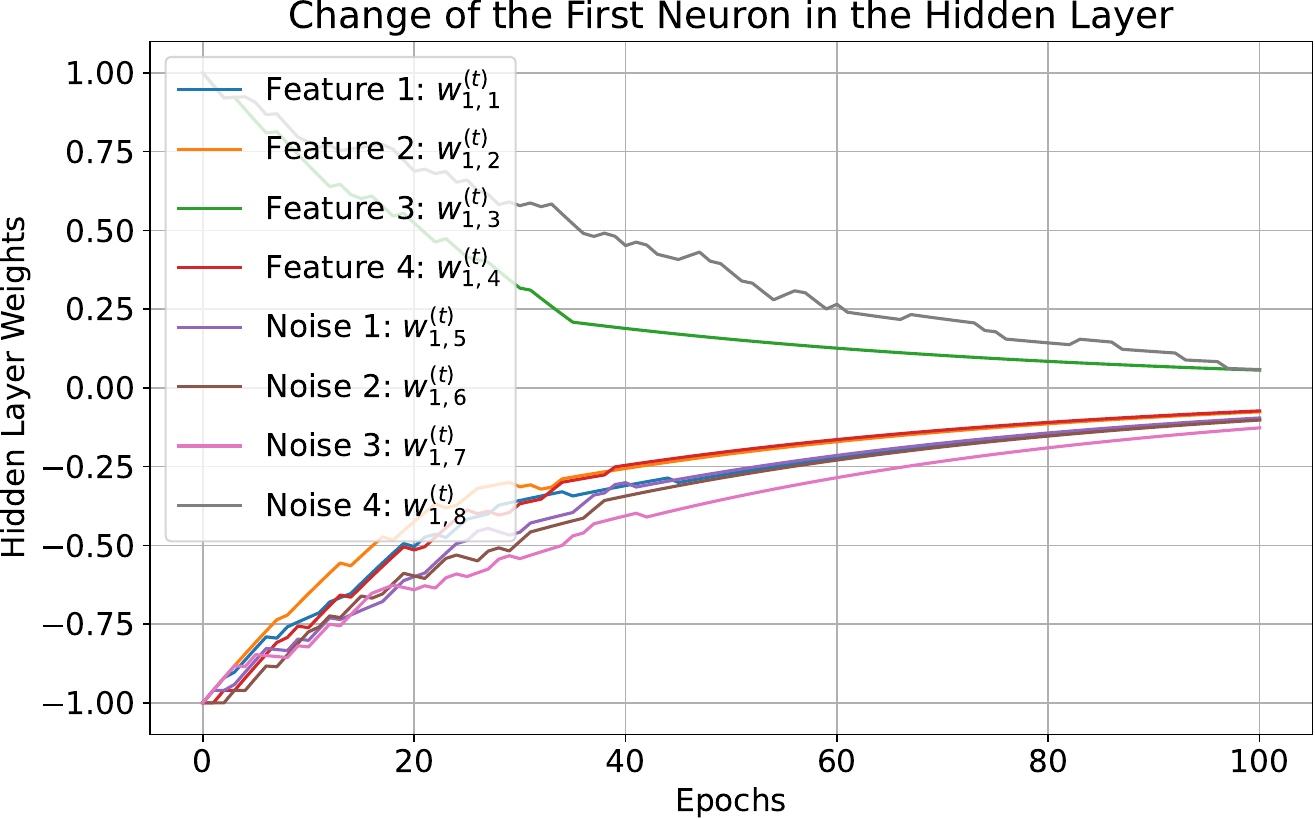}
  \captionof{figure}{Illustration of a $4$-parity $\bad$ neuron with initial weights $w_{1,1}^{(0)} = -1$, $w_{1,2}^{(0)} = -1$, $w_{1,3}^{(0)} = 1$, $w_{1,4}^{(0)} = -1$, and $a_{1} = -1$.}
  \label{fig:bad4}
\end{minipage}
\end{figure}


\section{Preliminary Lemmas}
During the initialization phase of a neural network, neurons can be categorized into $2^{k}$ distinct groups. This classification is based on whether each feature coordinate is positive or negative. We define these groups as follows:
\begin{align*}
    \Omega_{b_1b_2\cdots b_k} &= \{r \in [m] \,|\, \sign(w_{r,j}^{(0)})=b_j, \forall j \in [k] \},
\end{align*}
where $b_1,b_2,\cdots,b_k\in\{\pm1\}$. To illustrate with specific examples, consider the following special cases:
\begin{align*}
    \Omega_{11\cdots1} &= \{r \in [m] \,|\, w_{r,j}^{(0)} > 0, \forall j \in [k]\},\\
    \Omega_{-1-1\cdots-1} &= \{r \in [m] \,|\, w_{r,j}^{(0)} < 0, \forall j \in [k]\}.
\end{align*}
In these cases, $\Omega_{11\cdots1}$ represents the group of neurons where all initial weights are positive across the $k$ features, while $\Omega_{-1-1\cdots-1}$ consists of neurons with all initial weights being negative. Within each group of neurons, we can further subdivide them into two subgroups based on the value of $a_r$. Let's denote
\begin{align*}
    \Omega_{\mathrm{g}}=\bigg\{r\in[m]\,\bigg|\,a_r=\prod_{j=1}^{k}\sign(w_{r,j}^{(0)})\bigg\}, \Omega_{\mathrm{b}}=\bigg\{r\in[m]\,\bigg|\,a_r=-\prod_{j=1}^{k}\sign(w_{r,j}^{(0)})\bigg\},
\end{align*}
where $\Omega_g$ denotes the $\good$ neuron set and $\Omega_b$ denotes the $\bad$ neuron set. We will later demonstrate in the proof and experiments that neurons in $\Omega_{\mathrm{g}}$ and neurons in $\Omega_{\mathrm{b}}$ exhibit distinct behaviors during the training process. Specifically, for neurons in $\Omega_{\mathrm{g}}$, the feature coordinates will remain largely unchanged from their initial values throughout training, while the noise coordinates will decrease to a lower order compared to the feature coordinates. On the other hand, for neurons in $\Omega_{\mathrm{b}}$, both feature coordinates and noise coordinates will decrease to a lower order compared to their initial values. 

In order to establish the test error result, it is essential to impose a condition on the initialization. Specifically, the number of neurons for each type of initialization should be approximately equal.
\begin{lemma}\label{lm:set size}
With probability at least $1-\delta$ for the randomness in the neural network's initialization, the sizes of the sets $\Omega_{\mathrm{g}}$ and $\Omega_{\mathrm{b}}$ are bounded as follows:
\begin{align*}
    |\Omega_{\mathrm{g}}|,|\Omega_{\mathrm{b}}|\in[(1-\alpha)m/2,(1+\alpha)m/2].
\end{align*}
Besides, the intersections of $\Omega_{b_1b_2\cdots b_k}$ with both $\Omega_{\mathrm{g}}$ and $\Omega_{\mathrm{b}}$ are also bounded within a specified range:
\begin{equation*}
    |\Omega_{b_1b_2\cdots b_k}\cap\Omega_{\mathrm{g}}|,|\Omega_{b_1b_2\cdots b_k}\cap\Omega_{\mathrm{b}}|\in[(1-\alpha)m/2^{k+1},(1+\alpha)m/2^{k+1}],
\end{equation*}
where
\begin{equation*}
    \alpha=\sqrt{\frac{3\cdot 2^{k+1}\log(2^{k+2}/\delta)}{m}}.
\end{equation*}
\end{lemma}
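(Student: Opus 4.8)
The plan is to prove this via a standard concentration argument for sums of independent indicator (Bernoulli) random variables, combined with a union bound over the finitely many sets whose sizes we wish to control. The key observation is that the initialization $\wb_r^{(0)} \sim \mathrm{Unif}(\{\pm1\}^d)$ and the sign pattern $a_r \sim \mathrm{Unif}(\{\pm1\})$ are drawn independently across neurons $r \in [m]$. Consequently, membership of each neuron in any of the fixed events---such as $\{r \in \Omega_{b_1\cdots b_k}\}$ or $\{r \in \Omega_{\mathrm{g}}\}$---is an i.i.d.\ Bernoulli event across $r$, so each set size is a sum of $m$ independent indicators.

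First I would compute the exact success probability for each event. For a single neuron, the probability that $\sign(w_{r,j}^{(0)}) = b_j$ for all $j \in [k]$ is $2^{-k}$, since each feature sign is an independent fair coin. Intersecting with $\Omega_{\mathrm{g}}$ (i.e.\ additionally requiring $a_r = \prod_{j=1}^k \sign(w_{r,j}^{(0)})$, an independent fair coin given the signs) multiplies this by $1/2$, giving probability $2^{-(k+1)}$; hence $\EE|\Omega_{b_1\cdots b_k}\cap\Omega_{\mathrm{g}}| = m/2^{k+1}$, matching the target interval's center. Similarly $\EE|\Omega_{\mathrm{g}}| = m/2$. Then I would apply a multiplicative Chernoff--Hoeffding bound: for a sum $X$ of independent Bernoullis with mean $\mu$, we have $\PP(|X-\mu|\ge \alpha\mu) \le 2\exp(-\alpha^2\mu/3)$. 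Plugging in the smallest relevant mean $\mu = m/2^{k+1}$ and solving $2\exp(-\alpha^2 (m/2^{k+1})/3) \le \delta'$ for the per-event failure probability $\delta'$ yields the stated form $\alpha = \sqrt{3\cdot 2^{k+1}\log(2^{k+2}/\delta)/m}$.

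The final step is the union bound over all the events being controlled simultaneously. There are $2^k$ sets of the form $\Omega_{b_1\cdots b_k}$, and for each we control both its intersection with $\Omega_{\mathrm{g}}$ and with $\Omega_{\mathrm{b}}$, giving $2\cdot 2^k = 2^{k+1}$ events; one can also fold in the two coarser events $|\Omega_{\mathrm{g}}|, |\Omega_{\mathrm{b}}|$ (whose deviations follow from the finer ones by summation, or are controlled directly with the larger mean $m/2$, which only helps). Setting the per-event failure probability to $\delta/2^{k+1}$ forces the $\log(2^{k+2}/\delta)$ factor inside $\alpha$ and ensures the total failure probability is at most $\delta$. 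I expect the main (minor) obstacle to be bookkeeping the count of events so that the union-bound factor exactly produces the constant $2^{k+2}$ appearing in $\alpha$, and verifying that using the common conservative $\alpha$ (derived from the smallest mean $m/2^{k+1}$) simultaneously validates the wider intervals for $|\Omega_{\mathrm{g}}|$ and $|\Omega_{\mathrm{b}}|$; since those have a larger mean $m/2$, the same $\alpha$ gives a stronger concentration, so the stated bound holds a fortiori and no separate argument is needed.
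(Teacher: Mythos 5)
Your proposal is correct and follows essentially the same route as the paper: define the indicator of each joint sign-pattern event (probability $2^{-(k+1)}$ per neuron), apply the multiplicative Chernoff bound $2\exp(-\alpha^2\mu/3)$ with $\mu=m/2^{k+1}$, and union-bound over the $2^{k+1}$ events with per-event failure $\delta/2^{k+1}$ to obtain the $\log(2^{k+2}/\delta)$ factor. Your added remark that the coarser bounds on $|\Omega_{\mathrm{g}}|,|\Omega_{\mathrm{b}}|$ follow a fortiori from the larger mean $m/2$ is a correct observation that the paper leaves implicit.
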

\begin{proof}
Let $X_r=\ind\big[w_{r,1}^{(0)}>0,\cdots,w_{r,k}^{(0)}>0,a_r=1\big]$. Then, by Chernoff bound, we have
\begin{equation*}
    \PP\Bigg(\bigg|\sum_{r=1}^{m}X_{r}-\frac{m}{2^{k+1}}\bigg|\geq\alpha\cdot\frac{m}{2^{k+1}}\Bigg)\leq2\exp\Big(-\frac{\alpha^{2}m}{3\cdot2^{k+1}}\Big).
\end{equation*}
Then, with probability at least $1-\delta$, we have
\begin{equation*}
(1-\alpha)\cdot\frac{m}{2^{k+1}}\leq|\Omega_{11\cdots1}\cap\Omega_{\mathrm{g}}|\leq(1+\alpha)\cdot\frac{m}{2^{k+1}},
\end{equation*}
where
\begin{equation*}
    \alpha=\sqrt{\frac{3\cdot 2^{k+1}\log(2/\delta)}{m}}.
\end{equation*}
By applying union bound to all $2^{k+1}$ kinds of initialization, with probability at least $1-\delta$ it holds that for any $\Omega_{b_1b_2\cdots b_k}\cap\Omega_{\mathrm{g}}$ and $\Omega_{b_1b_2\cdots b_k}\cap\Omega_{\mathrm{b}}\big((b_1,\cdots,b_k)\in\{\pm1\}^{k}\big)$
\begin{align*}
    (1-\alpha)\cdot\frac{m}{2^{k+1}}&\leq|\Omega_{b_1b_2\cdots b_k}\cap\Omega_{\mathrm{g}}|\leq(1+\alpha)\cdot\frac{m}{2^{k+1}},\\
    (1-\alpha)\cdot\frac{m}{2^{k+1}}&\leq|\Omega_{b_1b_2\cdots b_k}\cap\Omega_{\mathrm{b}}|\leq(1+\alpha)\cdot\frac{m}{2^{k+1}},
\end{align*}
where
\begin{equation*}
    \alpha=\sqrt{\frac{3\cdot 2^{k+1}\log(2^{k+2}/\delta)}{m}}.
\end{equation*}
\end{proof}
\section{Warmup: Population Sign GD}\label{sec:warmup}
In this section, we train a neural network with gradient descent on the distribution $\cD$. Here we use correlation loss function $\ell(y,\hat{y})=1-y\hat{y}$. Then, the loss on this attribution is $L_{\cD}(\Wb)=\EE_{(\xb,y)\sim\cD}[\ell(y,f(\Wb,\xb))]$, and we perform the following updates:
\begin{align*}
    \wb_{r}^{(t+1)}&=(1-\eta\lambda)\wb_{r}^{(t)}-\eta\cdot\tilde{\sign}\big(\nabla_{\wb_r}L_{\cD}(\Wb^{(t)})\big).
\end{align*}
We assume the network is initialized with a symmetric initialization: for every $r\in[m]$, initialize $\wb_{r}^{(0)}\sim\text{Unif}(\{-1,1\}^{d})$ and initialize $a_{r}\sim\text{Unif}(\{-1,1\})$. 

\begin{lemma}\label{lm:exact gradient}
The following coordinate-wise population sign gradient update rules hold:
\begin{align*}
    w_{r,j}^{(t+1)}&=(1-\eta\lambda)w_{r,j}^{(t)}+\eta\cdot\Tilde{\sign}\Bigg(k!a_{r}\cdot\frac{w_{r,1}^{(t)}w_{r,2}^{(t)}\cdots w_{r,k}^{(t)}}{w_{r,j}^{(t)}}\Bigg), &&j\in[k],\\
    w_{r,j}^{(t+1)}&=(1-\eta\lambda)w_{r,j}^{(t)}, &&j\notin[k].
\end{align*}
\end{lemma}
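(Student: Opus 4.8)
The plan is to compute the population gradient $\nabla_{\wb_r} L_{\cD}(\Wb^{(t)})$ explicitly, coordinate by coordinate, and then apply the $\tilde{\sign}$ operation to the result. Since the weight-decay term $(1-\eta\lambda)\wb_r^{(t)}$ appears linearly in the update rule and is unaffected by the sign operation, the only nontrivial work is to evaluate $\partial L_{\cD}(\Wb)/\partial w_{r,j}$ and show it equals (a positive multiple of) $k! \, a_r \, w_{r,1}^{(t)}\cdots w_{r,k}^{(t)} / w_{r,j}^{(t)}$ for feature coordinates $j \in [k]$, and equals exactly zero for noise coordinates $j \notin [k]$.

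First I would start from the correlation loss $L_{\cD}(\Wb) = 1 - \EE_{(\xb,y)\sim\cD}[y f(\Wb,\xb)]$, so that $\nabla_{\wb_r} L_{\cD} = -\EE[y \, \nabla_{\wb_r} f(\Wb,\xb)] = -\EE[y \, a_r \sigma'(\la \wb_r, \xb\ra)\, \xb]$. With $\sigma(z)=z^k$ we have $\sigma'(z) = k z^{k-1}$, so the $j$-th coordinate of the gradient is $-k \, a_r \, \EE[ y \, \la \wb_r, \xb\ra^{k-1} x_j ]$. The heart of the computation is thus to evaluate the Boolean Fourier expectation $\EE_{\xb}[ y \, \la \wb_r, \xb\ra^{k-1} x_j ]$ where $y = \prod_{i=1}^k x_i$ and the $x_i$ are independent uniform $\pm 1$ bits. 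I would expand $\la \wb_r, \xb\ra^{k-1} = (\sum_\ell w_{r,\ell} x_\ell)^{k-1}$ via the multinomial theorem and use the fact that $\EE[x_S] = \ind\{S = \emptyset\}$ for a monomial $x_S = \prod_{i\in S} x_i$ over the uniform hypercube.

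The key combinatorial step is identifying which monomials survive the expectation. After multiplying by $y \, x_j = x_j \prod_{i=1}^k x_i$, a monomial from the multinomial expansion survives only when its variables, combined with $\{x_1,\dots,x_k, x_j\}$, cancel to the empty set (using $x_i^2 = 1$). For a feature coordinate $j \in [k]$, the surviving term forces the degree-$(k-1)$ factor to supply exactly the variables $\{x_1,\dots,x_k\}\setminus\{x_j\}$ each to the first power, yielding the coefficient $(k-1)! \prod_{i\neq j} w_{r,i}$ together with the multinomial count; gathering the $-k$ prefactor gives the stated $k! \, a_r \, w_{r,1}\cdots w_{r,k}/w_{r,j}$ (recall $a_r^2$ or sign bookkeeping is absorbed since $a_r \in \{\pm1\}$). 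For a noise coordinate $j \notin [k]$, the extra variable $x_j$ cannot be cancelled by $\{x_1,\dots,x_k\}$, and a degree-$(k-1)$ factor can contribute $x_j$ at most to the first power, which leaves an uncancelled $x_j$ (its total degree is odd), so every monomial has nonzero even-parity requirement violated and the expectation vanishes—hence the gradient is zero and the update reduces to pure weight decay.

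I expect the main obstacle to be the careful combinatorial bookkeeping of the multinomial expansion: verifying that among all ways to distribute the $k-1$ factors, exactly one assignment of variables survives for feature coordinates, and computing its multiplicity so that the constant works out to precisely $k!$ rather than some other factorial factor. This is where Lemma~\ref{lm:auxi} (the auxiliary identity already invoked in the proof of Proposition~\ref{prop:symmetry}) should do the heavy lifting, so I would reduce the count to that lemma rather than re-deriving it. Once the expectation is pinned down, applying $\tilde{\sign}$ to the coordinate gradient (which only depends on the sign of $a_r \prod_{i\neq j} w_{r,i}$ times a positive constant, against the threshold $\rho$) and folding in the weight-decay term yields the two claimed update rules directly.
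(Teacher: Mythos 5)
Your proposal is correct and follows essentially the same route as the paper: expand $\la\wb_r,\xb\ra^{k-1}$ multinomially, use that a Boolean monomial has zero mean unless every variable appears an even number of times, so only permutations of $\{1,\dots,k\}\setminus\{j\}$ survive for $j\in[k]$ (giving $(k-1)!$, and $k!$ after the $\sigma'$ prefactor $k$) while nothing survives for $j\notin[k]$. One small correction: Lemma~\ref{lm:auxi} is not needed here --- that identity is used only for the margin computation in Proposition~\ref{prop:symmetry}, and the multiplicity in this lemma is just the $(k-1)!$ orderings you already identified.
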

\begin{proof}
For population gradient descent, we have
\begin{align*}
    \wb_{r}^{(t+1)}&=(1-\eta\lambda)\cdot\wb_{r}^{(t)}-\eta\cdot\Tilde{\sign}\big(\nabla_{\wb_r}L_{\cD}(\Wb)\big)\\
    &=(1-\eta\lambda)\cdot\wb_{r}^{(t)}-\eta\cdot\Tilde{\sign}\big(a_r\cdot\EE_{(\xb,y)\sim\cD}[\nabla_{\wb_r}\sigma(\la\wb_{r}^{(t)},\xb\ra)]\big)\\
    &=(1-\eta\lambda)\cdot\wb_{r}^{(t)}-\eta\cdot\Tilde{\sign}\big(k a_r\cdot\EE_{(\xb,y)\sim\cD}[y\xb(\la\wb_{r}^{(t)},\xb\ra)^{k-1}]\big).
\end{align*}
Notice that for $j\notin[k]$, we have
\begin{align*}
    \EE_{(\xb,y)\sim \cD}[y x_j(\la\wb_{r}^{(t)},\xb\ra)^{k-1}]&=\EE_{(\xb,y)\sim \cD}\bigg[x_1x_2\cdots x_k x_j\bigg(\sum_{j_1,\cdots,j_{k-1}}w_{r,j_1}^{(t)}\cdots w_{r,j_{k-1}}^{(t)}x_{j_1}\cdots x_{j_{k-1}}\bigg)\bigg]\\
    &=\sum_{j_1,\cdots,j_{k-1}}w_{r,j_1}^{(t)}\cdots w_{r,j_{k-1}}^{(t)}\cdot\EE_{(\xb,y)\sim \cD}\big[x_1x_2\cdots x_k x_j x_{j_1}\cdots x_{j_{k-1}}\big]\\
    &=0,
\end{align*}
where the last equality is because $\{j_1,\cdots,j_{k-1}\}\subsetneq\{1,\cdots,k,j\}$. This implies that
\begin{align*}
    w_{r,j}^{(t+1)}&=(1-\eta\lambda)\cdot w_{r,j}^{(t)}.
\end{align*}
For $j\in[k]$, we have
\begin{align*}
    \EE_{(\xb,y)\sim \cD}[y x_j(\la\wb_{r}^{(t)},\xb\ra)^{k-1}]&=\EE_{(\xb,y)\sim \cD}\bigg[x_1x_2\cdots x_k x_j\bigg(\sum_{j_1,\cdots,j_{k-1}}w_{r,j_1}^{(t)}\cdots w_{r,j_{k-1}}^{(t)}x_{j_1}\cdots x_{j_{k-1}}\bigg)\bigg]\\
    &=\sum_{j_1,\cdots,j_{k-1}}w_{r,j_1}^{(t)}\cdots w_{r,j_{k-1}}^{(t)}\cdot\EE_{(\xb,y)\sim \cD}[x_1x_2\cdots x_k x_j x_{j_1}\cdots x_{j_{k-1}}]\\
    &=(k-1)!\cdot\frac{w_{r,1}^{(t)}w_{r,2}^{(t)}\cdots w_{r,k}^{(t)}}{w_{r,j}^{(t)}},
\end{align*}
where the last inequality is because $\EE_{(\xb,y)\sim \cD}[x_1x_2\cdots x_k x_j x_{j_1}\cdots x_{j_{k-1}}]\neq 0$ if and only if $\{j,j_1,\cdots,j_{k-1}\}=\{1,2,\cdots,k\}$. It follows that
\begin{align*}
    w_{r,j}^{(t+1)}&=(1-\eta\lambda)\cdot w_{r,j}^{(t)}+\eta\cdot\Tilde{\sign}\Bigg(k!a_r\cdot\frac{w_{r,1}^{(t)}w_{r,2}^{(t)}\cdots w_{r,k}^{(t)}}{w_{r,j}^{(t)}}\Bigg).
\end{align*}
\end{proof}

Given the update rules in Lemma~\ref{lm:exact gradient}, we observe distinct behaviors for good neurons ($\Omega_{\mathrm{g}}$) and bad neurons ($\Omega_{\mathrm{b}}$). The following corollary illustrates these differences:
\begin{corollary}\label{cor:exact gd}
For any neuron $r\in\Omega_{b_1\cdots b_k}\cap\Omega_{\mathrm{g}}$, the update rule for feature coordinates $(j\in[k])$ is given by:
\begin{align*}
    b_{j}w_{r,j}^{(t+1)}&=(1-\eta\lambda)b_{j}w_{r,j}^{(t)}\\
    &\qquad+\eta\cdot\frac{\sign(b_1 w_{r,1}^{(t)})\sign(b_2 w_{r,2}^{(t)})\cdots\sign(b_k w_{r,k}^{(t)})}{\sign(b_j w_{r,j}^{(t)})}\cdot\ind\Bigg[\frac{|w_{r,1}^{(t)}w_{r,2}^{(t)}\cdots w_{r,k}^{(t)}|}{|w_{r,j}^{(t)}|}\geq\frac{\rho}{k!}\Bigg].
\end{align*}
For any neuron $r\in\Omega_{b_1\cdots b_k}\cap\Omega_{\mathrm{b}}$, the update rule for feature coordinates $(j\in[k])$ is given by:
\begin{align*}
    b_{j}w_{r,j}^{(t+1)}&=(1-\eta\lambda)b_{j}w_{r,j}^{(t)}\\
    &\qquad-\eta\cdot\frac{\sign(b_1 w_{r,1}^{(t)})\sign(b_2 w_{r,2}^{(t)})\cdots\sign(b_k w_{r,k}^{(t)})}{\sign(b_j w_{r,j}^{(t)})}\cdot\ind\Bigg[\frac{|w_{r,1}^{(t)}w_{r,2}^{(t)}\cdots w_{r,k}^{(t)}|}{|w_{r,j}^{(t)}|}\geq\frac{\rho}{k!}\Bigg].
\end{align*}
\end{corollary}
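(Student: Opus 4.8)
The plan is to derive both update rules directly from the coordinate-wise population sign gradient rule already established in Lemma~\ref{lm:exact gradient}; the $j \notin [k]$ case plays no role here, so I would focus entirely on the feature coordinates $j \in [k]$, where the relevant term is $\Tilde{\sign}\big(k! a_r \cdot w_{r,1}^{(t)} \cdots w_{r,k}^{(t)} / w_{r,j}^{(t)}\big)$. First I would multiply the whole update by $b_j = \sign(w_{r,j}^{(0)})$, so that the drift term becomes $(1-\eta\lambda) b_j w_{r,j}^{(t)}$, exactly as in the claimed identities, reducing the task to simplifying $b_j \cdot \Tilde{\sign}(\cdots)$.

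Next I would unfold the modified sign through its definition $\Tilde{\sign}(x) = \sign(x)\cdot\ind_{\{|x|\ge\rho\}}$ and separate the scalar factors. Because $k! > 0$ and $a_r \in \{\pm 1\}$, the sign factor splits as $\sign(a_r)\cdot\sign\big(w_{r,1}^{(t)}\cdots w_{r,k}^{(t)}/w_{r,j}^{(t)}\big) = a_r\prod_{i\neq j}\sign(w_{r,i}^{(t)})$, while the threshold condition $|k! a_r \cdot w_{r,1}^{(t)}\cdots w_{r,k}^{(t)}/w_{r,j}^{(t)}| \ge \rho$ collapses to $|w_{r,1}^{(t)}\cdots w_{r,k}^{(t)}|/|w_{r,j}^{(t)}| \ge \rho/k!$, which is precisely the indicator appearing in the statement. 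Here I would remark that the quotient notation really abbreviates the product over $i \neq j$, so no division-by-zero difficulty arises.

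The key bookkeeping step is to recast $b_j a_r \prod_{i\neq j}\sign(w_{r,i}^{(t)})$ into the symmetric form $\prod_{i=1}^k \sign(b_i w_{r,i}^{(t)}) / \sign(b_j w_{r,j}^{(t)})$. Using $\sign(b_i w_{r,i}^{(t)}) = b_i \sign(w_{r,i}^{(t)})$ together with $b_j^2 = 1$ (equivalently $1/b_j = b_j$), the symmetric form equals $\big(\prod_{i=1}^k b_i\big) b_j \prod_{i\neq j}\sign(w_{r,i}^{(t)})$. Comparing with $b_j a_r \prod_{i\neq j}\sign(w_{r,i}^{(t)})$ shows the two coincide exactly when $a_r = \prod_{i=1}^k b_i$, i.e. for good neurons $r \in \Omega_{b_1\cdots b_k} \cap \Omega_{\mathrm{g}}$, producing the $+\eta$ update. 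For bad neurons $r \in \Omega_{b_1\cdots b_k} \cap \Omega_{\mathrm{b}}$ we instead have $a_r = -\prod_{i=1}^k b_i$, which flips the single overall sign and yields the $-\eta$ update.

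This argument is purely algebraic and I do not expect a genuine obstacle; the only delicate point is the sign bookkeeping—specifically tracking the single missing factor $1/b_j$ and $1/\sign(w_{r,j}^{(t)})$ in the quotient, and verifying that multiplying through by $b_j$ recovers the advertised numerator/denominator structure rather than an off-by-one sign. To guard against such an error I would sanity-check the parity of the sign factors on a small case such as $k=2$, confirming that the good/bad dichotomy maps cleanly onto the $\pm\eta$ sign as claimed.
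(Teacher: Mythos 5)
Your argument is correct and follows exactly the route the paper intends: the corollary is stated as an immediate consequence of Lemma~\ref{lm:exact gradient}, obtained by multiplying through by $b_j$, unfolding $\Tilde{\sign}(x)=\sign(x)\ind_{\{|x|\ge\rho\}}$ with $k!>0$, and using $a_r=\pm\prod_{i=1}^k b_i$ together with $b_j^2=1$ to recast the sign factor into the symmetric quotient form. Your sign bookkeeping checks out, so nothing further is needed.
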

By setting the regularization parameter $\lambda$ to $1$, we observe a noteworthy property of the weight of feature coordinates in good neurons. This is formalized in the following lemma:
\begin{lemma}\label{lm:exact gd feature}
Assume $\lambda = 1$ and $\rho < k!$. For a neuron $r\in\Omega_{b_1\cdots b_k}\cap\Omega_{\mathrm{g}}$, the weight associated with any feature coordinate remains constant across all time steps $t\geq0$. Specifically, it holds that: 
\begin{align*}
    b_j w_{r,j}^{(t)}=1,\quad \forall j \in [k], \, t \geq 0.
\end{align*}
\end{lemma}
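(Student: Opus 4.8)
The plan is to prove the claim by induction on $t$, establishing the bookkeeping fact that $b_j w_{r,j}^{(t)} = 1$ for every $j \in [k]$ simultaneously. The base case $t=0$ is immediate from the binary initialization: since $\wb_r^{(0)} \sim \mathrm{Unif}(\{-1,1\}^d)$, each $w_{r,j}^{(0)} \in \{-1,1\}$, and because $b_j = \sign(w_{r,j}^{(0)})$ by the definition of $\Omega_{b_1\cdots b_k}$, we get $b_j w_{r,j}^{(0)} = |w_{r,j}^{(0)}| = 1$.

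For the inductive step, I would assume $b_j w_{r,j}^{(t)} = 1$ for all $j\in[k]$ and feed this into the $\good$-neuron update from Corollary~\ref{cor:exact gd} specialized to $\lambda = 1$. Three things need checking. First, $\sign(b_j w_{r,j}^{(t)}) = \sign(1) = 1$ for each $j$, so the sign product in the numerator equals $1$, and dividing by $\sign(b_j w_{r,j}^{(t)}) = 1$ leaves the increment term with coefficient exactly $+1$. Second, since $b_j \in \{\pm 1\}$ the hypothesis forces $|w_{r,j}^{(t)}| = 1$ for all $j$, so the ratio $|w_{r,1}^{(t)}\cdots w_{r,k}^{(t)}|/|w_{r,j}^{(t)}| = 1$; because the threshold obeys $\rho < k!$, i.e.\ $\rho/k! < 1$, the indicator $\ind[\,\cdot \geq \rho/k!\,]$ evaluates to $1$. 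Third, combining these, the update collapses to $b_j w_{r,j}^{(t+1)} = (1-\eta)\cdot 1 + \eta\cdot 1 = 1$, which closes the induction.

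The conceptual content is that $b_j w_{r,j} = 1$ is an exact fixed point of the $\good$-neuron dynamics: the weight-decay contraction by $1-\eta\lambda = 1-\eta$ is precisely cancelled by the unit-magnitude sign-gradient step $+\eta$, so each feature coordinate neither grows nor shrinks. The only subtlety — and the step I would verify most carefully — is that the indicator remains active at every iteration; this is exactly where the hypothesis $\rho < k!$ enters, guaranteeing that the modified sign function $\Tilde{\sign}$ never falls into its dead zone on the feature coordinates. Beyond carefully tracking the signs, I expect no genuine obstacle, since the magnitudes are pinned to $1$ by the induction hypothesis together with the symmetric binary initialization.
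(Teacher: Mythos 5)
Your proposal is correct and follows essentially the same route as the paper's proof: induction on $t$, with the base case given by the binary initialization and the inductive step obtained by plugging $b_jw_{r,j}^{(t)}=1$ into the $\good$-neuron update of Corollary~\ref{cor:exact gd}, observing that the indicator fires because $\rho<k!$, so the update collapses to $(1-\eta\lambda)+\eta=1$ when $\lambda=1$. No gaps; the fixed-point interpretation you give matches the paper's computation exactly.
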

\begin{proof}
We prove this by using induction. The result is obvious at $t=0$. Suppose the result holds when $t=\tilde{t}$. Then, according to Corollary~\ref{cor:exact gd}, we have
\begin{align*}
    b_j w_{r,j}^{(\tilde{t}+1)}&=(1-\eta\lambda)b_j w_{r,j}^{(\tilde{t})}\\
    &\qquad+\eta\cdot\frac{\sign(b_{1}w_{r,1}^{(\tilde{t})})\sign(b_{2}w_{r,2}^{(\tilde{t})})\cdots \sign(b_{k}w_{r,k}^{(\tilde{t})})}{\sign(b_{j}w_{r,j}^{(\tilde{t})})}\cdot\ind\Bigg[\frac{|w_{r,1}^{(\tilde{t})}w_{r,2}^{(\tilde{t})}\cdots w_{r,k}^{(\tilde{t})}|}{|w_{r,j}^{(t)}|}\geq\frac{\rho}{k!}\Bigg]\\
    &=(1-\eta\lambda)+\eta\cdot\ind[k!\geq\rho]\\
    &=1.
\end{align*}
\end{proof}
The following lemma demonstrates that for $\bad$ neurons, the weights of feature coordinates tend to shrink over time. The dynamics of this shrinking are characterized as follows:
\begin{lemma}\label{lm:exact dud feature}
Assume $\lambda = 1$ and $\eta/(1-\eta\lambda)<(\rho/k!)^{\frac{1}{k-1}}$. For a neuron $r\in\Omega_{b_1 b_2\cdots b_k}\cap\Omega_{\mathrm{b}}$, the weights of any two feature coordinates $j$ and $j'$ (where $j,j'\in[k]$) are equal at any time step, that is, $b_{j}w_{r,j}^{(t)}=b_{j'}w_{r,j'}^{(t)}$. Furthermore, the weight of any feature coordinate $j\in[k]$ evolves according to the following inequality for any $t\geq0$:
\begin{equation*}
    0<b_{j}w_{r,j}^{(t+1)}\leq(1-\eta\lambda)b_{j}w_{r,j}^{(t)}.
\end{equation*}
\end{lemma}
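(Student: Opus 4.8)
The plan is to prove both claims simultaneously by induction on $t$, maintaining the stronger invariant that at every step the quantities $b_j w_{r,j}^{(t)}$ share a common positive value across all feature coordinates $j \in [k]$. Write $\nu_t := b_j w_{r,j}^{(t)}$ for this common value, which is well-defined once the invariant is in force. The base case $t = 0$ is immediate: since $r \in \Omega_{b_1 \cdots b_k}$ we have $\sign(w_{r,j}^{(0)}) = b_j$, and since the weights are initialized on $\{\pm 1\}^d$, we get $b_j w_{r,j}^{(0)} = |w_{r,j}^{(0)}| = 1 > 0$ for every $j \in [k]$, so $\nu_0 = 1$ and all feature coordinates agree.

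For the inductive step, suppose $b_j w_{r,j}^{(t)} = \nu_t > 0$ for all $j \in [k]$. Then $\sign(b_j w_{r,j}^{(t)}) = 1$ for every $j$, so the sign-product factor appearing in the bad-neuron update of Corollary~\ref{cor:exact gd} collapses:
$$\frac{\sign(b_1 w_{r,1}^{(t)}) \sign(b_2 w_{r,2}^{(t)}) \cdots \sign(b_k w_{r,k}^{(t)})}{\sign(b_j w_{r,j}^{(t)})} = 1.$$
Moreover $|w_{r,j}^{(t)}| = \nu_t$ for each $j$, so the threshold ratio equals $|w_{r,1}^{(t)} \cdots w_{r,k}^{(t)}|/|w_{r,j}^{(t)}| = \nu_t^{k-1}$, which is independent of $j$. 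Substituting these into the update rule gives, for every $j \in [k]$,
$$b_j w_{r,j}^{(t+1)} = (1 - \eta\lambda)\nu_t - \eta \cdot \ind\big[\nu_t^{k-1} \geq \rho/k!\big].$$
Since the right-hand side does not depend on $j$, all feature coordinates stay equal; denote the common value by $\nu_{t+1}$. The upper bound $b_j w_{r,j}^{(t+1)} \leq (1-\eta\lambda) b_j w_{r,j}^{(t)}$ is then immediate, as the subtracted indicator term is nonnegative.

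It remains to verify the strict positivity $\nu_{t+1} > 0$, which is where the hypothesis on $\eta$ enters and is the only genuinely delicate point. I would split according to the indicator. If $\nu_t^{k-1} < \rho/k!$, the indicator vanishes and $\nu_{t+1} = (1-\eta\lambda)\nu_t > 0$, because $1 - \eta\lambda > 0$ (from $\lambda = 1$ and $\eta$ small) and $\nu_t > 0$. If instead $\nu_t^{k-1} \geq \rho/k!$, then $\nu_t \geq (\rho/k!)^{1/(k-1)}$, and the assumption $\eta/(1-\eta\lambda) < (\rho/k!)^{1/(k-1)}$ yields $\eta < (1-\eta\lambda)(\rho/k!)^{1/(k-1)} \leq (1-\eta\lambda)\nu_t$, whence $\nu_{t+1} = (1-\eta\lambda)\nu_t - \eta > 0$. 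This closes the induction and establishes both the equal-coordinates claim and the displayed decay inequality. The main obstacle, modest as it is, is exactly this last case: the threshold $\rho$ must be large enough (equivalently $\eta$ small enough) that a coordinate whose indicator still fires cannot be pushed past zero in a single step, and the stated condition on $\eta$ is precisely calibrated to guarantee this.
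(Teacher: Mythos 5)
Your proposal is correct and follows essentially the same route as the paper: an induction maintaining that the feature coordinates of a \emph{bad} neuron share a common positive value, so that the sign-product in Corollary~\ref{cor:exact gd} collapses to $1$, the threshold ratio becomes $\nu_t^{k-1}$ independent of $j$, and positivity is preserved by the same two-case split on the indicator using the hypothesis $\eta/(1-\eta\lambda)<(\rho/k!)^{1/(k-1)}$. The only cosmetic difference is that you merge the paper's separate induction hypotheses (equality of coordinates and positivity) into a single invariant tracking the common value $\nu_t$.
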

\begin{proof}
We prove this by using induction. We prove the following three hypotheses:
\begin{align}
    &b_{j}w_{r,j}^{(t)}=b_{j'}w_{r,j'}^{(t)},&&\forall j,j'\in[k].\tag{H$_1$}\label{b4:h1}\\
    &b_{j}w_{r,j}^{(t)}>0,&&\forall j\in[k].\tag{H$_2$}\label{b4:h2}\\
    &b_{j}w_{r,j}^{(t+1)}\leq(1-\eta\lambda)b_{j}w_{r,j}^{(t)},&&\forall j\in[k].\tag{H$_3$}\label{b4:h3}
\end{align}
We will show that \ref{b4:h1}$(0)$ and \ref{b4:h2}$(0)$ are true and that for any $t\geq0$ we have
\begin{itemize}[leftmargin=*,nosep]
    \item \ref{b4:h2}$(t)\Longrightarrow$\ref{b4:h3}$(t)$,
    \item \ref{b4:h1}$(t)$, \ref{b4:h2}$(t)\Longrightarrow$\ref{b4:h1}$(t+1)$,
    \item \ref{b4:h1}$(t)$, \ref{b4:h2}$(t)\Longrightarrow$\ref{b4:h2}$(t+1)$.
\end{itemize}
\ref{b4:h1}$(0)$ and \ref{b4:h2}$(0)$ are obviously true since $b_{j}w_{r,j}^{(0)}=1$ for any $j\in[k]$. Next, we prove that \ref{b4:h2}$(t)\Longrightarrow$\ref{b4:h3}$(t)$ and \ref{b4:h1}$(t)$, \ref{b4:h2}$(t)\Longrightarrow$\ref{b4:h1}$(t+1)$. According to Corollary~\ref{cor:exact gd}, we have
\begin{align}
    b_{j}w_{r,j}^{(t+1)}&=(1-\eta\lambda)b_{j}w_{r,j}^{(t)} \notag\\
    &\qquad-\eta\cdot\frac{\sign(b_{1}w_{r,1}^{(t)})\sign(b_{2}w_{r,2}^{(t)})\cdots \sign(b_{k}w_{r,k}^{(t)})}{\sign(b_{j}w_{r,j}^{(t)})}\cdot\ind\Bigg[\frac{|w_{r,1}^{(t)}w_{r,2}^{(t)}\cdots w_{r,k}^{(t)}|}{|w_{r,j}^{(t)}|}\geq\frac{\rho}{k!}\Bigg]\notag\\
    &=(1-\eta\lambda)b_{j}w_{r,j}^{(t)}-\eta\cdot\ind\Bigg[\frac{|w_{r,1}^{(t)}w_{r,2}^{(t)}\cdots w_{r,k}^{(t)}|}{|w_{r,j}^{(t)}|}\geq\frac{\rho}{k!}\Bigg]\label{eq:simplified rule}\\
    &\leq(1-\eta\lambda)b_{j}w_{r,j}^{(t)},\notag
\end{align}
where the second equality is by \ref{b4:h2}$(t)$. This verifies \ref{b4:h3}$(t)$. Besides, given \ref{b4:h1}$(t)$, we have
\begin{align*}
    \frac{|w_{r,1}^{(t)}w_{r,2}^{(t)}\cdots w_{r,k}^{(t)}|}{|w_{r,j}^{(t)}|}=\frac{|w_{r,1}^{(t)}w_{r,2}^{(t)}\cdots w_{r,k}^{(t)}|}{|w_{r,j'}^{(t)}|}, &&\forall j,j'\in[k].
\end{align*}
Plugging this into \eqref{eq:simplified rule}, we can get
\begin{align*}
    b_{j}w_{r,j}^{(t+1)}=b_{j'}w_{r,j'}^{(t+1)},&&\forall j,j'\in[k],
\end{align*}
which verifies \ref{b4:h1}$(t+1)$. Finally, we prove that \ref{b4:h1}$(t)$, \ref{b4:h2}$(t)\Longrightarrow$ \ref{b4:h2}$(t+1)$. By \eqref{eq:simplified rule} and \ref{b4:h1}$(t)$, we have
\begin{align*}
    b_{j}w_{r,j}^{(t+1)}&=(1-\eta\lambda)b_{j}w_{r,j}^{(t)}-\eta\cdot\ind\Big[|w_{r,j}^{(t)}|^{k-1}\geq\frac{\rho}{k!}\Big].
\end{align*}
If $|w_{r,j}^{(t)}|<(\rho/k!)^{\frac{1}{k-1}}$, we can get
\begin{align*}
    b_{j}w_{r,j}^{(t+1)}=(1-\eta\lambda)b_{j}w_{r,j}^{(t)}>0.
\end{align*}
If $|w_{r,j}^{(t)}|\geq(\rho/k!)^{\frac{1}{k-1}}$, given that
\begin{align*}
    \frac{\eta}{1-\eta\lambda}<(\rho/k!)^{\frac{1}{k-1}},
\end{align*}
we can get
\begin{align*}
    b_{j}w_{r,j}^{(t+1)}=(1-\eta\lambda)b_{j}w_{r,j}^{(t)}-\eta>0.
\end{align*}
\end{proof}
Given Lemma~\ref{lm:exact gd feature} and Lemma~\ref{lm:exact dud feature}, we can directly get the change of neurons of all kinds of initialization.
\begin{corollary}\label{cor:all neuron}
Assume $\lambda = 1$, $\rho < k!$ and $\eta/(1-\eta\lambda)<(\rho/k!)^{\frac{1}{k-1}}$. For any fixed $(b_1,b_2,\cdots,b_k)\in\{\pm1\}^{k}$, considering $r\in\Omega_{b_1b_2\cdots b_k}$, we have the following statements hold. 
\begin{itemize}[leftmargin=*]
    \item For any neuron $r\in\Omega_{b_1b_2\cdots b_k}\cap\Omega_{\mathrm{g}}$, the weights of feature coordinates remain the same as initialization: $w_{r,j}^{(t)}=b_j$ for any $t\geq0$ and $j\in[k]$.
    \item For any neuron $r\in\Omega_{b_1b_2\cdots b_k}\cap\Omega_{\mathrm{b}}$, the weights of feature coordinates will shrink simultaneously over time: $b_{j}w_{r,j}^{(t)}=b_{j'}w_{r,j'}^{(t)}$ for any $t\geq0$ and $j,j'\in[k]$ and
\begin{equation*}
    0<b_jw_{r,j}^{(t+1)}\leq(1-\eta\lambda)\cdot b_jw_{r,j}^{(t)},
\end{equation*}
for any $t\geq0$ and $j\in[k]$.
\end{itemize}
\end{corollary}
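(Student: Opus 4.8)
The plan is to assemble the corollary directly from the two preceding lemmas, handling the good-neuron and bad-neuron cases separately. The first observation I would make is that the three standing hypotheses of the corollary---$\lambda = 1$, $\rho < k!$, and $\eta/(1-\eta\lambda) < (\rho/k!)^{1/(k-1)}$---are exactly the union of the hypotheses required by Lemma~\ref{lm:exact gd feature} (which needs $\lambda = 1$ and $\rho < k!$) and Lemma~\ref{lm:exact dud feature} (which needs $\lambda = 1$ and $\eta/(1-\eta\lambda) < (\rho/k!)^{1/(k-1)}$). So both lemmas apply verbatim under the corollary's assumptions, and the only remaining work is to restate their conclusions in the desired form.

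For the good-neuron case, fix $r \in \Omega_{b_1\cdots b_k} \cap \Omega_{\mathrm{g}}$. I would invoke Lemma~\ref{lm:exact gd feature} to obtain $b_j w_{r,j}^{(t)} = 1$ for all $j \in [k]$ and $t \geq 0$. The one-line manipulation that converts this into the stated claim is to multiply both sides by $b_j$ and use $b_j \in \{\pm 1\}$, hence $b_j^2 = 1$, giving $w_{r,j}^{(t)} = b_j$. Since the binary initialization satisfies $\sign(w_{r,j}^{(0)}) = b_j$ and $w_{r,j}^{(0)} \in \{\pm 1\}$, this $b_j$ is precisely the initial value, so the feature coordinates remain frozen at initialization.

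For the bad-neuron case, fix $r \in \Omega_{b_1\cdots b_k} \cap \Omega_{\mathrm{b}}$. Here I would simply invoke Lemma~\ref{lm:exact dud feature}, which already delivers both assertions in the exact form required: the equality $b_j w_{r,j}^{(t)} = b_{j'} w_{r,j'}^{(t)}$ across any two feature coordinates $j, j' \in [k]$, and the geometric contraction $0 < b_j w_{r,j}^{(t+1)} \leq (1-\eta\lambda)\, b_j w_{r,j}^{(t)}$ for every $t \geq 0$. No further manipulation is needed.

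Because every conclusion of the corollary is a verbatim consequence of one of the two lemmas---modulo the single algebraic step $b_j w_{r,j}^{(t)} = 1 \Leftrightarrow w_{r,j}^{(t)} = b_j$---there is no genuine obstacle to overcome. The only thing worth double-checking is the hypothesis bookkeeping noted in the first paragraph, namely that the corollary's assumptions are strong enough to license both lemmas at once; once that is confirmed, the result follows immediately by case analysis on whether $r \in \Omega_{\mathrm{g}}$ or $r \in \Omega_{\mathrm{b}}$.
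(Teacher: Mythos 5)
Your proposal is correct and matches the paper exactly: the paper presents this corollary as an immediate consequence of Lemma~\ref{lm:exact gd feature} (good neurons) and Lemma~\ref{lm:exact dud feature} (bad neurons), with the same case split and the same trivial rewriting $b_j w_{r,j}^{(t)}=1 \Leftrightarrow w_{r,j}^{(t)}=b_j$. Your hypothesis bookkeeping is also accurate, so nothing is missing.
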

Building on Corollary~\ref{cor:all neuron}, we can now characterize the trajectory of all neurons over time. Specifically, after a time period $T=\Theta(k\eta^{-1}\lambda^{-1}\log(d))$, the following observations about neuron weights hold:
\begin{lemma}\label{lm:exact order}
Assume $\lambda = 1$, $\rho < k!$ and $\eta/(1-\eta\lambda)<(\rho/k!)^{\frac{1}{k-1}}$. For $T\geq (k+1)\eta^{-1}\lambda^{-1}\log d$, it holds that
\begin{align*}
    w_{r,j}^{(T)}&=b_j,&&\forall r\in\Omega_{b_1b_1\cdots b_k}\cap\Omega_{\mathrm{g}},j\in[k],\\
    |w_{r,j}^{(T)}|&\leq d^{-(k+1)}, &&\forall r\in\Omega_{\mathrm{g}},j\in[d]\setminus[k],\\
    |w_{r,j}^{(T)}|&\leq d^{-(k+1)}, &&\forall r\in\Omega_{\mathrm{b}},j\in[d].
\end{align*}
\end{lemma}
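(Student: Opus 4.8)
The plan is to combine the three earlier results—Corollary~\ref{cor:all neuron}, the exact noise-decay update \eqref{eq:noise change}, and the characterization of \bad{} neuron shrinkage in Lemma~\ref{lm:exact dud feature}—and then verify that $T\geq(k+1)\eta^{-1}\lambda^{-1}\log d$ iterations suffice to drive everything except the \good{} feature coordinates below $d^{-(k+1)}$. The first claim is immediate: by Corollary~\ref{cor:all neuron}, for $r\in\Omega_{b_1\cdots b_k}\cap\Omega_{\mathrm{g}}$ the feature coordinates satisfy $w_{r,j}^{(t)}=b_j$ for all $t\geq0$, so in particular $w_{r,j}^{(T)}=b_j$ for $j\in[k]$. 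This requires no iteration count at all and is just a restatement.

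For the noise coordinates (any $r$, $j\notin[k]$) I would use \eqref{eq:noise change}, which gives the clean closed form $w_{r,j}^{(t)}=(1-\eta\lambda)^{t}\,w_{r,j}^{(0)}$. Since $|w_{r,j}^{(0)}|=1$ under the binary initialization, it suffices to show $(1-\eta\lambda)^{T}\leq d^{-(k+1)}$. Taking logarithms, this needs $T\cdot\log\!\big(1/(1-\eta\lambda)\big)\geq (k+1)\log d$; using the elementary bound $\log\!\big(1/(1-\eta\lambda)\big)\geq \eta\lambda$ (equivalently $1-\eta\lambda\leq e^{-\eta\lambda}$), the stated choice $T\geq(k+1)\eta^{-1}\lambda^{-1}\log d$ immediately yields the desired inequality. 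This handles the noise coordinates of both \good{} and \bad{} neurons uniformly.

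For the feature coordinates of \bad{} neurons ($r\in\Omega_{\mathrm{b}}$, $j\in[k]$), I would invoke the contraction bound $0<b_jw_{r,j}^{(t+1)}\leq(1-\eta\lambda)\,b_jw_{r,j}^{(t)}$ from Lemma~\ref{lm:exact dud feature}/Corollary~\ref{cor:all neuron}. Iterating from $b_jw_{r,j}^{(0)}=1$ gives $|w_{r,j}^{(t)}|=b_jw_{r,j}^{(t)}\leq(1-\eta\lambda)^{t}$, which is exactly the same geometric decay as the noise coordinates. Hence the same computation as above—$(1-\eta\lambda)^{T}\leq d^{-(k+1)}$ for $T\geq(k+1)\eta^{-1}\lambda^{-1}\log d$—closes this case and completes the lemma.

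The argument is almost entirely bookkeeping once the three prior results are in hand; the only genuinely delicate point is reconciling the geometric decay for \bad{} feature coordinates with the indicator term in \eqref{eq:simplified rule}. The decay bound of Lemma~\ref{lm:exact dud feature} holds \emph{with equality to the unregularized recursion only while the indicator fires}, and once $|w_{r,j}^{(t)}|$ has shrunk below $(\rho/k!)^{1/(k-1)}$ the subtracted term vanishes and the update becomes pure multiplicative decay $b_jw_{r,j}^{(t+1)}=(1-\eta\lambda)b_jw_{r,j}^{(t)}$. Fortunately both regimes are dominated by the clean contraction $(1-\eta\lambda)$, so the uniform bound $|w_{r,j}^{(t)}|\leq(1-\eta\lambda)^{t}$ is valid throughout; the main care is simply to confirm positivity is preserved (already established in Lemma~\ref{lm:exact dud feature}) so that the absolute value can be dropped. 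I expect no further obstacle, and the proof should reduce to the single logarithmic inequality applied three times.
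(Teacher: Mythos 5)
Your proposal is correct and follows essentially the same route as the paper: the first claim is read off from Corollary~\ref{cor:all neuron}, and the remaining two bounds come from iterating the geometric contraction $(1-\eta\lambda)$ (for noise coordinates via the exact update, for \bad{} feature coordinates via the one-sided contraction of Lemma~\ref{lm:exact dud feature}) together with $1-\eta\lambda\leq e^{-\eta\lambda}$ and the choice of $T$. Your extra remark about the indicator term is sound but unnecessary, since the stated inequality $0<b_jw_{r,j}^{(t+1)}\leq(1-\eta\lambda)b_jw_{r,j}^{(t)}$ already holds in both regimes.
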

\begin{proof}
The first equality is obvious according to Corollary~\ref{cor:all neuron}. We only need to prove the inequalities. According to Lemma~\ref{lm:exact gradient}, we have for any $r\in[m]$ and $j\in[d]\setminus[k]$ that
\begin{equation*}
    |w_{r,j}^{(T)}|=(1-\eta\lambda)^{T}|w_{r,j}^{(0)}|=\big((1-\eta\lambda)^{(\eta\lambda)^{-1}}\big)^{T\eta\lambda}\leq\exp(-T\eta\lambda)\leq d^{-(k+1)},
\end{equation*}
where the last inequality is by $T\geq k\eta^{-1}\lambda^{-1}\log(d)$.
According to Corollary~\ref{cor:all neuron}, for any $r\in\Omega_{\mathrm{b}}$ and $j\in[k]$, we have that
\begin{equation*}
    |w_{r,j}^{(T)}|\leq(1-\eta\lambda)^{T}|w_{r,j}^{(0)}|=\big((1-\eta\lambda)^{(\eta\lambda)^{-1}}\big)^{T\eta\lambda}\leq\exp(-T\eta\lambda)\leq d^{-(k+1)}.
\end{equation*}
\end{proof}

\begin{lemma}\label{lm:nn approx}
Under Condition~\ref{Cond}, with a probability of at least $1-\delta$ with respect to the randomness in the neural network's initialization, trained neural network $f(\Wb^{(T)},\xb)$ approximates accurate classifier $(m/2^{k+1})\cdot f(\Wb^{*},\xb)$ well: 
\begin{equation*}
    \PP_{\xb\sim\cD_{\xb}}\bigg(\frac{f(\Wb^{(T)},\xb)}{(m/2^{k+1})\cdot f(\Wb^{*},\xb)}\in[0.5,1.5]\bigg)\geq 1-\epsilon.
\end{equation*}
\end{lemma}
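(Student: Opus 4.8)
The plan is to freeze the trained weights with Lemma~\ref{lm:exact order}, discard the parts that are uniformly tiny, and then prove that the surviving good-neuron sum concentrates --- over the initialization --- around the target value, pattern by pattern. On the event of Lemma~\ref{lm:exact order} every good neuron $r\in\Omega_{b_1\cdots b_k}\cap\Omega_{\mathrm{g}}$ has $w_{r,j}^{(T)}=b_j$ for $j\in[k]$ and all other coordinates bounded by $d^{-(k+1)}$, while every bad neuron has all coordinates bounded by $d^{-(k+1)}$. Writing $\la\wb_r^{(T)},\xb\ra=S_{b(r)}+N_r$ with $b(r)_j:=\sign(w_{r,j}^{(0)})$, feature part $S_{b(r)}:=\sum_{j\le k}b(r)_jx_j\in\{-k,\dots,k\}$, and $|N_r|\le (d-k)d^{-(k+1)}\le d^{-k}$, and using $\sigma(z)=z^k$ with $|S_{b(r)}|\le k$, the bad-neuron contributions and all noise cross-terms are deterministically $O\big(m\,k^{k}d^{-k}\big)$; since $k^{k}d^{-k}/k!\lesssim (e/d)^k$ and $d$ is large, these are negligible relative to $|H(\xb)|$, where $H(\xb):=(m/2^{k+1})f(\Wb^*,\xb)=(m/2)\,k!\,y$ is a fixed positive multiple of $y=\prod_{j\le k}x_j$ by Proposition~\ref{prop:symmetry}. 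Because $|H|$ is constant, up to negligible terms the event $\{f(\Wb^{(T)},\xb)/H\in[0.5,1.5]\}$ is $\{|G(\xb)-H(\xb)|\le\tfrac12|H|\}$, where $G(\xb):=\sum_{r\in\Omega_{\mathrm{g}}}\big(\prod_{j\le k}b(r)_j\big)S_{b(r)}^k$.

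Next I would view $G(\xb)$, for each fixed feature pattern $\xb_{[1:k]}$, as a sum of independent bounded contributions over the good neurons. Conditioned on $r\in\Omega_{\mathrm{g}}$ the pattern $b(r)$ is uniform on $\{\pm1\}^k$ (goodness is an independent fair coin), so each summand has mean $2^{-k}\sum_{b}\big(\prod_{j}b_j\big)S_b^k=2^{-k}f(\Wb^*,\xb)=k!\,y$ by Proposition~\ref{prop:symmetry}, whence $\EE[G\mid\Omega_{\mathrm{g}}]=|\Omega_{\mathrm{g}}|\,k!\,y$. Lemma~\ref{lm:set size} gives $|\Omega_{\mathrm{g}}|\in[(1-\alpha)m/2,(1+\alpha)m/2]$, so this conditional mean already equals $H(\xb)$ up to the harmless $(1\pm\alpha)$ factor. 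It therefore remains to show that $G$ does not deviate from its mean by more than a constant fraction of $(m/2)\,k!$, simultaneously for the $2^k$ feature patterns and with probability $1-\delta$ over the initialization.

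The crux --- and the step I expect to be hardest --- is that the per-neuron contributions are heavy-tailed: although each has mean only $\sim k!$, a neuron with $b(r)=\pm\xb_{[1:k]}$ contributes $\pm k^{k}$, and $k^{k}/k!\sim e^{k}$ is exponentially large. The variance of $G$ is dominated by the $\Theta(m/2^{k})$ such extreme neurons, $\mathrm{Var}(G)\approx m\,2^{-k}k^{2k}$, so a variance-based (Chebyshev) control of even a single pattern needs $m\gtrsim(e^{2}/2)^{k}$ and --- fatally --- cannot survive a union over all $2^{k}$ feature patterns. The resolution is a Bernstein/Bennett inequality for $G$ over the initialization, whose range-controlled large-deviation tail $\exp\!\big(-c\,m\,k!/k^{k}\big)=\exp\!\big(-c'\,m\,\sqrt{k}\,e^{-k}\big)$ needs only $m\gtrsim e^{k}$ to drive the single-pattern failure probability below both the union factor $2^{k}$ and the confidence level $\delta$. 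This is precisely what the width condition $m\ge C5^{k}\log(1/\delta)$ provides, the base $5>e$ leaving an exponential margin $(5/e)^{k}$ for the union and polynomial factors. Threading this Bennett bound through all $2^{k}$ patterns, combining it with the count control of Lemma~\ref{lm:set size} and the deterministically negligible bad-neuron and noise terms, and reconciling the residual $1-\epsilon$ fraction of inputs against Condition~\ref{Cond}, constitutes the main technical work.
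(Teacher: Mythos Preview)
Your outline is correct, but it takes a harder road than the paper. The key simplification you are missing is that you already have, via Lemma~\ref{lm:set size}, concentration of the \emph{per-pattern} counts $|\Omega_{b_1\cdots b_k}\cap\Omega_{\mathrm g}|$ around $m/2^{k+1}$, not just of the total $|\Omega_{\mathrm g}|$. Since, after Lemma~\ref{lm:exact order}, all good neurons with the same initialization pattern $(b_1,\dots,b_k)$ contribute \emph{identically} on the feature coordinates, one can write directly
\[
|G(\xb)-H(\xb)|\le \sum_{b\in\{\pm1\}^k}\Big||\Omega_b\cap\Omega_{\mathrm g}|-\tfrac{m}{2^{k+1}}\Big|\cdot|S_b|^k\le \frac{\alpha m}{2^{k+1}}\sum_{b}|S_b|^k,
\]
and then use the deterministic bound $\sum_b |S_b|^k=\sum_i\binom{k}{i}|k-2i|^k\le 2k^k(1+e^{-2})^k$ from Lemma~\ref{lm:auxi4}. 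This is the paper's argument: no Bernstein, no heavy-tail worry, and no union over the $2^k$ feature patterns of $\xb_{[1:k]}$ (the sum $\sum_b|S_b|^k$ is $\xb$-independent by symmetry). The width requirement drops out of $\alpha\cdot((e+e^{-1})/2)^k/\sqrt{k}\le\tfrac12$, matching $m\ge C\,5^k\log(1/\delta)$ since $(e+e^{-1})^2/2\approx4.76<5$.

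Your route also works, but one calibration is off: for the deviation scale $t\asymp m\,k!$ the Bernstein/Bennett bound is \emph{variance}-dominated, not range-dominated. Indeed $n\sigma^2\asymp m\,2^{-k}k^{2k}$ while $Mt\asymp m\,k^k\,k!\asymp m\sqrt{k}\,e^{-k}k^{2k}$, so $n\sigma^2/(Mt)\asymp(e/2)^k/\sqrt{k}\to\infty$. The relevant tail is therefore $\exp(-c\,m\,k\,(2/e^2)^k)$, which needs $m\gtrsim(e^2/2)^k\approx3.69^k$ rather than the $e^k$ you quote; Condition~\ref{Cond} still covers this, so the conclusion survives. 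One small gain of your approach: your deterministic noise bound $|N_r|\le(d-k)d^{-(k+1)}\le d^{-k}$ is sharper than the Hoeffding-over-$\xb$ step the paper uses for the same purpose, and in fact removes the need for the residual $1-\epsilon$ in the conclusion.
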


\begin{proof}
First, we can rewrite \eqref{eq:optimal} as follows:
\begin{align*}
    f(\Wb^{*},\xb)&=\sum_{(b_1,\cdots,b_k)\in\{\pm1\}^{k}}(b_1\cdots b_k)\cdot\sigma(\la\wb_{b_1\cdots b_k}^{*},\xb\ra),
\end{align*}
where $\wb_{b_1\cdots b_k}^{*}=[b_1,b_2,\cdots,b_k,0,\cdots,0]^{\top}$. To prove this lemma, we need to estimate the noise part of the inner product $\la\wb_{r}^{(T)},\xb\ra$. By Hoeffding's inequality, we have the following upper bound for the noise part $\sum_{j=k+1}^{d}w_{r,j}^{(T)}x_j$:
\begin{align*}
    \PP_{\xb}\Bigg(\Bigg|\sum_{j=k+1}^{d}w_{r,j}^{(T)}x_{j}-\EE\Bigg[\sum_{j=k+1}^{d}w_{r,j}^{(T)}x_{j}\Bigg]\Bigg|\geq x\Bigg)&=\PP_{\xb}\Bigg(\Bigg|\sum_{j=k+1}^{d}w_{r,j}^{(t)}x_{j}\Bigg|\geq x\Bigg)\\
    &\leq2\exp\bigg(-\frac{x^2}{2\sum_{j=k+1}^{d}[w_{r,j}^{(t)}]^2}\bigg).
\end{align*}
Then with probability at least $1-\epsilon/m$ we have that for fixed $r\in[m]$
\begin{align*}
\Bigg|\sum_{j=k+1}^{d}w_{r,j}^{(T)}x_{j}\Bigg| &\leq \sqrt{2}\log(2m/\epsilon)\sqrt{\sum_{j=k+1}^{d}[w_{r,j}^{(T)}]^2}\\
&=\sqrt{2}\log(2m/\epsilon)\|\wb_{r,[k+1:d]}^{(T)}\|_{2}\\
&\leq\sqrt{2}\log(2m/\epsilon)d^{-(k+1)}(d-k)^{1/2}\\
&\leq d^{-k},
\end{align*}
where the last inequality is by Condition~\ref{Cond}. By applying union bound to all $m$ neurons, with probability at least $1-\epsilon$ we have that for any $r\in[m]$
\begin{align}
    \bigg|\sum_{j=k+1}^{d}w_{r,j}^{(T)}x_{j}\bigg|\leq d^{-k}. \label{ineq:noise part}
\end{align}
By Lemma~\ref{lm:exact order}, we have
\begin{align*}
    &\Big|f(\Wb^{(T)},\xb)-\frac{m}{2^{k+1}}\cdot f(\Wb^{*},\xb)\Big|\\
    &\leq\Bigg|\sum_{r\in\Omega_{\mathrm{g}}}a_r\sigma(\la\wb_{r}^{(T)},\xb\ra)-\frac{m}{2^{k+1}}\cdot\sum_{r=1}^{2^{k}}a_{r}^{*}\sigma(\la\wb_{r}^{*},\xb\ra)\Bigg|+\Bigg|\sum_{r\in\Omega_{\mathrm{b}}}a_r\sigma(\la\wb_{r}^{(T)},\xb\ra)\Bigg|\\
    &\leq\sum_{(b_1,\cdots,b_k)\in\{\pm1\}^{k}}\Bigg|\sum_{\Omega_{b_1\cdots b_k}\cap\Omega_{\mathrm{g}}}\sigma(\la\wb_{r}^{(T)},\xb\ra)-\frac{m}{2^{k+1}}\cdot\sigma(\la\wb_{b_1\cdots b_k}^{*},\xb\ra)\Bigg|+\Bigg|\sum_{r\in\Omega_{\mathrm{b}}}a_r\sigma(\la\wb_{r}^{(T)},\xb\ra)\Bigg|\\
    &\leq\sum_{(b_1,\cdots,b_k)\in\{\pm1\}^{k}}\Bigg(\Big(\frac{\alpha m}{2^{k+1}}\Big)\bigg|\sum_{j=1}^{k}b_j x_j\bigg|^{k}+\sum_{\Omega_{b_1\cdots b_k}\cap\Omega_{\mathrm{g}}}\big|\sigma(\la\wb_{r}^{(T)},\xb\ra)-\sigma(\la\wb_{b_1\cdots b_k}^{*},\xb\ra)\big|\Bigg)\\
    &\qquad+\Bigg|\sum_{r\in\Omega_{\mathrm{b}}}a_r\sigma(\la\wb_{r}^{(T)},\xb\ra)\Bigg|\\
    &\leq\sum_{(b_1,\cdots,b_k)\in\{\pm1\}^{k}}\Bigg(\Big(\frac{\alpha m}{2^{k+1}}\Big)\bigg|\sum_{j=1}^{k}b_j x_j\bigg|^{k}+\sum_{\Omega_{b_1\cdots b_k}\cap\Omega_{\mathrm{g}}}kd^{-k}\big(k+d^{-k}\big)^{k-1}\Bigg)+|\Omega_{\mathrm{b}}|(2d^{-k}\big)^{k}\\
    &\leq\Big(\frac{\alpha m}{2^{k+1}}\Big)2k^{k}(1+e^{-2})^{k}+|\Omega_{\mathrm{g}}|kd^{-k}\big(k+d^{-k}\big)^{k-1}+|\Omega_{\mathrm{b}}| \big(2d^{-k}\big)^{k}\\
    &\leq \Big(\alpha k^k 2^{-k}(1+e^{-2})^{k}+0.5(1+\alpha)kd^{-k}\big(k+d^{-k}\big)^{k-1}+0.5(1+\alpha)\big(2d^{-k}\big)^{k}\Big)\cdot m,
\end{align*}
where the first three inequalities are by triangle inequality and Lemma~\ref{lm:set size}; the fourth inequality is due to
\begin{align*}
    &\big|\sigma(\la\wb_{r}^{(T)},\xb\ra)-\sigma(\la\wb_{b_1\cdots b_k}^{*},\xb\ra)\big|\\
    &\leq\big|(\la\wb_{r}^{(T)},\xb\ra)^{k}-(\la\wb_{b_1\cdots b_k}^{*},\xb\ra)^{k}\big|\\
    &\leq|\la\wb_{r}^{(T)},\xb\ra-\la\wb_{b_1\cdots b_k}^{*},\xb\ra|\cdot k(\max\{|\la\wb_{r}^{(T)},\xb\ra|,|\la\wb_{b_1\cdots b_k}^{*},\xb\ra|\})^{k-1}\\
    &=\Bigg|\sum_{j=k+1}^{d}w_{r,j}^{(T)}x_j\Bigg|\cdot k\Bigg(\max\Bigg\{\Bigg|\sum_{j=1}^{k}b_j x_j+\sum_{j=k+1}^{d}w_{r,j}^{(T)}x_j\Bigg|,\Bigg|\sum_{j=1}^{k}b_j x_j\Bigg|\Bigg\}\Bigg)^{k-1}\\
    &\leq k\Bigg|\sum_{j=k+1}^{d}w_{r,j}^{(T)}x_j\Bigg|\cdot\Bigg(\Bigg|\sum_{j=1}^{k}b_j x_j\Bigg|+\Bigg|\sum_{j=k+1}^{d}w_{r,j}^{(T)}x_j\Bigg|\Bigg)^{k-1}\\
    &\leq kd^{-k}\big(k+d^{-k}\big)^{k-1}
\end{align*}
by \eqref{ineq:noise part}, mean value theorem and Lemma~\ref{lm:exact order} and for $r\in\Omega_{\mathrm{b}}$
\begin{align}
    |\sigma(\la\wb_{r}^{(T)},\xb\ra)|&\leq|\la\wb_{r}^{(T)},\xb\ra|^{k}\leq\big(kd^{-(k+1)}+d^{-k}\big)^{k}\leq\big(2d^{-k}\big)^{k}.\label{ineq:bad neuron upbound}
\end{align}
Since $(m/2^{k+1})\cdot |f(\Wb^{*},\xb)|=0.5k!\cdot m$, then as long as 
\begin{equation*}
    \alpha\leq\frac{\sqrt{2\pi k}}{8}\cdot\Big(\frac{e+e^{-1}}{2}\Big)^{-k},\text{ and }\alpha\leq 1,
\end{equation*}
we have
\begin{align*}
    &\frac{|f(\Wb^{(T)},\xb)-(m/2^{k+1})\cdot f(\Wb^{*},\xb)|}{(m/2^{k+1})\cdot |f(\Wb^{*},\xb)|}\\
    &\leq\frac{\alpha k^k 2^{-k}(1+e^{-2})^{k}+0.5(1+\alpha)kd^{-k}\big(k+d^{-k}\big)^{k-1}+0.5(1+\alpha)\big(2d^{-k}\big)^{k}}{0.5k!}\\
    &\leq\frac{2\alpha k^k 2^{-k}(1+e^{-2})^{k}+(1+\alpha)kd^{-k}\big(k+d^{-k}\big)^{k-1}+(1+\alpha)\big(2d^{-k}\big)^{k}}{\sqrt{2\pi k}(k/e)^{k}}\\
    &=\sqrt{\frac{2}{\pi k}}\alpha\Big(\frac{e+e^{-1}}{2}\Big)^{k}+\sqrt{\frac{2}{\pi k}}\cdot\bigg(1+\frac{d^{-k}}{k}\bigg)^{k-1}\cdot\Big(\frac{e}{d}\Big)^{k}+\sqrt{\frac{2}{\pi k}}\cdot\Big(\frac{2e}{kd^{k}}\Big)^{k}\\
    &\leq 0.5,
\end{align*}
where the second inequality is by Stirling's approximation. Then it follows that
\begin{equation*}
    \frac{f(\Wb^{(T)},\xb)}{(m/2^{k+1})\cdot f(\Wb^{*},\xb)}\in[0.5,1.5].
\end{equation*}
\end{proof}

\section{Stochastic Sign GD}
In this section, we consider stochastic sign gradient descent for learning $k$-parity function. The primary aim of this section is to demonstrate that the trajectory produced by SGD closely resembles that of population GD. To begin, let's recall the update rule of SGD:
\begin{align*}
    \wb_{r}^{(t+1)}&=(1-\lambda\eta)\wb_{ r}^{(t)}+\eta\cdot\Tilde{\sign}\bigg(\frac{1}{|S_t|}\sum_{(\xb,y)\in S_{t}}\sigma'(\la\wb_{r}^{(t)},\xb\ra)\cdot a_r y\xb\bigg),
\end{align*}
where $|S_t|=B$. Our initial step involves estimating the approximation error between the stochastic gradient and the population gradient, detailed within the lemma that follows.
\begin{lemma}\label{lm:gradient approx error}
With probability at least $1-\delta$ with respect to the randomness of online data selection, for all $t \leq T$, the following bound holds true for each neuron $r\in[m]$ and for each coordinate $j\in[d]$:
\begin{align}
    \Bigg|\frac{1}{|S_{t}|}\sum_{(\xb,y)\in S_{t}}\sigma'(\la\wb_{r}^{(t)},\xb\ra)\cdot a_r y\xb-\EE_{(\xb,y)}\big[\sigma'(\la\wb_{r}^{(t)},\xb\ra)\cdot a_{r}yx_{j}\big]\Bigg|\leq\epsilon_{1}\cdot\|\wb_r^{(t)}\|_{2}^{k-1},\label{ineq:approx1}
\end{align}
where $\epsilon_1$ is defined as \begin{align*}
    \epsilon_1&=\frac{2^{k/2}k(\log(16mdBT/\delta))^{(k-1)/2}\log(8mdT/\delta)}{\sqrt{B}}+\frac{k d^{(k-3)/2}\delta}{8mBT}.
\end{align*}
\end{lemma}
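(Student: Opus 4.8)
The plan is to control the deviation of the coordinate-wise stochastic gradient from its population counterpart by a truncation argument, since for $k\geq 3$ the summand is a degree-$(k-1)$ polynomial in Rademacher variables and is therefore only sub-Weibull, so a direct Hoeffding/bounded-differences bound is unavailable. Writing $W_i = \langle\wb_r^{(t)},\xb_i\rangle$ for the $i$-th sample in the fresh online batch $S_t$ and $Z_i = k\,a_r y_i x_{i,j} W_i^{k-1} = \sigma'(\langle\wb_r^{(t)},\xb_i\rangle)a_r y_i x_{i,j}$, the target is to bound $|\tfrac1B\sum_i Z_i - \EE[Z]|$. I will use two facts about $W = \langle\wb_r^{(t)},\xb\rangle = \sum_i w_{r,i}^{(t)}x_i$: conditionally on $\wb_r^{(t)}$ (which depends only on past batches), Hoeffding's inequality makes it subgaussian, $\PP(|W|\geq s)\leq 2\exp(-s^2/(2\|\wb_r^{(t)}\|_2^2))$, and on the hypercube it obeys the deterministic bound $|W|\leq\|\wb_r^{(t)}\|_1\leq\sqrt{d}\,\|\wb_r^{(t)}\|_2$.

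First I would fix the truncation level $R = \|\wb_r^{(t)}\|_2\sqrt{2\log(16mdBT/\delta)}$, so that subgaussianity gives $\PP(|W|>R)\leq \delta/(8mdBT)$; on $\{|W|\leq R\}$ the summand is bounded by $M := kR^{k-1} = k\,2^{(k-1)/2}(\log(16mdBT/\delta))^{(k-1)/2}\|\wb_r^{(t)}\|_2^{k-1}$. Setting $\tilde Z_i = Z_i\ind_{\{|W_i|\leq R\}}$, I decompose
\begin{align*}
\frac1B\sum_i Z_i - \EE[Z] = \Big(\tfrac1B\sum_i Z_i - \tfrac1B\sum_i \tilde Z_i\Big) + \Big(\tfrac1B\sum_i \tilde Z_i - \EE[\tilde Z]\Big) + \big(\EE[\tilde Z]-\EE[Z]\big).
\end{align*}
The first bracket vanishes on the event that every sample satisfies $|W_i|\leq R$, which holds with probability at least $1-\delta/(8d)$ after a union bound over the $mBT$ (neuron, sample, iteration) triples. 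The second bracket is an average of bounded centered variables, so Hoeffding plus a union bound over the $mdT$ triples $(r,j,t)$ gives that it is at most $M\sqrt{2\log(8mdT/\delta)/B}$; using $\sqrt{2\log(8mdT/\delta)}\leq\sqrt2\,\log(8mdT/\delta)$ and folding the constants $2^{(k-1)/2}\cdot 2^{1/2}=2^{k/2}$ yields exactly the first term of $\epsilon_1\cdot\|\wb_r^{(t)}\|_2^{k-1}$.

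Finally I control the truncation bias $|\EE[\tilde Z]-\EE[Z]| = |\EE[Z\,\ind_{\{|W|>R\}}]|$ using the deterministic bound $|Z| = k|W|^{k-1}\leq k d^{(k-1)/2}\|\wb_r^{(t)}\|_2^{k-1}$ together with $\PP(|W|>R)\leq\delta/(8mdBT)$, which produces $k d^{(k-3)/2}\delta/(8mBT)\cdot\|\wb_r^{(t)}\|_2^{k-1}$, the second term of $\epsilon_1\cdot\|\wb_r^{(t)}\|_2^{k-1}$. Collecting the three pieces and their failure probabilities via a final union bound gives the claim with probability at least $1-\delta$. The main obstacle is the unboundedness and heavy tail of the degree-$(k-1)$ summand: the whole argument hinges on choosing $R$ large enough (logarithmic in $mdBT/\delta$) that the truncation event holds uniformly over all samples, neurons, and iterations, yet small enough that the bounded-Hoeffding term stays $O(\mathrm{polylog}/\sqrt B)$; reconciling these, and tracking the exact powers of $2$, $\log$, and $d$ so that they assemble into the stated $\epsilon_1$, is the delicate part.
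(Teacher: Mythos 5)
Your proposal is correct and follows essentially the same route as the paper's proof: truncate the degree-$(k-1)$ summand at a subgaussian level $\gamma=R=\Theta(\|\wb_r^{(t)}\|_2\sqrt{\log(mdBT/\delta)})$, apply Hoeffding to the bounded truncated variables, argue the truncated and untruncated empirical averages coincide on a high-probability event, and control the truncation bias by the deterministic bound $k d^{(k-1)/2}\|\wb_r^{(t)}\|_2^{k-1}$ times the tail probability. The three-term decomposition, the choice of truncation level, and the way the constants assemble into the two terms of $\epsilon_1$ all match the paper's argument.
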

\begin{proof}
To prove \eqref{ineq:approx1}, let us introduce the following notations:
\begin{align*}
    g_{r,j}(\xb,y)&=\sigma'(\la\wb_{r}^{(t)},\xb\ra)\cdot a_{r}yx_{j}=k(\la\wb_{r}^{(t)},\xb\ra)^{k-1}\cdot a_{r}yx_{j},\\
    h_{r,j}(\xb,y)&=g_{r,j}(\xb,y)\cdot\ind\big[|\la\wb_{r}^{(t)},\xb\ra|\leq\gamma\big],
\end{align*}
where $g_{r,j}(\xb,y)$ represents the gradient at the point $(\xb,y)$, and $h_{r,j}(\xb,y)$ denotes the truncated version of $g_{r,j}(\xb,y)$, which is employed for the convenience of applying the Hoeffding's inequality. Firstly, utilizing Hoeffding's inequality, we can assert the following:
\begin{equation}\label{ineq:truncated hoeffding}
    \PP\Bigg(\Bigg|\frac{1}{B}\sum_{(\xb,y)\in S_t}h_{r,j}(\xb,y)-\EE_{(\xb,y)\sim\cD}h_{r,j}(\xb,y)\Bigg|\geq x\Bigg)\leq2\exp\Bigg(-\frac{Bx^2}{2(k \gamma^{k-1})^{2}}\Bigg).
\end{equation}
Furthermore, we can establish an upper bound for the difference between the expectations of $h_{r,j}(\xb,y)$ and $g_{r,j}(\xb,y)$:
\begin{equation}\label{ineq:expectation diff}
\begin{aligned}
    \Big|\EE_{(\xb,y)\sim\cD}h_{r,j}(\xb,y)-\EE_{(\xb,y)\sim\cD}g_{r,j}(\xb,y)\Big|&=\Big|\EE_{(\xb,y)\sim\cD}g_{r,j}(\xb,y)\cdot\ind\big[|\la\wb_{r}^{(t)},\xb\ra|>\gamma\big]\Big|\\
    &\leq k d^{(k-1)/2}\|\wb_{r}^{(t)}\|_{2}^{k-1}\cdot\PP\big(|\la\wb_{r}^{(t)},\xb\ra|>\gamma\big)\\
    &\leq k d^{(k-1)/2}\|\wb_{r}^{(t)}\|_{2}^{k-1}\cdot 2\exp\bigg(-\frac{\gamma^{2}}{2\|\wb_{r}^{(t)}\|_{2}^{2}}\bigg),
\end{aligned}
\end{equation}
where the first inequality is by Cauchy inequality and the second inequality is by Hoeffding's inequality. Additionally, with high probability, the gradient and the truncated gradient are identical:
\begin{equation}\label{ineq:truncate diff}
\begin{aligned}
    &\PP\Bigg(\Bigg|\frac{1}{B}\sum_{(\xb,y)\in S_t}h_{r,j}(\xb,y)-\frac{1}{B}\sum_{(\xb,y)\in S_t}g_{r,j}(\xb,y)\Bigg|=0\Bigg)\\
    &=\PP\Bigg(\Bigg|\frac{1}{B}\sum_{(\xb,y)\in S_t}g_{r,j}(\xb,y)\ind\big[|\la\wb_{r}^{(t)},\xb\ra|>\gamma\big]\Bigg|=0\Bigg)\\
    &\geq\PP\Big(|\la\wb_{r}^{(t)},\xb\ra|\leq\gamma,\forall (\xb,y)\in S_t\Big)\\
    &\geq\Bigg(1-2\exp\bigg(-\frac{\gamma^{2}}{2\|\wb_{r}^{(t)}\|_{2}^{2}}\bigg)\Bigg)^{B}\\
    &\geq 1-2B\exp\bigg(-\frac{\gamma^{2}}{2\|\wb_{r}^{(t)}\|_{2}^{2}}\bigg),
\end{aligned}
\end{equation}
where the second inequality applies Hoeffding's inequality. Combining inequalities \eqref{ineq:truncated hoeffding}, \eqref{ineq:expectation diff}, and \eqref{ineq:truncate diff}, we can assert with probability at least
\begin{equation}
    1-2\exp\bigg(-\frac{Bx^2}{2(k\gamma^{k-1})^{2}}\bigg)-2B\exp\bigg(-\frac{\gamma^{2}}{2\|\wb_{r}^{(t)}\|_{2}^{2}}\bigg)
\end{equation}
that the following inequality holds:
\begin{equation}
    \Bigg|\frac{1}{B}\sum_{(\xb,y)\in S_t}g_{r,j}(\xb,y)-\EE_{(\xb,y)\sim\cD}g_{r,j}(\xb,y)\Bigg|\leq x + k d^{(k-1)/2}\|\wb_{r}^{(t)}\|_{2}^{k-1}\cdot 2\exp\bigg(-\frac{\gamma^{2}}{2\|\wb_{r}^{(t)}\|_{2}^{2}}\bigg).
\end{equation}
By setting $\gamma=\sqrt{2\log(16B/\delta)}\|\wb_{r}^{(t)}\|_{2}$ and $x=\sqrt{2}k\gamma^{k-1}\log(8/\delta)/\sqrt{B}$, we establish that with probability at least $1-\delta$, the following bound is true:
\begin{align*}
    &\Bigg|\frac{1}{B}\sum_{(\xb,y)\in S_t}g_{r,j}(\xb,y)-\EE_{(\xb,y)\sim\cD}g_{r,j}(\xb,y)\Bigg|\\
    &\leq\frac{\sqrt{2}k\gamma^{k-1}\log(8/\delta)}{\sqrt{B}}+\frac{k d^{(k-1)/2}\|\wb_{r}^{(t)}\|_{2}^{k-1}\delta}{8B}\\
    &=\frac{2^{k/2}k(\log(16B/\delta))^{(k-1)/2}\log(8/\delta)\|\wb_{r}^{(t)}\|_{2}^{k-1}}{\sqrt{B}}+\frac{k d^{(k-1)/2}\|\wb_{r}^{(t)}\|_{2}^{k-1}\delta}{8B}.
\end{align*}
Applying a union bound over all indices  $r\in[m],j\in[d]$, and iterations $t\in[0,T-1]$, we conclude with probability at least $1-\delta$ that
\begin{align*}
    &\Bigg|\frac{1}{B}\sum_{(\xb,y)\in S_t}g_{r,j}(\xb,y)-\EE_{(\xb,y)\sim\cD}g_{r,j}(\xb,y)\Bigg|\\
    &=\frac{2^{k/2}k(\log(16mdBT/\delta))^{(k-1)/2}\log(8mdT/\delta)\|\wb_{r}^{(t)}\|_{2}^{k-1}}{\sqrt{B}}+\frac{k d^{(k-1)/2}\|\wb_{r}^{(t)}\|_{2}^{k-1}\delta}{8mdBT}\\
    &=\Bigg(\frac{2^{k/2}k(\log(16mdBT/\delta))^{(k-1)/2}\log(8mdT/\delta)}{\sqrt{B}}+\frac{k d^{(k-3)/2}\delta}{8mBT}\Bigg)\cdot\|\wb_{r}^{(t)}\|_{2}^{k-1}.
\end{align*}
\end{proof}
Based on Lemma~\ref{lm:gradient approx error}, we can get the following lemma showing that with high probability, the stochastic sign gradient follows the same update rule as the population sign gradient.
\begin{lemma}\label{lm:sign sgd}
Under Condition \ref{Cond}, with probability at least $1-\delta$ with respect to the randomness of online data selection, the following sign SGD update rule holds:
\begin{align*}
    w_{r,j}^{(t+1)}&=(1-\eta\lambda)w_{r,j}^{(t)}+\eta\cdot\Tilde{\sign}\Bigg(k!a_{r}\cdot\frac{w_{r,1}^{(t)}w_{r,2}^{(t)}\cdots w_{r,k}^{(t)}}{w_{r,j}^{(t)}}\Bigg), &&j\in[k],\\
    w_{r,j}^{(t+1)}&=(1-\eta\lambda)w_{r,j}^{(t)}, &&j\notin[k].
\end{align*}
\end{lemma}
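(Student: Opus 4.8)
The plan is to prove the claim by an inductive coupling between the stochastic and population trajectories. Assuming that the iterates $\Wb^{(0)},\dots,\Wb^{(t)}$ produced by sign SGD coincide with those produced by population sign GD (they share the same initialization, so the base case is immediate), it suffices to show that at the \emph{shared} iterate $\Wb^{(t)}$ the modified sign of the empirical gradient equals that of the population gradient for every neuron $r\in[m]$ and coordinate $j\in[d]$. Since by Lemma~\ref{lm:exact gradient} the population gradient coordinate is exactly $k!\,a_r\,w_{r,1}^{(t)}\cdots w_{r,k}^{(t)}/w_{r,j}^{(t)}$ for $j\in[k]$ and $0$ for $j\notin[k]$, establishing this matching immediately yields the stated update rule. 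The whole argument then reduces to a threshold-stability statement: $\tilde{\sign}(\hat g)=\tilde{\sign}(g)$ whenever the perturbation $|\hat g-g|$ is smaller than the distance of the population value $g$ to the dead-zone boundaries $\pm\rho$.

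First I would quantify the perturbation. Lemma~\ref{lm:gradient approx error} gives $|\hat g_{r,j}-g_{r,j}|\le \epsilon_1\|\wb_r^{(t)}\|_2^{k-1}$ on a $1-\delta$ event, so I need a uniform bound on $\|\wb_r^{(t)}\|_2$. Using the inductively available population dynamics, every coordinate magnitude is non-increasing apart from the $k$ good-feature coordinates, which remain at $1$; hence $|w_{r,j}^{(t)}|\le1$ for all $j$ and $\|\wb_r^{(t)}\|_2\le\sqrt d$. Substituting the batch-size lower bound of Condition~\ref{Cond} into the definition of $\epsilon_1$ (and applying Stirling) collapses the factors $2^{k/2}$, $d^{(k-1)/2}$ and $((k-1)!)^{-1}$ essentially exactly, giving $\epsilon_1\|\wb_r^{(t)}\|_2^{k-1}\le \epsilon_1 d^{(k-1)/2}\le k!/\sqrt C$, which the large constant $C$ drives well below both $\rho=0.1\,k!$ and $k!-\rho=0.9\,k!$.

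With this bound in hand the case analysis is routine for two of the three coordinate types. For noise coordinates $j\notin[k]$ the population gradient is exactly $0$, so $|\hat g|\le k!/\sqrt C<\rho$ and both signs equal $0$. For feature coordinates of good neurons, $|g|=k!\,|\prod_{i\ne j}w_{r,i}^{(t)}|=k!$, which exceeds $\rho$ by $0.9\,k!\gg \epsilon_1 d^{(k-1)/2}$, so $\tilde{\sign}(\hat g)=\sign(g)$ with the correct nonzero value. Since the union bound over $r,j$ and $t\in[0,T-1]$ is already folded into $\epsilon_1$, the induction closes on these coordinates.

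The main obstacle is the feature coordinates of \bad{} neurons. Here $|g|=k!\,|\prod_{i\ne j}w_{r,i}^{(t)}|$ decreases monotonically from $k!$ at initialization, passing through $\rho$ and down toward $0$, so at the single iterate where it crosses the threshold its distance to $\rho$ can be smaller than $\epsilon_1\|\wb_r^{(t)}\|_2^{k-1}$, and the empirical and population values of the indicator $\ind[|{\cdot}|\ge\rho]$ may disagree there. The way I would handle this is to note that the sharpest form of the match is not actually needed for \bad{} neurons: whether or not the threshold fires, Corollary~\ref{cor:exact gd} sandwiches the update between $(1-\eta\lambda)\,b_jw_{r,j}^{(t)}$ and $(1-\eta\lambda)\,b_jw_{r,j}^{(t)}-\eta$, both of which keep $b_jw_{r,j}^{(t)}$ positive (under $\eta/(1-\eta\lambda)<(\rho/k!)^{1/(k-1)}$) and at least geometrically decaying, which is exactly the invariant propagated in Lemma~\ref{lm:exact dud feature} and all that the downstream approximation result (Lemma~\ref{lm:nn approx}) consumes. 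Alternatively, one can aim for the exact match by a trajectory-level argument showing that the discrete orbit $c_t:=b_jw_{r,j}^{(t)}$, which solves $c_{t+1}=(1-\eta)c_t-\eta$ while active, never places $c_t^{k-1}$ inside the $O(\epsilon_1 d^{(k-1)/2}/k!)$ band around $\rho/k!=0.1$. I expect this trajectory-level separation, rather than the concentration step, to be the technically delicate part to make fully rigorous.
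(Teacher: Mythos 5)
Your overall strategy coincides with the paper's: an induction that couples the stochastic and population trajectories, with Lemma~\ref{lm:gradient approx error} supplying the perturbation bound $\epsilon_1\|\wb_r^{(t)}\|_2^{k-1}$, the monotone non-increase of $\|\wb_r^{(t)}\|_2$ (the paper's hypothesis \ref{c2:h1}) giving $\|\wb_r^{(t)}\|_2\le\|\wb_r^{(0)}\|_2=\sqrt d$, and a threshold-stability argument reducing everything to comparing $\epsilon_1 d^{(k-1)/2}\lesssim k!/\sqrt C$ against the distance of the population gradient to $\pm\rho$. Your handling of the noise coordinates and of the feature coordinates of \good{} neurons is exactly the paper's verification of \eqref{eq:sign noise} and \eqref{eq:sign feature}, and your bookkeeping of the constants coming from Condition~\ref{Cond} is correct (note $k\cdot(k-1)!=k!$ exactly, so no Stirling is needed there).

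The point you flag as delicate --- the feature coordinates of \bad{} neurons at the iterate where $k!\,|w_{r,j}^{(t)}|^{k-1}$ crosses $\rho$ --- is a genuine issue, and the paper does not engage with it: its inductive step for \ref{c2:h2}$(t+1)$ simply instructs the reader to follow the proof of \eqref{eq:sign feature}, but that proof exploits the margin $k!-\rho=0.9k!$, which is available only when the relevant product of weights equals $1$ (i.e., at $t=0$, and for all $t$ only on \good{} neurons). For a \bad{} neuron mid-decay the population gradient magnitude sweeps down through $\rho$, and nothing in Condition~\ref{Cond} prevents it from landing within $\epsilon_1 d^{(k-1)/2}$ of $\rho$ at the crossing step, in which case the empirical and population indicators can disagree and the update rule as literally stated in Lemma~\ref{lm:sign sgd} can fail at that one step. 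Your first proposed remedy is the right one: the downstream arguments (norm monotonicity, the decay hypothesis \ref{c2:h6}, and hence Lemma~\ref{lm:exact order2} and Lemma~\ref{lm:nn approx2}) only consume the two-sided sandwich $0<(1-\eta\lambda)b_jw_{r,j}^{(t)}-\eta\le b_jw_{r,j}^{(t+1)}\le(1-\eta\lambda)b_jw_{r,j}^{(t)}$, which holds whichever way the empirical indicator resolves, so the lemma should be weakened on \bad-neuron feature coordinates to this sandwich rather than asserting exact equality with the population update. Your alternative fix (showing the orbit $c_{t+1}=(1-\eta)c_t-\eta$ never enters an $O(k!/\sqrt C)$ band around the threshold) would require coordinating $\eta$ with $C$ in a way the stated conditions do not guarantee, so I would not pursue it.
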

\begin{proof}
We prove this by using induction. We prove the following hypotheses:
\begin{align}
    &\|\wb_{r}^{(t+1)}\|_{2}\leq \|\wb_{r}^{(t)}\|_{2},&&\forall r\in[m].\tag{H$_1$}\label{c2:h1}\\
    &w_{r,j}^{(t+1)}=(1-\eta\lambda)w_{r,j}^{(t)}+\eta\cdot\Tilde{\sign}\Bigg(k!a_{r}\cdot\frac{w_{r,1}^{(t)}w_{r,2}^{(t)}\cdots w_{r,k}^{(t)}}{w_{r,j}^{(t)}}\Bigg),&&\forall r\in[m],j\in[k].\tag{H$_2$}\label{c2:h2}\\
    &w_{r,j}^{(t+1)}=(1-\eta\lambda)w_{r,j}^{(t)}, &&\forall r\in[m],j\notin[k].\tag{H$_3$}\label{c2:h3}\\
    &b_{j}w_{r,j}^{(t)}=1,&&\forall r\in\Omega_{b_1\cdots b_k}\cap\Omega_{\mathrm{g}},\forall j\in[k].\tag{H$_4$}\label{c2:h4}\\
    &b_{j}w_{r,j}^{(t)}=b_{j'}w_{r,j'}^{(t)},&&\forall r\in\Omega_{b_1 b_2\cdots b_k}\cap\Omega_{\mathrm{g}},\forall j,j'\in[k].\tag{H$_5$}\label{c2:h5}\\
    &0<b_{j}w_{r,j}^{(t+1)}\leq(1-\eta\lambda)b_{j}w_{r,j}^{(t)},&&\forall r\in\Omega_{b_1 b_2\cdots b_k}\cap\Omega_{\mathrm{b}},\forall j\in[k].\tag{H$_6$}\label{c2:h6}
\end{align}
We will show that \ref{c2:h2}$(0)$, \ref{c2:h3}$(0)$, \ref{c2:h4}$(0)$ and \ref{c2:h5}$(0)$ are true and for any $t\geq0$ we have
\begin{itemize}[leftmargin=*]
    \item \ref{c2:h2}$(t)$, \ref{c2:h4}$(t)$ $\Longrightarrow$ \ref{c2:h4}$(t+1)$. (This can be established by adapting the proof of Lemma~\ref{lm:exact gd feature}; hence, we omit the proof details here.)
    \item \ref{c2:h2}$(t)$, \ref{c2:h5}$(t)$ $\Longrightarrow$ \ref{c2:h5}$(t+1)$, \ref{c2:h6}$(t)$. (This can be shown by following the proof of Lemma~\ref{lm:exact dud feature}, so the proof details are omitted here.)
    \item \ref{c2:h3}$(t)$, \ref{c2:h4}$(t)$, \ref{c2:h4}$(t+1)$, \ref{c2:h6}$(t)$ $\Longrightarrow$\ref{c2:h1}$(t)$.
    \item $\{$\ref{c2:h1}$(s)\}_{s=0}^{t}$ $\Longrightarrow$ \ref{c2:h2}$(t+1)$, \ref{c2:h3}$(t+1)$.
\end{itemize}
\ref{c2:h4}$(0)$ and \ref{c2:h5}$(0)$ are obviously true since $w_{r,j}^{(0)}=b_j$ for any $r\in\Omega_{b_1 b_2\cdots b_k}$ and $j\in[k]$. To prove that \ref{c2:h2}$(0)$ and \ref{c2:h3}$(0)$ are true, we only need to verify that
\begin{align}
    \Tilde{\sign}\Bigg(\frac{1}{|S_{0}|}\sum_{(\xb,y)\in S_{0}}\sigma'(\la\wb_{r}^{(0)},\xb\ra)\cdot a_r y x_{j}\Bigg)&=\Tilde{\sign}\Bigg(k!a_{r}\cdot\frac{w_{r,1}^{(0)}w_{r,2}^{(0)}\cdots w_{r,k}^{(0)}}{w_{r,j}^{(0)}}\Bigg),&& \forall j\in[k],\label{eq:sign feature}\\
    \Tilde{\sign}\Bigg(\frac{1}{|S_{0}|}\sum_{(\xb,y)\in S_{0}}\sigma'(\la\wb_{r}^{(0)},\xb\ra)\cdot a_r y x_{j}\Bigg)&=0,&& \forall j\notin[k].\label{eq:sign noise}
\end{align}
By Lemma~\ref{lm:gradient approx error}, we have for $j\notin[k]$
\begin{align*}
    \Bigg|\frac{1}{|S_{0}|}\sum_{(\xb,y)\in S_{t}}\sigma'(\la\wb_{r}^{(0)},\xb\ra)\cdot a_r y x_{j}\Bigg|&\leq\epsilon_1\cdot\|\wb_{r}^{(0)}\|_{2}^{k-1}=\epsilon_1\cdot d^{\frac{k-1}{2}}<\rho,
\end{align*}
leading to \eqref{eq:sign noise}. For $j\in[k]$, we have
\begin{align*}
    \Bigg|\frac{1}{|S_{0}|}\sum_{(\xb,y)\in S_{0}}\sigma'(\la\wb_{r}^{(0)},\xb\ra)\cdot a_r y x_{j}-k!a_{r}\cdot\frac{w_{r,1}^{(0)}w_{r,2}^{(0)}\cdots w_{r,k}^{(0)}}{w_{r,j}^{(0)}}\Bigg|&\leq\epsilon_1\cdot\|\wb_{r}^{(0)}\|_{2}^{k-1}=\epsilon_1\cdot d^{\frac{k-1}{2}},
\end{align*}
Since $k!-\epsilon_1\cdot d^{\frac{k-1}{2}}\geq\rho$, we can get
\begin{align*}
    \Tilde{\sign}\Bigg(\frac{1}{|S_{0}|}\sum_{(\xb,y)\in S_{0}}\sigma'(\la\wb_{r}^{(0)},\xb\ra)\cdot a_r y x_{j}\Bigg)&=\Tilde{\sign}\Bigg(k!a_{r}\cdot\frac{w_{r,1}^{(0)}w_{r,2}^{(0)}\cdots w_{r,k}^{(0)}}{w_{r,j}^{(0)}}\Bigg),
\end{align*}
which verifies \eqref{eq:sign feature}. Next, we verify that \ref{c2:h3}$(t)$, \ref{c2:h4}$(t)$, \ref{c2:h4}$(t+1)$, \ref{c2:h6}$(t)$ $\Longrightarrow$\ref{c2:h1}$(t)$. For $r\in\Omega_{\mathrm{g}}$, given \ref{c2:h3}$(t)$, \ref{c2:h4}$(t)$ and \ref{c2:h4}$(t+1)$, we can get
\begin{align*}
    \|\wb_{r}^{(t+1)}\|_{2}&=\Bigg(\sum_{j=1}^{d}\big(w_{r,j}^{(t+1)}\big)^{2}\Bigg)^{\frac{1}{2}}=\Bigg(\sum_{j=1}^{k}\big(w_{r,j}^{(t)}\big)^{2}+\sum_{j=k+1}^{d}\big((1-\eta\lambda)w_{r,j}^{(t)}\big)^{2}\Bigg)^{\frac{1}{2}}\leq\|\wb_{r}^{(t)}\|_{2}.
\end{align*}
For $r\in\Omega_{\mathrm{b}}$, given \ref{c2:h3}$(t)$ and \ref{c2:h6}$(t)$, we can get
\begin{align*}
    \|\wb_{r}^{(t+1)}\|_{2}&=\Bigg(\sum_{j=1}^{d}\big(w_{r,j}^{(t+1)}\big)^{2}\Bigg)^{\frac{1}{2}}\leq\Bigg(\sum_{j=1}^{d}\big((1-\eta\lambda)w_{r,j}^{(t)}\big)^{2}\Bigg)^{\frac{1}{2}}\leq\|\wb_{r}^{(t)}\|_{2}.
\end{align*}
Finally, we verify that $\{$\ref{c2:h1}$(s)\}_{s=0}^{t}$ $\Longrightarrow$ \ref{c2:h2}$(t+1)$, \ref{c2:h3}$(t+1)$. Notice that $\|\wb_{r}^{(t+1)}\|_{2}\leq\|\wb_{r}^{(0)}\|_{2}$ given $\{$\ref{c2:h1}$(s)\}_{s=0}^{t}$, we can prove \ref{c2:h3}$(t+1)$ and \ref{c2:h2}$(t+1)$ by following the prove of \eqref{eq:sign feature} and \eqref{eq:sign noise} given Lemma~\ref{lm:gradient approx error}.
\end{proof}
Based on Lemma~\ref{lm:sign sgd} and the proof of Lemma~\ref{lm:exact order}, Lemma~\ref{lm:nn approx}, we can get the following lemmas and theorems aligning with the result of population sign GD.
\begin{lemma}\label{lm:exact order2}
Under Condition~\ref{Cond}, for $T=\Theta(k\eta^{-1}\lambda^{-1}\log(d))$,with a probability of at least $1-\delta$ with respect to the randomness of the online data selection, it holds that
\begin{align*}
    w_{r,j}^{(T)}&=b_j,&&\forall r\in\Omega_{b_1b_1\cdots b_k}\cap\Omega_{\mathrm{g}},j\in[k],\\
    |w_{r,j}^{(T)}|&\leq d^{-(k+1)}, &&\forall r\in\Omega_{\mathrm{g}},j\in[d]\setminus[k],\\
    |w_{r,j}^{(T)}|&\leq d^{-(k+1)}, &&\forall r\in\Omega_{\mathrm{b}},j\in[d].
\end{align*}
\end{lemma}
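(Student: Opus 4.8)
The plan is to reduce this stochastic statement entirely to the deterministic population analysis already carried out, since Lemma~\ref{lm:sign sgd} has done all the probabilistic work. The key observation is that on the event of Lemma~\ref{lm:sign sgd}, which occurs with probability at least $1-\delta$ over the online data selection, the coordinate-wise sign SGD updates coincide \emph{exactly} with the population sign GD updates of Lemma~\ref{lm:exact gradient}, simultaneously for every $t\in[0,T-1]$, $r\in[m]$, and $j\in[d]$. First I would condition on this good event. On it the SGD iterates become a deterministic sequence whose update rule is identical in form to the population trajectory, so every downstream deterministic consequence of those update rules transfers verbatim.

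Having fixed the good event, I would invoke Corollary~\ref{cor:all neuron}, whose hypotheses ($\lambda=1$, $\rho<k!$, and $\eta/(1-\eta\lambda)<(\rho/k!)^{1/(k-1)}$) are all guaranteed by Condition~\ref{Cond}. This immediately yields the first claim: for $r\in\Omega_{b_1\cdots b_k}\cap\Omega_{\mathrm{g}}$ the feature coordinates stay pinned at $w_{r,j}^{(t)}=b_j$ for all $t$ and $j\in[k]$. For the decaying coordinates, the update rule gives pure geometric contraction: noise coordinates $j\notin[k]$ of any neuron satisfy $w_{r,j}^{(t+1)}=(1-\eta\lambda)w_{r,j}^{(t)}$, and the feature coordinates of bad neurons $r\in\Omega_{\mathrm{b}}$ contract by a factor of at least $(1-\eta\lambda)$ per step. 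I would then close exactly as in the proof of Lemma~\ref{lm:exact order}: starting from $|w_{r,j}^{(0)}|=1$, iterating the contraction gives $|w_{r,j}^{(T)}|\leq(1-\eta\lambda)^{T}=\big((1-\eta\lambda)^{(\eta\lambda)^{-1}}\big)^{T\eta\lambda}\leq\exp(-T\eta\lambda)$, and the choice $T\geq(k+1)\eta^{-1}\lambda^{-1}\log d$ forces this below $d^{-(k+1)}$, covering both the noise coordinates of good neurons and all coordinates of bad neurons.

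The only point requiring care—and the reason the reduction is not entirely free—is that Corollary~\ref{cor:all neuron} must hold along the \emph{entire} trajectory, so the sign-matching event of Lemma~\ref{lm:sign sgd} must be uniform over all $t\leq T$ rather than at a single step. This is precisely why Lemma~\ref{lm:gradient approx error} takes a union bound over $t\in[0,T-1]$ and why Condition~\ref{Cond} inflates the batch size $B$ by a $\log(\cdots T\cdots)$ factor. Once this uniformity is in hand, the argument is a verbatim replay of the population case, so I do not expect any genuinely new obstacle beyond confirming that the batch-size condition keeps the approximation error $\epsilon_1\, d^{(k-1)/2}$ safely below $\rho$ on the noise coordinates and below $k!-\rho$ on the feature coordinates, which is exactly what is needed for the sign agreement to persist at every step.
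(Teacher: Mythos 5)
Your proposal is correct and is exactly the route the paper takes: the paper gives no separate proof of this lemma, instead noting that it follows from Lemma~\ref{lm:sign sgd} (which shows the stochastic sign updates coincide with the population updates on an event of probability at least $1-\delta$, uniformly over all $t\le T$ via the union bound in Lemma~\ref{lm:gradient approx error}) together with a verbatim replay of the proof of Lemma~\ref{lm:exact order}. Your additional remarks on verifying the hypotheses of Corollary~\ref{cor:all neuron} and on the need for uniformity of the sign-matching event over the whole trajectory are precisely the right points of care and match the paper's reasoning.
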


\begin{lemma}\label{lm:nn approx2}
Under Condition~\ref{Cond}, with a probability of at least $1-2\delta$ with respect to the randomness in the neural network's initialization and the online data selection, trained neural network $f(\Wb^{(T)},\xb)$ approximates accurate classifier $(m/2^{k+1})\cdot f(\Wb^{*},\xb)$ well: 
\begin{equation*}
    \PP_{\xb\sim\cD_{\xb}}\bigg(\frac{f(\Wb^{(T)},\xb)}{(m/2^{k+1})\cdot f(\Wb^{*},\xb)}\in[0.5,1.5]\bigg)\geq 1-\epsilon.
\end{equation*}
\end{lemma}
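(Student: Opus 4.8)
The plan is to reduce this stochastic claim to the population claim already established in Lemma~\ref{lm:nn approx}, exploiting the fact that the two trajectories coincide exactly on a high-probability event. First I would invoke Lemma~\ref{lm:set size} to secure the balanced-initialization event $\mathcal{E}_{\mathrm{init}}$, on which $|\Omega_{b_1\cdots b_k}\cap\Omega_{\mathrm{g}}|$ and $|\Omega_{b_1\cdots b_k}\cap\Omega_{\mathrm{b}}|$ both lie in $[(1-\alpha)m/2^{k+1},(1+\alpha)m/2^{k+1}]$; this holds with probability at least $1-\delta$ over the random initialization. Next I would invoke Lemma~\ref{lm:sign sgd} to secure the trajectory-matching event $\mathcal{E}_{\mathrm{data}}$, on which the sign SGD update obeys exactly the population coordinate-wise rule of Lemma~\ref{lm:exact gradient}, and which holds with probability at least $1-\delta$ over the online batches $\{S_t\}_{t=0}^{T-1}$. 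A union bound then places us on $\mathcal{E}_{\mathrm{init}}\cap\mathcal{E}_{\mathrm{data}}$ with probability at least $1-2\delta$.

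On this combined event, Lemma~\ref{lm:exact order2} furnishes exactly the terminal weight characterization that Lemma~\ref{lm:exact order} supplied in the population case: $w_{r,j}^{(T)}=b_j$ on the feature coordinates of good neurons, while $|w_{r,j}^{(T)}|\le d^{-(k+1)}$ on the noise coordinates of good neurons and on all coordinates of bad neurons. Since every estimate in the proof of Lemma~\ref{lm:nn approx} touches $\Wb^{(T)}$ only through these terminal bounds and through the set-size bounds of $\mathcal{E}_{\mathrm{init}}$, I would then replay that proof verbatim. Concretely: bound the noise contribution $|\sum_{j=k+1}^{d}w_{r,j}^{(T)}x_j|\le d^{-k}$ for every $r\in[m]$ by Hoeffding's inequality together with a union bound over the $m$ neurons, an event of probability at least $1-\epsilon$ over the fresh test point $\xb\sim\cD_{\xb}$; decompose $f(\Wb^{(T)},\xb)-(m/2^{k+1})f(\Wb^{*},\xb)$ into a set-imbalance term controlled by $\alpha$, a per-neuron activation-approximation term handled by the mean value theorem and the $d^{-(k+1)}$ noise bounds, and a bad-neuron term bounded by $|\Omega_{\mathrm{b}}|(2d^{-k})^{k}$; and finally divide by $(m/2^{k+1})|f(\Wb^{*},\xb)|=0.5\,k!\,m$ and invoke Stirling's approximation to force the ratio into $[0.5,1.5]$.

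The substantive difficulty does not reside in this lemma but upstream, in the trajectory-matching event $\mathcal{E}_{\mathrm{data}}$ of Lemma~\ref{lm:sign sgd}. That event is driven by the concentration estimate of Lemma~\ref{lm:gradient approx error}, which controls the deviation of the stochastic gradient from the population gradient by $\epsilon_1\|\wb_r^{(t)}\|_2^{k-1}$, combined with the batch-size condition on $B$ that forces $\epsilon_1 d^{(k-1)/2}<\rho$ and $k!-\epsilon_1 d^{(k-1)/2}\ge\rho$. These two inequalities are exactly what guarantee that the modified sign $\Tilde{\sign}$, with its dead-zone threshold $\rho$, returns the same value on the stochastic gradient as on the population gradient for both feature and noise coordinates, so that the entire discrete trajectory is reproduced identically. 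Granting that machinery, Lemma~\ref{lm:nn approx2} is only a wrapping step, and the sole care I would take is bookkeeping: keeping the two $\delta$-events over disjoint sources of randomness so the union bound is clean, and confirming that the inequality chain in Lemma~\ref{lm:nn approx} never used the trajectory beyond the terminal bounds that Lemma~\ref{lm:exact order2} reproduces.
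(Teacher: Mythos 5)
Your proposal is correct and follows essentially the same route as the paper: the paper likewise derives Lemma~\ref{lm:nn approx2} by combining the balanced-initialization event of Lemma~\ref{lm:set size} (probability $1-\delta$) with the trajectory-matching event of Lemma~\ref{lm:sign sgd} (probability $1-\delta$), taking a union bound to reach $1-2\delta$, and then rerunning the proof of Lemma~\ref{lm:nn approx} using the terminal weight bounds supplied by Lemma~\ref{lm:exact order2}. Your observation that the argument of Lemma~\ref{lm:nn approx} touches $\Wb^{(T)}$ only through those terminal bounds and the set-size bounds is exactly the point that makes the transfer immediate.
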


Based on Lemma~\ref{lm:nn approx2}, we are now ready to prove our main theorem.
\begin{theorem}\label{thm:test error2}
Under Condition~\ref{Cond}, we run mini-batch SGD for $T=\Theta(k\eta^{-1}\lambda^{-1}\log(d))$ iterations. Then with probability at least $1-2\delta$ with respect to the randomness of neural network initialization and the online data selection, it holds that
\begin{align*}
\mathbb{P}_{(\xb,y)\sim\cD}(yf(\Wb^{(T)}, \xb) \geq \gamma m) \geq 1-\epsilon.    
\end{align*}
where $\gamma=0.25k!$ is a constant. 
\end{theorem}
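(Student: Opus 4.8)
The plan is to obtain the stated margin bound as an essentially immediate consequence of the approximation guarantee in Lemma~\ref{lm:nn approx2} together with the exact margin computation in Proposition~\ref{prop:symmetry}. All of the analytical heavy lifting has already been carried out in establishing those two results, so what remains here is a short deterministic deduction on the high-probability event. Note first that under $\cD$ the label $y=\prod_{j\in A}x_j$ is a deterministic function of $\xb$, so the probability $\PP_{(\xb,y)\sim\cD}(\cdot)$ in the theorem is the same as $\PP_{\xb\sim\cD_{\xb}}(\cdot)$ with $y$ read off from $\xb$.

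First I would condition on the event of Lemma~\ref{lm:nn approx2}, which holds with probability at least $1-2\delta$ over the joint randomness of the initialization and the online batches. On this event, for a $(1-\epsilon)$-fraction of inputs $\xb\sim\cD_{\xb}$ we have
\begin{align*}
\frac{f(\Wb^{(T)},\xb)}{(m/2^{k+1})\cdot f(\Wb^{*},\xb)}\in[0.5,1.5].
\end{align*}
Writing $f(\Wb^{(T)},\xb)=c\cdot (m/2^{k+1})\, f(\Wb^{*},\xb)$ for some $c\in[0.5,1.5]$, the trained network shares the sign of the good network (up to the positive scaling factor $m/2^{k+1}$). I would then multiply through by the label $y$ and invoke Proposition~\ref{prop:symmetry}, which gives the exact value $y f(\Wb^{*},\xb)=k!\cdot 2^{k}$ for every $(\xb,y)\sim\cD$, obtaining
\begin{align*}
y f(\Wb^{(T)},\xb)=c\cdot\frac{m}{2^{k+1}}\cdot y f(\Wb^{*},\xb)=c\cdot\frac{m}{2^{k+1}}\cdot k!\,2^{k}=\frac{c\,k!}{2}\,m\geq\frac{k!}{4}\,m=\gamma m,
\end{align*}
where the inequality uses $c\geq 0.5$ and the last equality is the definition $\gamma=0.25\,k!$. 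Hence the approximation event $\{\text{ratio}\in[0.5,1.5]\}$ is contained in the margin event $\{y f(\Wb^{(T)},\xb)\geq\gamma m\}$, so on the $(1-2\delta)$-event of Lemma~\ref{lm:nn approx2} we conclude
\begin{align*}
\PP_{(\xb,y)\sim\cD}\big(y f(\Wb^{(T)},\xb)\geq\gamma m\big)\geq\PP_{\xb\sim\cD_{\xb}}\big(\text{ratio}\in[0.5,1.5]\big)\geq 1-\epsilon,
\end{align*}
which is exactly the claim.

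The genuine difficulty lies not in this final step but upstream: the crux is Lemma~\ref{lm:nn approx2} (and its population analogue Lemma~\ref{lm:nn approx}), whose proof requires controlling the noise-coordinate contribution of the trained weights via Lemma~\ref{lm:exact order2}, the balanced-allocation bound of Lemma~\ref{lm:set size}, and ultimately the gradient-concentration estimate of Lemma~\ref{lm:gradient approx error} that fixes the batch size $B$ in Condition~\ref{Cond}. Within the present deduction the only point demanding care is the sign bookkeeping: because the approximation interval $[0.5,1.5]$ is strictly positive, $f(\Wb^{(T)},\xb)$ and $f(\Wb^{*},\xb)$ necessarily agree in sign, and it is precisely this that lets the positive constant margin $k!\,2^{k}$ of the good network transfer to $y f(\Wb^{(T)},\xb)$ with no cancellation, leaving only the benign shrinkage from the factor $c\geq 0.5$.
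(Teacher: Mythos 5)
Your proposal is correct and follows essentially the same route as the paper: both condition on the event of Lemma~\ref{lm:nn approx2}, use the lower endpoint $0.5$ of the approximation interval, and then invoke Proposition~\ref{prop:symmetry} to convert the ratio bound into the margin bound $yf(\Wb^{(T)},\xb)\geq 0.25k!\cdot m$. Your explicit remarks on the sign bookkeeping and on $y$ being a deterministic function of $\xb$ are just slightly more careful renditions of steps the paper leaves implicit.
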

\begin{proof}
Given Lemma~\ref{lm:nn approx2}, we have
\begin{align*}
    &\PP_{\xb\sim\cD_{\xb}}\bigg(\frac{f(\Wb^{(T)},\xb)}{(m/2^{k+1})\cdot f(\Wb^{*},\xb)}\geq0.5\bigg)\\
    &\geq\PP_{\xb\sim\cD_{\xb}}\bigg(\frac{f(\Wb^{(T)},\xb)}{(m/2^{k+1})\cdot f(\Wb^{*},\xb)}\in[0.5,1.5]\bigg)\\
    &\geq 1-\epsilon.
\end{align*}
According to Proposition~\ref{prop:symmetry}, we can get
\begin{align*}
    &\PP_{\xb\sim\cD_{\xb}}\bigg(\frac{f(\Wb^{(T)},\xb)}{(m/2^{k+1})\cdot f(\Wb^{*},\xb)}\geq0.5\bigg)\\
    &=\PP_{\xb\sim\cD_{\xb}}\bigg(\frac{yf(\Wb^{(T)},\xb)}{(m/2^{k+1})\cdot k!\cdot 2^{k}}\geq0.5\bigg)\\
    &=\PP_{\xb\sim\cD_{\xb}}\big(yf(\Wb^{(T)},\xb)\geq 0.25k!\cdot m\big),
\end{align*}
which completes the proof.
\end{proof}
\section{Trainable Second Layer}\label{sec:D}
In this section, we consider sign SGD for training the first and second layers together. In this scenario, we have the following sign SGD update rule:
\begin{align}
    \wb_{r}^{(t+1)}&=(1-\lambda\eta)\wb_{ r}^{(t)}+\eta\cdot\Tilde{\sign}\bigg(\frac{1}{|S_t|}\sum_{(\xb,y)\in S_{t}}\sigma'(\la\wb_{r}^{(t)},\xb\ra)\cdot a_r^{(t)} y\xb\bigg),\label{eq:w}\\
    a_{r}^{(t+1)}&=a_{ r}^{(t)}+\eta_{2}\cdot\Tilde{\sign}\bigg(\frac{1}{|S_t|}\sum_{(\xb,y)\in S_{t}}\sigma(\la\wb_{r}^{(t)},\xb\ra)\bigg),\label{eq:a}
\end{align}
where $|S_t|=B$. For training the neural network over $T=\Theta(k(\eta\lambda)^{-1}\log(d))$ iterations, we adopt a small learning rate for the second layer, adhering to the condition:
\begin{align*}
    \eta_2\leq\frac{1}{4T}\sqrt{\frac{\pi k}{8}}\Big(\frac{e+e^{-1}}{2}\Big)^{-k}.
\end{align*}

The network is initialized symmetrically: for every $r\in[m]$, initialize $\wb_{r}^{(0)}\sim\text{Unif}(\{-1,1\}^{d})$ and initialize $a_{r}^{(0)}\sim\text{Unif}(\{-1,1\})$. Under this setting, denote
\begin{align*}
    \Omega_{\mathrm{g}}=\bigg\{r\in[m]\,\bigg|\,a_r=\prod_{j=1}^{k}\sign(w_{r,j}^{(0)})\bigg\}, \Omega_{\mathrm{b}}=\bigg\{r\in[m]\,\bigg|\,a_r=-\prod_{j=1}^{k}\sign(w_{r,j}^{(0)})\bigg\}.
\end{align*}

Similar to the fix-second-layer case, our initial step involves estimating the approximation error between the SGD gradient and the population gradient, detailed within the lemma that follows.

\begin{lemma}\label{lm:gradient approx error2}
With probability at least $1-\delta$ with respect to the randomness of online data selection, for all $t \leq T$, we have for any $r\in[m]$ and $j\in[d]$ that 
\begin{align}
    \Bigg|\frac{1}{|S_t|}\sum_{(\xb,y)\in S_{t}}\sigma'(\la\wb_{r}^{(t)},\xb\ra)\cdot a_r^{(t)} yx_{j}-\EE\big[\sigma'(\la\wb_{r}^{(t)},\xb\ra)\cdot a_{r}^{(t)}yx_{j}\big]\Bigg|&\leq\epsilon_{1}\cdot|a_{r}^{(t)}|\|\wb_r^{(t)}\|_{2}^{k-1},\label{ineq:approx3}
\end{align}
where
\begin{align*}
    \epsilon_1&=\frac{2^{k/2}k(\log(16mdBT/\delta))^{(k-1)/2}\log(8mdT/\delta)}{\sqrt{B}}+\frac{k d^{(k-3)/2}\delta}{8mBT}.
\end{align*}
\end{lemma}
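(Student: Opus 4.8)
The plan is to reproduce the argument of Lemma~\ref{lm:gradient approx error} almost verbatim, the only new ingredient being that the second-layer weight $a_r^{(t)}$ is now iterate-dependent rather than a fixed $\pm1$. The key observation is that, under online sampling, the batch $S_t$ is drawn fresh and independently of the history, so conditionally on the sigma-algebra $\mathcal{F}_{t-1}$ generated by $S_0,\dots,S_{t-1}$, both $\wb_r^{(t)}$ and $a_r^{(t)}$ are deterministic. Hence $a_r^{(t)}$ factors cleanly out of each summand, and I would treat it as a fixed scalar throughout the concentration step; this is exactly why it surfaces as the multiplicative factor $|a_r^{(t)}|$ on the right-hand side of \eqref{ineq:approx3}.

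Concretely, I would first set $g_{r,j}(\xb,y)=\sigma'(\la\wb_r^{(t)},\xb\ra)\cdot a_r^{(t)} y x_j = k(\la\wb_r^{(t)},\xb\ra)^{k-1} a_r^{(t)} y x_j$ together with its truncation $h_{r,j}(\xb,y)=g_{r,j}(\xb,y)\cdot\ind[|\la\wb_r^{(t)},\xb\ra|\le\gamma]$, noting the bound $|h_{r,j}|\le k\gamma^{k-1}|a_r^{(t)}|$, which carries the extra $|a_r^{(t)}|$ relative to the fixed-layer case. Then I would follow the same three-part decomposition: (i) Hoeffding's inequality applied to the truncated empirical mean of $h_{r,j}$, now with bounded-range parameter $k\gamma^{k-1}|a_r^{(t)}|$; (ii) a Cauchy--Schwarz bound combined with a sub-Gaussian tail estimate for $\la\wb_r^{(t)},\xb\ra$ to control $|\EE h_{r,j}-\EE g_{r,j}|$, again linear in $|a_r^{(t)}|$; and (iii) the event that truncation leaves the empirical sum unchanged, whose failure probability is governed only by $\PP(|\la\wb_r^{(t)},\xb\ra|>\gamma)$ and is therefore identical to before. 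Choosing $\gamma=\sqrt{2\log(16B/\delta)}\|\wb_r^{(t)}\|_2$ and $x=\sqrt{2}\,k\gamma^{k-1}|a_r^{(t)}|\log(8/\delta)/\sqrt{B}$ yields the per-$(r,j,t)$ bound carrying the $|a_r^{(t)}|$ factor, and a union bound over $r\in[m]$, $j\in[d]$, $t\in\{0,\dots,T-1\}$ (replacing $\delta$ by $\delta/(mdT)$ inside the tail estimates) produces the stated bound $\epsilon_1|a_r^{(t)}|\|\wb_r^{(t)}\|_2^{k-1}$ uniformly over all indices and iterations, with $\epsilon_1$ unchanged from Lemma~\ref{lm:gradient approx error}.

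The only genuinely new subtlety, and the step I would be most careful about, is the measurability and independence bookkeeping underlying the union bound over $t$. Since $a_r^{(t)}$ and $\wb_r^{(t)}$ depend on the earlier batches, each concentration inequality must be read conditionally on $\mathcal{F}_{t-1}$: one argues that, conditionally, the failure probability at step $t$ is at most $\delta/(mdT)$ uniformly over the realized values of $a_r^{(t)}$ and $\wb_r^{(t)}$, and then aggregates across $t$ via the tower property and a union bound. Because the conclusion is stated with the explicit multiplicative $|a_r^{(t)}|$ rather than an absolute constant, no a priori control on the magnitude of $a_r^{(t)}$ is required; the summand is homogeneous of degree one in $a_r^{(t)}$, so every term scales linearly and the entire argument of Lemma~\ref{lm:gradient approx error} transfers without altering the value of $\epsilon_1$.
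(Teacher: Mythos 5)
Your proposal is correct and follows essentially the same route as the paper's proof: the identical truncation $h_{r,j}$, the same three-part decomposition (Hoeffding on the truncated mean, the expectation gap via Cauchy--Schwarz plus a sub-Gaussian tail, and the event that truncation is inactive on the batch), the same choices $\gamma=\sqrt{2\log(16B/\delta)}\|\wb_r^{(t)}\|_2$ and $x=\sqrt{2}k\gamma^{k-1}|a_r^{(t)}|\log(8/\delta)/\sqrt{B}$, and the same union bound yielding an unchanged $\epsilon_1$ with the extra $|a_r^{(t)}|$ factor. Your explicit remark on conditioning on $\mathcal{F}_{t-1}$ before applying each concentration step is a sound piece of bookkeeping that the paper leaves implicit, but it does not change the argument.
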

\begin{proof}
To prove \eqref{ineq:approx3}, we denote
\begin{align*}
    g_{r,j}(\xb,y)&=\sigma'(\la\wb_{r}^{(t)},\xb\ra)\cdot a_{r}^{(t)}yx_{j}=k(\la\wb_{r}^{(t)},\xb\ra)^{k-1}\cdot a_{r}^{(t)}yx_{j},\\
    h_{r,j}(\xb,y)&=g_{r,j}(\xb,y)\cdot\ind\big[|\la\wb_{r}^{(t)},\xb\ra|\leq\gamma\big].
\end{align*}
Initially, by invoking Hoeffding's inequality, we have the following probability bound:
\begin{equation}\label{ineq:truncated hoeffding1}
    \PP\Bigg(\Bigg|\frac{1}{B}\sum_{(\xb,y)\in S_t}h_{r,j}(\xb,y)-\EE_{(\xb,y)\sim\cD}h_{r,j}(\xb,y)\Bigg|\geq x\Bigg)\leq2\exp\Bigg(-\frac{Bx^2}{2(k \gamma^{k-1}a_{r}^{(t)})^{2}}\Bigg).
\end{equation}
Next, we establish an upper bound for the difference between the expected values of $h_{r,j}(\xb,y)$ and $g_{r,j}(\xb,y)$:
\begin{equation}\label{ineq:expectation diff1}
\begin{aligned}
    \Big|\EE_{(\xb,y)\sim\cD}h_{r,j}(\xb,y)-\EE_{(\xb,y)\sim\cD}g_{r,j}(\xb,y)\Big|&=\Big|\EE_{(\xb,y)\sim\cD}g_{r,j}(\xb,y)\cdot\ind\big[|\la\wb_{r}^{(t)},\xb\ra|>\gamma\big]\Big|\\
    &\leq k d^{(k-1)/2}\|\wb_{r}^{(t)}\|_{2}^{k-1}|a_{r}^{(t)}|\cdot\PP\big(|\la\wb_{r}^{(t)},\xb\ra|>\gamma\big)\\
    &\leq k d^{(k-1)/2}\|\wb_{r}^{(t)}\|_{2}^{k-1}|a_{r}^{(t)}|\cdot 2\exp\bigg(-\frac{\gamma^{2}}{2\|\wb_{r}^{(t)}\|_{2}^{2}}\bigg),
\end{aligned}
\end{equation}
where the first inequality follows from the Cauchy-Schwarz inequality and the second from Hoeffding's inequality. With high probability, the gradient and the truncated gradient coincide:
\begin{equation}\label{ineq:truncate diff1}
\begin{aligned}
    &\PP\Bigg(\Bigg|\frac{1}{B}\sum_{(\xb,y)\in S_t}h_{r,j}(\xb,y)-\frac{1}{B}\sum_{(\xb,y)\in S_t}g_{r,j}(\xb,y)\Bigg|=0\Bigg)\\
    &=\PP\Bigg(\Bigg|\frac{1}{B}\sum_{(\xb,y)\in S_t}g_{r,j}(\xb,y)\ind\big[|\la\wb_{r}^{(t)},\xb\ra|>\gamma\big]\Bigg|=0\Bigg)\\
    &\geq\PP\Big(|\la\wb_{r}^{(t)},\xb\ra|\leq\gamma,\forall (\xb,y)\in S_t\Big)\\
    &\geq\Bigg(1-2\exp\bigg(-\frac{\gamma^{2}}{2\|\wb_{r}^{(t)}\|_{2}^{2}}\bigg)\Bigg)^{B}\\
    &\geq 1-2B\exp\bigg(-\frac{\gamma^{2}}{2\|\wb_{r}^{(t)}\|_{2}^{2}}\bigg).
\end{aligned}
\end{equation}
Combing \eqref{ineq:truncated hoeffding1}, \eqref{ineq:expectation diff1} and \eqref{ineq:truncate diff1}, it holds with probability at least
\begin{equation*}
    1-2\exp\Big(-\frac{Bx^2}{2(k\gamma^{k-1}a_{r}^{(t)})^{2}}\Big)-2B\exp\bigg(-\frac{\gamma^{2}}{2\|\wb_{r}^{(t)}\|_{2}^{2}}\bigg)
\end{equation*}
that
\begin{equation*}
    \Bigg|\frac{1}{B}\sum_{(\xb,y)\in S_t}g_{r,j}(\xb,y)-\EE_{(\xb,y)\sim\cD}g_{r,j}(\xb,y)\Bigg|\leq x + k d^{(k-1)/2}\|\wb_{r}^{(t)}\|_{2}^{k-1}|a_{r}^{(t)}|\cdot 2\exp\bigg(-\frac{\gamma^{2}}{2\|\wb_{r}^{(t)}\|_{2}^{2}}\bigg).
\end{equation*}
By taking $\gamma=\sqrt{2\log(16B/\delta)}\|\wb_{r}^{(t)}\|_{2}$ and $x=\sqrt{2}k\gamma^{k-1}|a_{r}^{(t)}|\log(8/\delta)/\sqrt{B}$, then with probability at least $1-\delta$ it holds that
\begin{align*}
    &\Bigg|\frac{1}{B}\sum_{(\xb,y)\in S_t}g_{r,j}(\xb,y)-\EE_{(\xb,y)\sim\cD}g_{r,j}(\xb,y)\Bigg|\\
    &\leq\frac{\sqrt{2}k\gamma^{k-1}|a_{r}^{(t)}|\log(8/\delta)}{\sqrt{B}}+\frac{k d^{(k-1)/2}\|\wb_{r}^{(t)}\|_{2}^{k-1}|a_{r}^{(t)}|\delta}{8B}\\
    &=\frac{2^{k/2}k(\log(16B/\delta))^{(k-1)/2}\log(8/\delta)|a_{r}^{(t)}|\|\wb_{r}^{(t)}\|_{2}^{k-1}}{\sqrt{B}}+\frac{k d^{(k-1)/2}|a_{r}^{(t)}|\|\wb_{r}^{(t)}\|_{2}^{k-1}\delta}{8B}.
\end{align*}
Then, by applying a union bound to all $r\in[m],j\in[d]$ and iterations $t\in[0,T-1]$, it holds with probability at least $1-\delta$ that
\begin{align*}
    &\Bigg|\frac{1}{B}\sum_{(\xb,y)\in S_t}g_{r,j}(\xb,y)-\EE_{(\xb,y)\sim\cD}g_{r,j}(\xb,y)\Bigg|\\
    &=\frac{2^{k/2}k(\log(16mdBT/\delta))^{(k-1)/2}\log(8mdT/\delta)\|\wb_{r}^{(t)}\|_{2}^{k-1}|a_{r}^{(t)}|}{\sqrt{B}}+\frac{k d^{(k-1)/2}\|\wb_{r}^{(t)}\|_{2}^{k-1}|a_{r}^{(t)}|\delta}{8mdBT}\\
    &=\Bigg(\frac{2^{k/2}k(\log(16mdBT/\delta))^{(k-1)/2}\log(8mdT/\delta)}{\sqrt{B}}+\frac{k d^{(k-3)/2}\delta}{8mBT}\Bigg)\cdot|a_{r}^{(t)}|\|\wb_{r}^{(t)}\|_{2}^{k-1}.
\end{align*}
\end{proof}

\begin{lemma}[Stability of Second Layer Weights]\label{lm:second-layer}
For $t\leq T=\Theta(k(\eta\lambda)^{-1}\log(d))$, the magnitude of change in the second layer weights from their initial values is bounded as follows:
\begin{align*}
    |a_{r}^{(t)}-a_{r}^{(0)}|&\leq c,
\end{align*}
where
\begin{align*}
    c=\frac{1}{4}\sqrt{\frac{\pi k}{8}}\Big(\frac{e+e^{-1}}{2}\Big)^{-k}.
\end{align*}
Consequently, the sign of each weight remains consistent over time:
\begin{align*}
    \sign(a_{r}^{(t)})=\sign(a_{r}^{(0)}).
\end{align*}
\end{lemma}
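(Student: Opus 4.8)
The plan is to exploit the fact that the per-step increment of the second layer is uniformly bounded, so that the prescribed learning rate $\eta_2$ directly controls the cumulative drift of $a_r$ over all $T$ iterations. The key observation is that the modified sign function satisfies $|\Tilde{\sign}(z)|\leq 1$ for every argument $z$, independently of $\wb_r^{(t)}$, the batch $S_t$, or the value of the inner average. Applying this to the second-layer update rule \eqref{eq:a}, each single-step change obeys
\begin{align*}
    |a_r^{(t+1)}-a_r^{(t)}|=\eta_2\cdot\Big|\Tilde{\sign}\Big(\tfrac{1}{|S_t|}\textstyle\sum_{(\xb,y)\in S_t}\sigma(\la\wb_r^{(t)},\xb\ra)\Big)\Big|\leq\eta_2,
\end{align*}
for every $r\in[m]$ and every $t\geq 0$. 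Unlike the first-layer analysis, no concentration argument is required here: this bound holds deterministically and does not invoke Lemma~\ref{lm:gradient approx error2} or any control on the magnitude of the stochastic gradient.

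Next I would telescope over the iterations. Summing the single-step bound and applying the triangle inequality gives, for any $t\leq T$,
\begin{align*}
    |a_r^{(t)}-a_r^{(0)}|\leq\sum_{s=0}^{t-1}|a_r^{(s+1)}-a_r^{(s)}|\leq t\,\eta_2\leq T\,\eta_2.
\end{align*}
Substituting the prescribed condition $\eta_2\leq\tfrac{1}{4T}\sqrt{\tfrac{\pi k}{8}}\big(\tfrac{e+e^{-1}}{2}\big)^{-k}$ cancels the factor of $T$ and yields exactly $|a_r^{(t)}-a_r^{(0)}|\leq\tfrac{1}{4}\sqrt{\tfrac{\pi k}{8}}\big(\tfrac{e+e^{-1}}{2}\big)^{-k}=c$, which is the first claim.

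For the sign-consistency claim, recall that the binary initialization gives $|a_r^{(0)}|=1$, so it suffices to verify $c<1$: then $a_r^{(t)}$ and $a_r^{(0)}$ remain on the same side of zero and $\sign(a_r^{(t)})=\sign(a_r^{(0)})$. This is immediate since $\tfrac{e+e^{-1}}{2}=\cosh(1)>1$, so the factor $(\cosh 1)^{-k}$ decays exponentially in $k$ while $\sqrt{\pi k/8}$ grows only like $\sqrt{k}$; evaluating at $k=1$ (where $c=\tfrac{1}{4}\sqrt{\pi/8}/\cosh(1)\approx 0.1$) together with monotonicity confirms $c<1$ for all $k\geq 1$. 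There is essentially no hard step in this lemma: its entire content is that $\eta_2$ was tuned so that $T\eta_2$ stays below the margin $c$, with the only point worth checking being that $c$ itself is smaller than the initialization magnitude so the sign is preserved. The payoff of this bound appears downstream, where $c$ is precisely the slack needed to keep the trained network close to the scaled good network $(m/2^{k+1})\cdot f(\Wb^{*},\xb)$ even though the second layer is now moving.
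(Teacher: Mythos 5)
Your proof is correct and follows essentially the same route as the paper's: bound each step by $\eta_2$ via $|\Tilde{\sign}(\cdot)|\leq 1$, telescope to get $|a_r^{(t)}-a_r^{(0)}|\leq \eta_2 T\leq c$, and conclude. Your additional explicit verification that $c<1$ (so the sign is preserved given $|a_r^{(0)}|=1$) is a detail the paper leaves implicit, but it is the right justification for the final claim.
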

\begin{proof}
Notice that by \eqref{eq:a} and $\Tilde{\sign}(\cdot)\in\{-1,0,1\}$, we can get for any $t\geq0$ that
\begin{align*}
    a_{ r}^{(t)}-\eta_2\leq a_{r}^{(t+1)}\leq a_{ r}^{(t)}+\eta_2,
\end{align*}
which implies that
\begin{align*}
    |a_{r}^{(t)}-a_{r}^{(0)}|&\leq \eta_{2}t\leq \eta_{2}T\leq c,
\end{align*}
where the second inequality is by $t \leq T$, and the last inequality is by the condition $\eta_2\leq\frac{1}{4T}\sqrt{\frac{\pi k}{8}}\Big(\frac{e+e^{-1}}{2}\Big)^{-k}$.
\end{proof}

\begin{lemma}\label{lm:sign sgd2}
With probability at least $1-\delta$ with respect to the randomness of online data selection, the following sign SGD update rule holds:
\begin{align*}
    w_{r,j}^{(t+1)}&=(1-\eta\lambda)w_{r,j}^{(t)}+\eta\cdot\Tilde{\sign}\Bigg(k!a_{r}^{(t)}\cdot\frac{w_{r,1}^{(t)}w_{r,2}^{(t)}\cdots w_{r,k}^{(t)}}{w_{r,j}^{(t)}}\Bigg), &&j\in[k],\\
    w_{r,j}^{(t+1)}&=(1-\eta\lambda)w_{r,j}^{(t)}, &&j\notin[k].
\end{align*}
\end{lemma}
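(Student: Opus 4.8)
The plan is to mirror the induction used in the proof of Lemma~\ref{lm:sign sgd}, the only essential new ingredient being control of the time-varying second-layer weight $a_r^{(t)}$, which is supplied by Lemma~\ref{lm:second-layer}. First I would record the population gradient: repeating the computation in Lemma~\ref{lm:exact gradient} verbatim (the second-layer weight enters the $\wb_r$-gradient only as a multiplicative constant) gives $\EE_{(\xb,y)\sim\cD}[\sigma'(\la\wb_r^{(t)},\xb\ra)\cdot a_r^{(t)} y x_j] = k!\,a_r^{(t)}\cdot \frac{w_{r,1}^{(t)}\cdots w_{r,k}^{(t)}}{w_{r,j}^{(t)}}$ for $j\in[k]$ and $0$ for $j\notin[k]$. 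Thus the claimed update rule is precisely the assertion that, coordinate by coordinate, $\Tilde{\sign}$ of the stochastic gradient coincides with $\Tilde{\sign}$ of this population gradient, and the whole lemma reduces to a sign-matching statement.

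Next I would set up the same interlocking induction as in Lemma~\ref{lm:sign sgd}: the hypotheses assert that $\|\wb_r^{(t)}\|_2$ is non-increasing, that feature coordinates of good neurons satisfy $b_j w_{r,j}^{(t)}=1$, and that feature coordinates of bad neurons shrink geometrically. The norm-monotonicity hypothesis, combined with $\|\wb_r^{(0)}\|_2=\sqrt d$, yields $\|\wb_r^{(t)}\|_2\leq \sqrt d$ throughout, so the approximation bound of Lemma~\ref{lm:gradient approx error2} becomes $\epsilon_1\,|a_r^{(t)}|\,d^{(k-1)/2}$. Crucially, Lemma~\ref{lm:second-layer} — which holds \emph{unconditionally}, since it follows only from $\Tilde{\sign}(\cdot)\in\{-1,0,1\}$ and telescoping of \eqref{eq:a} — guarantees both $\sign(a_r^{(t)})=\sign(a_r^{(0)})$ and $|a_r^{(t)}|\in[1-c,1+c]$. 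The former keeps the good/bad classification and the direction of every feature-coordinate update identical to the population trajectory, while the latter pins the gradient magnitudes near their population values.

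With these in hand, the sign-matching reduces to the same two threshold inequalities as in Lemma~\ref{lm:sign sgd}, now carrying the extra $|a_r^{(t)}|$ factor. For a noise coordinate the population gradient is zero, so I only need the stochastic gradient to land in the dead zone: $\epsilon_1\,|a_r^{(t)}|\,d^{(k-1)/2}\leq (1+c)\,\epsilon_1\, d^{(k-1)/2}<\rho$. For a feature coordinate of a good neuron the population gradient has magnitude $k!\,|a_r^{(t)}|$, so I need $|a_r^{(t)}|\big(k!-\epsilon_1 d^{(k-1)/2}\big)\geq (1-c)\big(k!-\epsilon_1 d^{(k-1)/2}\big)\geq\rho$, and the same inequality with the bad-neuron product bound handles $\Omega_{\mathrm{b}}$. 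Both inequalities follow once $\epsilon_1 d^{(k-1)/2}$ is a small fraction of $k!$ — which is exactly what the batch-size lower bound in Condition~\ref{Cond} buys, since it forces $\epsilon_1=\tilde{O}(d^{-(k-1)/2})$ — and once $c=\tfrac14\sqrt{\pi k/8}\,(\tfrac{e+e^{-1}}{2})^{-k}$ is bounded away from $1$, so that $(1-c)\cdot 0.9\geq 0.1=\rho/k!$ with $\rho=0.1\,k!$.

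I expect the main obstacle to be the genuinely new constant check in the previous paragraph: confirming that $\rho$ still sits strictly between the feature-coordinate magnitude $(1-c)(k!-\epsilon_1 d^{(k-1)/2})$ and the noise-coordinate magnitude $(1+c)\epsilon_1 d^{(k-1)/2}$ once the two $(1\pm c)$ factors are inserted. Because Lemma~\ref{lm:second-layer} is unconditional, there is no circular dependence between the $a$-dynamics and the $w$-dynamics; the only residual coupling — that the norm bound $\|\wb_r^{(t)}\|_2\leq\sqrt d$ is itself a consequence of the update rule being proved — is exactly the induction already resolved in Lemma~\ref{lm:sign sgd}, and I would carry the hypotheses in the same order so that the norm bound is available before it is used. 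Everything else — the population-gradient identity, the norm-monotonicity estimate, and the decay of bad neurons and noise coordinates — imports verbatim from the fixed-second-layer argument.
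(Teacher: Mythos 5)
Your proposal is correct and follows essentially the same route as the paper's proof: the same interlocking induction as in Lemma~\ref{lm:sign sgd} (norm monotonicity, good-neuron feature coordinates fixed at $1$, bad-neuron coordinates shrinking), with Lemma~\ref{lm:second-layer} supplying $\sign(a_r^{(t)})=\sign(a_r^{(0)})$ and $|a_r^{(t)}|\in[1-c,1+c]$, and the argument reduced to the two threshold checks $(1+c)\epsilon_1 d^{(k-1)/2}<\rho$ and $(1-c)\bigl(k!-\epsilon_1 d^{(k-1)/2}\bigr)\geq\rho$. The paper states these same inequalities (modulo where the $(1\pm c)$ factor is attached) and otherwise imports the fixed-second-layer argument verbatim, exactly as you describe.
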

\begin{proof}
We prove this by using induction. We prove the following hypotheses:
\begin{align}
    &\|\wb_{r}^{(t+1)}\|_{2}\leq\|\wb_{r}^{(t)}\|_{2}, &&\forall r\in[m].\tag{H$_1$}\label{d3:h1}\\
    &w_{r,j}^{(t+1)}=(1-\eta\lambda)w_{r,j}^{(t)}+\eta_{1}\cdot\Tilde{\sign}\Bigg(k!a_{r}^{(t)}\cdot\frac{w_{r,1}^{(t)}w_{r,2}^{(t)}\cdots w_{r,k}^{(t)}}{w_{r,j}^{(t)}}\Bigg), &&\forall r\in[m],j\in[k].\tag{H$_2$}\label{d3:h2}\\
    &w_{r,j}^{(t+1)}=(1-\eta\lambda)w_{r,j}^{(t)},&&\forall r\in[m],j\notin[k].\tag{H$_3$}\label{d3:h3}\\
    &b_{j}w_{r,j}^{(t)}=1,&&\forall r\in\Omega_{b_1\cdots b_k}\cap\Omega_{\mathrm{g}},\forall j\in[k].\tag{H$_4$}\label{d3:h4}\\
    &b_{j}w_{r,j}^{(t)}=b_{j'}w_{r,j'}^{(t)},&&\forall r\in\Omega_{b_1 b_2\cdots b_k}\cap\Omega_{\mathrm{g}},\forall j,j'\in[k].\tag{H$_5$}\label{d3:h5}\\
    &0<b_{j}w_{r,j}^{(t+1)}\leq(1-\eta\lambda)b_{j}w_{r,j}^{(t)},&&\forall r\in\Omega_{b_1 b_2\cdots b_k}\cap\Omega_{\mathrm{b}},\forall j\in[k].\tag{H$_6$}\label{d3:h6}
\end{align}
We will show that \ref{d3:h2}$(0)$, \ref{d3:h3}$(0)$, \ref{d3:h4}$(0)$ and \ref{d3:h5}$(0)$ are true and for any $t\geq0$ we have
\begin{itemize}[leftmargin=*]
    \item \ref{d3:h2}$(t)$, \ref{d3:h4}$(t)$ $\Longrightarrow$ \ref{d3:h4}$(t+1)$. 
    \item \ref{d3:h2}$(t)$, \ref{d3:h5}$(t)$ $\Longrightarrow$ \ref{d3:h5}$(t+1)$, \ref{d3:h6}$(t)$. 
    \item \ref{d3:h3}$(t)$, \ref{d3:h4}$(t)$, \ref{d3:h4}$(t+1)$, \ref{d3:h6}$(t)$ $\Longrightarrow$\ref{d3:h1}$(t)$.
    \item $\{$\ref{d3:h1}$(s)\}_{s=0}^{t}$ $\Longrightarrow$ \ref{d3:h2}$(t+1)$, \ref{d3:h3}$(t+1)$.
\end{itemize}
\ref{d3:h4}$(0)$ and \ref{d3:h5}$(0)$ are obviously true since $w_{r,j}^{(0)}=b_j$ for any $r\in\Omega_{b_1 b_2\cdots b_k}$ and $j\in[k]$. To prove that \ref{d3:h2}$(0)$ and \ref{d3:h3}$(0)$ are true, we can follow the proof of Lemma~\ref{lm:sign sgd} by noticing that $|a_{r}^{(0)}|=1$. Now, we verify that \ref{d3:h2}$(t)$, \ref{d3:h4}$(t)$ $\Longrightarrow$ \ref{d3:h4}$(t+1)$. By \ref{d3:h2}$(t)$ and $\sign(a_{r}^{(t)})=\sign(a_{r}^{(0)})$ according to Lemma~\ref{lm:second-layer}, we have for any neuron $r\in\Omega_{b_1 b_2\cdots b_k}\cap\Omega_{\mathrm{g}}$ that
\begin{align*}
    b_j w_{r,j}^{(t+1)}&=(1-\eta\lambda)b_j w_{r,j}^{(t)}\\
    &\qquad+\eta\cdot\frac{\sign(b_1 w_{r,1}^{(t)})\sign(b_2 w_{r,2}^{(t)})\cdots\sign(b_k w_{r,k}^{(t)})}{\sign(b_j w_{r,j}^{(t)})}\cdot\ind\Bigg[\frac{|a_{r}^{(0)}w_{r,1}^{(t)}w_{r,2}^{(t)}\cdots w_{r,k}^{(t)}|}{|w_{r,j}^{(t)}|}\geq\frac{\rho}{k!}\Bigg]\\
    &=(1-\eta\lambda)+\eta\cdot\ind[k!\geq\rho]\\
    &=1,
\end{align*}
where the last equality is by $\rho\leq k!(1-c)\leq k!\cdot|a_{r}^{(t)}|$. \ref{d3:h2}$(t)$, \ref{d3:h5}$(t)$ $\Longrightarrow$ \ref{d3:h5}$(t+1)$, \ref{d3:h6}$(t)$ can be verified in the same way as Lemma~\ref{lm:exact dud feature} by noticing that $\rho\leq k!(1-c)\leq k!\cdot|a_{r}^{(t)}|$. \ref{d3:h3}$(t)$, \ref{d3:h4}$(t)$, \ref{d3:h4}$(t+1)$, \ref{d3:h6}$(t)$ $\Longrightarrow$\ref{d3:h1}$(t)$ can be proved by following exactly the same proof as Lemma~\ref{lm:exact dud feature}. $\{$\ref{d3:h1}$(s)\}_{s=0}^{t}$ $\Longrightarrow$ \ref{d3:h2}$(t+1)$, \ref{d3:h3}$(t+1)$ be verified in the same way as Lemma~\ref{lm:exact dud feature} by noticing that $\epsilon_1\cdot(1+c)\cdot d^{\frac{k-1}{2}}<\rho$ and $k!-\epsilon_1\cdot(1+c)\cdot d^{\frac{k-1}{2}}>\rho$. 
\end{proof}
Based on Lemma~\ref{lm:second-layer} and Lemma~\ref{lm:sign sgd2}, we can get the following lemmas and theorems aligning with the result of the fixed second-layer case.
\begin{lemma}\label{lm:exact order3}
For $T=\Theta(k\eta^{-1}\lambda^{-1}\log(d))$,with a probability of at least $1-\delta$ with respect to the randomness of the online data selection, it holds that
\begin{align*}
    w_{r,j}^{(T)}&=b_j,&&\forall r\in\Omega_{b_1b_1\cdots b_k}\cap\Omega_{\mathrm{g}},j\in[k],\\
    |w_{r,j}^{(T)}|&\leq d^{-(k+1)}, &&\forall r\in\Omega_{\mathrm{g}},j\in[d]\setminus[k],\\
    |w_{r,j}^{(T)}|&\leq d^{-(k+1)}, &&\forall r\in\Omega_{\mathrm{b}},j\in[d].
\end{align*}
\end{lemma}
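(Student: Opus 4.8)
The plan is to mirror the strategy used for the fixed-second-layer results, namely Lemma~\ref{lm:exact order} and its online counterpart Lemma~\ref{lm:exact order2}, because the genuinely dynamical work has already been carried out in Lemma~\ref{lm:sign sgd2}. That lemma shows that, under joint training of both layers, the coordinate-wise sign-SGD update rules coincide exactly with the population sign-GD rules of Lemma~\ref{lm:exact gradient}, the only difference being that the fixed $a_r$ is replaced by the time-varying $a_r^{(t)}$. Consequently, to prove this lemma I only need to convert the per-step recursions (and the invariants) established inside the induction of Lemma~\ref{lm:sign sgd2} into the three claimed bounds at the stopping time $T=\Theta(k\eta^{-1}\lambda^{-1}\log d)$; in fact the argument is essentially verbatim that of Lemma~\ref{lm:exact order}, with Lemma~\ref{lm:exact gradient}/Corollary~\ref{cor:all neuron} replaced by Lemma~\ref{lm:sign sgd2}.

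Concretely, I would treat the three cases separately. For the feature coordinates of good neurons, the invariant \ref{d3:h4} proved inside Lemma~\ref{lm:sign sgd2} already gives $b_j w_{r,j}^{(t)}=1$ for all $t\ge0$, $r\in\Omega_{b_1\cdots b_k}\cap\Omega_{\mathrm{g}}$ and $j\in[k]$, so $w_{r,j}^{(T)}=b_j$ is immediate. For the noise coordinates ($j\notin[k]$ of any neuron), the pure-contraction rule \ref{d3:h3} gives $w_{r,j}^{(t+1)}=(1-\eta\lambda)w_{r,j}^{(t)}$, so iterating and using $|w_{r,j}^{(0)}|=1$ together with $(1-\eta\lambda)^{(\eta\lambda)^{-1}}\le e^{-1}$ yields $|w_{r,j}^{(T)}|\le\exp(-T\eta\lambda)\le d^{-(k+1)}$ once $T\ge(k+1)\eta^{-1}\lambda^{-1}\log d$. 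For the feature coordinates of bad neurons ($j\in[k]$), the one-sided bound \ref{d3:h6}, namely $0<b_j w_{r,j}^{(t+1)}\le(1-\eta\lambda)b_j w_{r,j}^{(t)}$, telescopes to the same estimate $|w_{r,j}^{(T)}|\le(1-\eta\lambda)^{T}|w_{r,j}^{(0)}|\le d^{-(k+1)}$. Combining the three cases gives the statement.

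The only place where joint training could plausibly break this clean dichotomy — and hence the step to be careful about — is that a drifting $a_r^{(t)}$ might flip the sign of the effective gradient inside $\Tilde{\sign}(\cdot)$ or push its magnitude below the threshold $\rho$, which would destroy the good/bad separation. This is exactly what Lemma~\ref{lm:second-layer} prevents: it guarantees $|a_r^{(t)}-a_r^{(0)}|\le c$ with $c<1$ and $\sign(a_r^{(t)})=\sign(a_r^{(0)})$, so the update depends on $a_r^{(t)}$ only through its constant sign, while $|a_r^{(t)}|\ge 1-c$ keeps the thresholded feature gradient above $\rho$ (using $\rho\le k!(1-c)$) just as in the fixed-layer case. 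Since this control is already folded into the hypotheses of Lemma~\ref{lm:sign sgd2}, no new obstacle arises at this level, and the proof of the present lemma reduces to the routine telescoping described above.
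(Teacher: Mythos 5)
Your proposal is correct and follows essentially the same route as the paper, which proves this lemma by invoking the invariants (H$_3$), (H$_4$), (H$_6$) established in Lemma~\ref{lm:sign sgd2} (with the drift of $a_r^{(t)}$ controlled by Lemma~\ref{lm:second-layer}) and then telescoping the contraction exactly as in the proof of Lemma~\ref{lm:exact order}. No gaps.
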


\begin{lemma}
With a probability of at least $1-2\delta$ with respect to the randomness in the neural network's initialization and the online data selection, trained neural network $f(\Wb^{(T)},\xb)$ approximates accurate classifier $(m/2^{k+1})\cdot f(\Wb^{*},\xb)$ well: 
\begin{equation*}
    \PP_{\xb\sim\cD_{\xb}}\bigg(\frac{f(\Wb^{(T)},\xb)}{(m/2^{k+1})\cdot f(\Wb^{*},\xb)}\in[0.25,1.75]\bigg)\geq1-\epsilon.
\end{equation*}
\end{lemma}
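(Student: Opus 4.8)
The plan is to mirror the proof of Lemma~\ref{lm:nn approx2}, inserting one additional error term to account for the fact that the second-layer weights are no longer frozen at $\pm1$ but have drifted by at most $c$ from their initialization. First I would condition on the three high-probability events supplied by the earlier results: the balanced-initialization event of Lemma~\ref{lm:set size} (probability at least $1-\delta$ over the initialization), the weight-convergence event of Lemma~\ref{lm:exact order3} (probability at least $1-\delta$ over the online data), and the second-layer stability guarantee of Lemma~\ref{lm:second-layer}, which gives $|a_r^{(T)}-a_r^{(0)}|\leq c$ and $\sign(a_r^{(T)})=\sign(a_r^{(0)})$ for every $r\in[m]$. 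A union bound over the first two events yields the claimed $1-2\delta$ confidence. On this event, exactly as in the proof of Lemma~\ref{lm:nn approx}, I would apply Hoeffding's inequality together with a union bound over the $m$ neurons to control the noise part of each activation, $\big|\sum_{j=k+1}^{d} w_{r,j}^{(T)} x_j\big|\leq d^{-k}$, which holds with probability at least $1-\epsilon$ over $\xb\sim\cD_{\xb}$.

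The key step is a two-term triangle-inequality decomposition. Writing $\tilde f(\xb)=\sum_{r=1}^{m} a_r^{(0)}\sigma(\la\wb_r^{(T)},\xb\ra)$ for the auxiliary predictor that keeps the initial second-layer signs, I would bound
\begin{align*}
\Big| f(\Wb^{(T)},\xb)-(m/2^{k+1})f(\Wb^*,\xb)\Big|
&\leq \Big|\tilde f(\xb)-(m/2^{k+1})f(\Wb^*,\xb)\Big|
+\Big| f(\Wb^{(T)},\xb)-\tilde f(\xb)\Big|.
\end{align*}
The first term is identical to the quantity already controlled in Lemma~\ref{lm:nn approx}, since $\tilde f$ uses $a_r^{(0)}=\pm1$ and the feature/noise coordinate estimates of Lemma~\ref{lm:exact order3} coincide in form with those of Lemma~\ref{lm:exact order}; thus it contributes relative error at most $0.5$ after the same Stirling-approximation bookkeeping and the same constraints on $\alpha$.

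For the second (new) term I would invoke Lemma~\ref{lm:second-layer} to obtain $| f(\Wb^{(T)},\xb)-\tilde f(\xb)|\leq c\sum_{r=1}^{m}|\sigma(\la\wb_r^{(T)},\xb\ra)|$. Splitting this sum into good and bad neurons, the contribution of $r\in\Omega_{\mathrm{b}}$ is $O\big(m(2d^{-k})^{k}\big)$ and negligible, while for $r\in\Omega_{b_1\cdots b_k}\cap\Omega_{\mathrm{g}}$ one has $|\sigma(\la\wb_r^{(T)},\xb\ra)|\leq\big(\big|\sum_{j=1}^{k}b_jx_j\big|+d^{-k}\big)^{k}$ by Lemma~\ref{lm:exact order3} and the noise bound. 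Using the balanced counts $|\Omega_{b_1\cdots b_k}\cap\Omega_{\mathrm{g}}|\leq(1+\alpha)m/2^{k+1}$ and the same elementary moment estimate $\sum_{(b_1,\cdots,b_k)\in\{\pm1\}^{k}}\big|\sum_{j=1}^{k}b_jx_j\big|^{k}\leq 2k^{k}(1+e^{-2})^{k}$ that underlies Lemma~\ref{lm:nn approx}, the new term is at most $c(1+\alpha)m\,k!\,\big(\tfrac{e+e^{-1}}{2}\big)^{k}$ up to lower-order $d^{-k}$ corrections. Dividing by $(m/2^{k+1})|f(\Wb^*,\xb)|=0.5\,k!\,m$ and applying Stirling's formula, the precise choice $c=\frac{1}{4}\sqrt{\frac{\pi k}{8}}\big(\frac{e+e^{-1}}{2}\big)^{-k}$ makes this relative error at most $(1+\alpha)/8\leq 1/4$. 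Summing the two relative errors gives a total relative error of at most $0.75$, i.e. $f(\Wb^{(T)},\xb)/[(m/2^{k+1})f(\Wb^*,\xb)]\in[0.25,1.75]$.

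The main obstacle I anticipate is purely the constant-chasing in this last step: one must verify that the drift budget $c$ (equivalently the step-size cap $\eta_2\leq\frac{1}{4T}\sqrt{\frac{\pi k}{8}}\big(\frac{e+e^{-1}}{2}\big)^{-k}$, which produces it through Lemma~\ref{lm:second-layer}) is calibrated so that the extra term contributes at most $1/4$ to the relative error, exactly accounting for the widening of the acceptance interval from $[0.5,1.5]$ to $[0.25,1.75]$. All the probabilistic content is inherited from the earlier lemmas; the genuinely new work is this deterministic bookkeeping of the second-layer perturbation against the $k!\,(\tfrac{e+e^{-1}}{2})^{k}$ scale of the signal.
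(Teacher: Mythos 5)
Your proposal follows essentially the same route as the paper's own proof: the same auxiliary predictor $\tilde f$ with frozen initial second-layer weights, the same triangle-inequality split into the term already controlled by Lemma~\ref{lm:nn approx} (relative error $0.5$) and the new drift term bounded by $c\sum_r|\sigma(\la\wb_r^{(T)},\xb\ra)|$ split over good and bad neurons, and the same calibration of $c$ to make the extra contribution at most $1/4$. The only differences are cosmetic (your intermediate constant for the drift term is stated loosely, but you correctly flag it as bookkeeping to be verified, and the final bound with the prescribed $c$ matches the paper's).
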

\begin{proof}
Let
\begin{align*}
    \tilde{f}(\Wb^{(T)},\xb)=\sum_{r=1}^{m}a_{r}^{(0)}\cdot\sigma(\la\wb_{r}^{(T)},\xb\ra).
\end{align*}
By the proof of Lemma~\ref{lm:nn approx}, we can get
\begin{align*}
    \frac{|\tilde{f}(\Wb^{(T)},\xb)-(m/2^{k+1})\cdot f(\Wb^{*},\xb)|}{(m/2^{k+1})\cdot |f(\Wb^{*},\xb)|}\leq 0.5.
\end{align*}
To prove the result, we need to estimate the difference between $\tilde{f}(\Wb^{(T)},\xb)$ and $f(\Wb^{(T)},\xb)$:
\begin{equation}\label{ineq:f diff}
\begin{aligned}
    \big|f(\Wb^{(T)},\xb)-\tilde{f}(\Wb^{(T)},\xb)\big|&=\Bigg|\sum_{r=1}^{m}(a_{r}^{(T)}-a_{r}^{(0)})\cdot\sigma(\la\wb_{r}^{(T)},\xb\ra)\Bigg|\\
    &\leq\sum_{r=1}^{m}|a_{r}^{(T)}-a_{r}^{(0)}|\cdot|\la\wb_{r}^{(T)},\xb\ra|^{k}\\
    &\leq c\sum_{r=1}^{m}|\la\wb_{r}^{(T)},\xb\ra|^{k}\\
    &=c\underbrace{\sum_{r\in\Omega_{\mathrm{g}}}|\la\wb_{r}^{(T)},\xb\ra|^{k}}_{I_1}+c\underbrace{\sum_{r\in\Omega_{\mathrm{b}}}|\la\wb_{r}^{(T)},\xb\ra|^{k}}_{I_2}
\end{aligned}
\end{equation}
where the first inequality is by triangle inequality; the second inequality is by Lemma~\ref{lm:second-layer}. Then, we provide upper bounds for terms $I_1$ and $I_2$ respectively. For $I_1$, we have the following upper bound:
\begin{align}
    I_1&=\sum_{r\in\Omega_{\mathrm{g}}}|\la\wb_{r,[1:k]}^{(T)},\xb_{[1:k]}\ra+\la\wb_{r,[k+1:d]}^{(T)},\xb_{[k+1:d]}\ra|^{k}\notag\\
    &\leq\sum_{r\in\Omega_{\mathrm{g}}}|\la\wb_{r,[1:k]}^{(T)},\xb_{[1:k]}\ra|^{k}+\sum_{r\in\Omega_{\mathrm{g}}}|\la\wb_{r,[k+1:d]}^{(T)},\xb_{[k+1:d]}\ra|\cdot k\big(|\la\wb_{r,[1:k]}^{(T)},\xb_{[1:k]}\ra|+|\la\wb_{r,[k+1:d]}^{(T)},\xb_{[k+1:d]}\ra|\big)^{k}\notag\\
    &\leq\sum_{(b_1,\cdots,b_k)\in\{\pm1\}^{k}}\sum_{r\in\Omega_{b_1 b_2\cdots b_k}\cap\Omega_{\mathrm{g}}}\bigg|\sum_{j=1}^{k}b_{j}x_{j}\bigg|^{k}+\sum_{r\in\Omega_{\mathrm{g}}}d^{-k}\cdot k\big(k+d^{-k}\big)^{k}\notag\\
    &\leq\sum_{(b_1,\cdots,b_k)\in\{\pm1\}^{k}}\Big(\frac{1+\alpha}{2^{k+1}}\Big)m\cdot\bigg|\sum_{j=1}^{k}b_{j}x_{j}\bigg|^{k}+\Big(\frac{1+\alpha}{2}\Big)m\cdot d^{-k}\cdot k\big(k+d^{-k}\big)^{k}\notag\\
    &\leq\Big(\frac{1+\alpha}{2^{k+1}}\Big)m\cdot2k^{k}(1+e^{-2})^{k}+\Big(\frac{1+\alpha}{2}\Big)m\cdot d^{-k}\cdot k\big(k+d^{-k}\big)^{k},\label{ineq:I1 upbound}
\end{align}
where the first inequality is by mean value theorem; the second inequality is by \eqref{ineq:noise part} and Lemma~\ref{lm:exact order3}; the third inequality is by Lemma~\ref{lm:set size}; the last inequality is by Lemma~\ref{lm:auxi4}. For $I_2$, we have the following upper bound
\begin{align}
    I_2&\leq|\Omega_{\mathrm{b}}|\cdot(2d^{-k})^{k}\leq\Big(\frac{1+\alpha}{2}\Big)m\cdot(2d^{-k})^{k}.\label{ineq:I2 upbound}
\end{align}
By plugging \eqref{ineq:I1 upbound} and \eqref{ineq:I2 upbound} into \eqref{ineq:f diff}, we can get:
\begin{align*}
    &\big|f(\Wb^{(T)},\xb)-\tilde{f}(\Wb^{(T)},\xb)\big|\\
    &\leq c\big((1+\alpha)2^{-k}m\cdot k^{k}(1+e^{-2})^{k}+0.5(1+\alpha)m\cdot d^{-k}\cdot k\big(k+d^{-k}\big)^{k}+0.5(1+\alpha)m\cdot(2d^{-k})^{k}\big).
\end{align*}
Therefore, we have
\begin{align*}
    &\frac{\big|f(\Wb^{(T)},\xb)-\tilde{f}(\Wb^{(T)},\xb)\big|}{(m/2^{k+1})\cdot |f(\Wb^{*},\xb)|}\\
    &\leq c\cdot\frac{(1+\alpha)2^{-k}m\cdot k^{k}(1+e^{-2})^{k}+0.5(1+\alpha)m\cdot d^{-k}\cdot k\big(k+d^{-k}\big)^{k}+0.5(1+\alpha)m\cdot(2d^{-k})^{k}}{0.5k!\cdot m}\\
    &\leq c\cdot\frac{2(1+\alpha)2^{-k}\cdot k^{k}(1+e^{-2})^{k}+(1+\alpha)\cdot d^{-k}\cdot k\big(k+d^{-k}\big)^{k}+(1+\alpha)\cdot(2d^{-k})^{k}}{\sqrt{2\pi k}(k/e)^{k}}\\
    &=c\cdot\Bigg(\sqrt{\frac{2}{\pi k}}(1+\alpha)\Big(\frac{e+e^{-1}}{2}\Big)^{k}+\sqrt{\frac{2}{\pi k}}\cdot\bigg(1+\frac{d^{-k}}{k}\bigg)^{k-1}\cdot\Big(\frac{e}{d}\Big)^{k}+\sqrt{\frac{2}{\pi k}}\cdot\Big(\frac{2e}{kd^{k}}\Big)^{k}\Bigg)\\
    &\leq c\cdot\sqrt{\frac{8}{\pi k}}\Big(\frac{e+e^{-1}}{2}\Big)^{k}\\
    &=\frac{1}{4},
\end{align*}
where the second inequality is by Stirling's approximation, and the last equality is due to $c=\frac{1}{4}\sqrt{\frac{\pi k}{8}}\big(\frac{e+e^{-1}}{2}\big)^{-k}$. 
\end{proof}

\section{Auxiliary Lemmas}
We first introduce a finite difference operator with step $h$ and order $n$ as follows,
\begin{align*}
\Delta^{n}_{h}[f](x) = \sum_{i=0}^{n}{k\choose i}(-1)^{n-i}f(x+ih).
\end{align*}
The following Lemma calculates the value finite difference operating on the polynomial function.

\begin{lemma}\citep{milne2000calculus}\label{lm:generatorsum2} 
For $f(x) = x^{n}$, we have that $
\Delta^{n}_{h}[f](x) = h^{n}n!$.
\end{lemma}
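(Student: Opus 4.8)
The plan is to prove the identity by interpreting $\Delta^{n}_{h}$ as the $n$-fold composition of the first-order forward difference and tracking how each application acts on the leading monomial. First I would introduce the shift operator $E_{h}$ defined by $E_{h}[f](x) = f(x+h)$ together with the identity operator $I$, so that the first-order forward difference is $\Delta_{h} = E_{h} - I$. Since $E_{h}$ and $I$ commute, the binomial theorem for operators gives $(E_{h} - I)^{n} = \sum_{i=0}^{n} \binom{n}{i}(-1)^{n-i} E_{h}^{i}$, and because $E_{h}^{i}[f](x) = f(x+ih)$, this recovers exactly the stated formula, confirming $\Delta^{n}_{h} = (E_{h} - I)^{n}$. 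Thus it suffices to compute the $n$-fold composition $\Delta_{h}^{n}[x^{n}]$.

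The key structural step is the following degree-reduction claim: if $p$ is a polynomial of degree $d \geq 1$ with leading coefficient $c$, then $\Delta_{h}[p]$ has degree $d-1$ with leading coefficient $c\,d\,h$. This follows from expanding $\Delta_{h}[p](x) = p(x+h) - p(x)$; the top-degree contribution comes solely from $c[(x+h)^{d} - x^{d}]$, whose leading term is $c\,d\,h\,x^{d-1}$ by the binomial theorem, while all remaining terms have degree at most $d-2$. Applying this claim repeatedly to $f(x) = x^{n}$ (degree $n$, leading coefficient $1$), each application lowers the degree by one and multiplies the leading coefficient by the current degree times $h$; after $\ell$ applications the result has degree $n-\ell$ and leading coefficient $n(n-1)\cdots(n-\ell+1)\,h^{\ell}$. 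Setting $\ell = n$ yields a degree-zero polynomial---hence a constant independent of $x$---with value $n(n-1)\cdots 1 \cdot h^{n} = n!\,h^{n}$, which is the desired identity.

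I do not anticipate a genuine obstacle here, as this is a classical fact; the only point requiring care is the bookkeeping in the degree-reduction claim, namely verifying that the lower-order terms of $p$ never contaminate the leading coefficient at each stage. An alternative route would be to expand each $(x+ih)^{n}$ via the binomial theorem and reduce the statement to the combinatorial identity asserting that $\sum_{i=0}^{n} \binom{n}{i}(-1)^{n-i} i^{j}$ equals $0$ for $0 \leq j < n$ and equals $n!$ for $j = n$; but this auxiliary identity is itself most cleanly established by the same degree-lowering argument applied to $x^{j}$, so the operator-composition approach is the more self-contained one.
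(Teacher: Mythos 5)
The paper does not actually prove this lemma; it is imported verbatim as a known result with a citation to Milne--Thomson's \emph{Calculus of Finite Differences}, so there is no in-paper argument to compare yours against. Your proof is correct and is the standard one: the identification $\Delta^{n}_{h}=(E_{h}-I)^{n}$ via the operator binomial theorem faithfully matches the paper's definition of $\Delta^{n}_{h}$ (note the paper's $\binom{k}{i}$ in that definition is a typo for $\binom{n}{i}$, which you have silently and correctly repaired), and your degree-reduction claim is sound --- in particular you are right that the subleading terms of $p$ contribute only at degree $d-2$ and so never contaminate the leading coefficient $c\,d\,h$ of $\Delta_{h}[p]$. Iterating $n$ times then gives the constant $n!\,h^{n}$ as claimed, so the lemma the paper takes on faith is fully justified by your argument.
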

Based on Lemma~\ref{lm:generatorsum2}, we have the following Lemma, which calculates the margin of $\good$ NNs defined in \eqref{eq:MLP}. 

\begin{lemma}\label{lm:auxi}
For any integer $k$, we have
\begin{align}
\sum_{i=0}^{k}{k\choose i}(-1)^{i}(k-2i)^{k} = 2^{k}k!. \label{eq:generatorsum1}    
\end{align}
\end{lemma}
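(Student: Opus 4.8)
The plan is to recognize the left-hand side of \eqref{eq:generatorsum1} as a special instance of the finite difference operator $\Delta^{k}_{h}$ applied to the monomial $f(x)=x^{k}$, and then invoke Lemma~\ref{lm:generatorsum2} directly. The key is to pick the step size and base point so that the argument $x+ih$ of $f$ becomes exactly $k-2i$.

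Concretely, I would set $n=k$, choose the step size $h=-2$, and evaluate at $x=k$, so that $x+ih = k-2i$ and hence $f(x+ih) = (k-2i)^{k}$. With these choices, the definition of the operator gives
\begin{align*}
\Delta^{k}_{-2}[f](k) = \sum_{i=0}^{k}{k\choose i}(-1)^{k-i}(k-2i)^{k}.
\end{align*}
By Lemma~\ref{lm:generatorsum2}, the left-hand side equals $h^{k}k! = (-2)^{k}k! = (-1)^{k}2^{k}k!$.

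It remains to reconcile the sign pattern, since the operator produces $(-1)^{k-i}$ while the target sum carries $(-1)^{i}$. Using $(-1)^{k-i}=(-1)^{k}(-1)^{i}$, the displayed sum factors as $(-1)^{k}\sum_{i=0}^{k}{k\choose i}(-1)^{i}(k-2i)^{k}$. Equating this with $(-1)^{k}2^{k}k!$ and cancelling the common factor $(-1)^{k}$ yields the claimed identity $\sum_{i=0}^{k}{k\choose i}(-1)^{i}(k-2i)^{k} = 2^{k}k!$.

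The argument is short, and no genuine obstacle arises once Lemma~\ref{lm:generatorsum2} is in hand. The only step requiring any thought is the substitution $x=k$, $h=-2$ that forces the argument of $f$ to equal $k-2i$, together with the minor bookkeeping that converts the operator's native $(-1)^{k-i}$ into the $(-1)^{i}$ appearing in the statement.
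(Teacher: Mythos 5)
Your proposal is correct and follows essentially the same route as the paper: both apply Lemma~\ref{lm:generatorsum2} to $f(x)=x^{k}$ with $n=k$, $h=-2$, $x=k$, obtaining $\Delta^{k}_{-2}[f](k)=(-2)^{k}k!$ and then converting the operator's $(-1)^{k-i}$ into $(-1)^{i}$ by factoring out $(-1)^{k}$. Your write-up simply makes the sign bookkeeping more explicit than the paper's one-line chain of equalities.
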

\begin{proof}
Applying Lemma~\ref{lm:generatorsum2} with $f(x) = x^{k}$, $n = k$ in Lemma \ref{eq:generatorsum1} gives,
\begin{align*}
\sum_{i=0}^{k}{k\choose i}(-1)^{i}(k-2i)^{k} = (-1)^{k}\Delta^{k}_{-2}[f](k) =  (-1)^{k} (-2)^{k}k! = 2^{k}k!,
\end{align*}
where the first equality is due to the definition of finite difference operator and the last equality is due to Lemma~\ref{lm:generatorsum2}.
\end{proof}

\begin{lemma}\label{lm:auxi4}
For any positive integer $k$, it holds that
\begin{equation*}
    \sum_{i=0}^{k} {k\choose i}|k-2i|^{k}\leq2k^{k}(1+e^{-2})^{k}.
\end{equation*}
\end{lemma}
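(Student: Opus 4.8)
The plan is to reduce the sum to a one-sided sum using symmetry and then apply the elementary inequality $1-x\le e^{-x}$ together with the binomial theorem. First I would observe that the summand is invariant under the reflection $i\mapsto k-i$: indeed ${k\choose i}={k\choose k-i}$ and $|k-2i|=|2(k-i)-k|$, so the terms pair up. When $k$ is odd there is no fixed point, and when $k$ is even the fixed point $i=k/2$ contributes ${k\choose k/2}\cdot 0^k=0$. Hence in both cases
\begin{align*}
\sum_{i=0}^{k}{k\choose i}|k-2i|^{k}=2\sum_{0\le i<k/2}{k\choose i}(k-2i)^{k},
\end{align*}
where on the right the restriction $i<k/2$ guarantees $k-2i>0$, so the absolute value is dropped.

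The second step is the key pointwise bound. For $i<k/2$ write $(k-2i)^{k}=k^{k}\bigl(1-\tfrac{2i}{k}\bigr)^{k}$, where the base $1-2i/k$ is strictly positive. Applying $1-x\le e^{-x}$ with $x=2i/k$ and raising the positive quantities to the $k$-th power yields $\bigl(1-\tfrac{2i}{k}\bigr)^{k}\le e^{-2i}$, so $(k-2i)^{k}\le k^{k}e^{-2i}$ for every admissible $i$. Substituting this into the reduced sum and then enlarging the index set back to all of $\{0,\dots,k\}$ (which only adds nonnegative terms) gives
\begin{align*}
2\sum_{0\le i<k/2}{k\choose i}(k-2i)^{k}\le 2k^{k}\sum_{i=0}^{k}{k\choose i}e^{-2i}=2k^{k}(1+e^{-2})^{k},
\end{align*}
where the last equality is the binomial theorem applied to $(1+e^{-2})^{k}$. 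This is exactly the claimed bound.

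I do not anticipate a genuine obstacle here; the proof is short once the symmetry reduction is in place. The one point requiring care is that the inequality $(1-2i/k)^{k}\le e^{-2i}$ is only valid when the base is nonnegative, i.e. for $i<k/2$, since otherwise raising to the odd-vs-even power could flip signs or the reasoning $1-x\le e^{-x}$ (which holds for all real $x$) would still be fine for the base but the absolute value bookkeeping would be messier. The symmetry step is precisely what confines the range to $i<k/2$, so the two steps dovetail cleanly and no separate case analysis for the sign of $k-2i$ is needed.
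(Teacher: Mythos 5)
Your proposal is correct and follows essentially the same route as the paper's proof: reduce to the half-range $i<k/2$ by the symmetry ${k\choose i}={k\choose k-i}$, bound $(1-2i/k)^{k}\le e^{-2i}$ via $1-x\le e^{-x}$, and finish with the binomial theorem after enlarging the index set. Your observation that the reduction is actually an equality (the fixed point at $i=k/2$ contributes zero) is a slight refinement of the paper's inequality, but the argument is otherwise identical.
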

\begin{proof}
We can establish the following inequality:
\begin{align*}
    \sum_{i=0}^{k} {k\choose i}|k-2i|^{k}&\leq2\sum_{i=0}^{\lfloor\frac{k-1}{2}\rfloor} {k\choose i}|k-2i|^{k}\\
    &=2k^{k}\sum_{i=0}^{\lfloor\frac{k-1}{2}\rfloor} {k\choose i}\Big(1-\frac{2i}{k}\Big)^{k}\\
    &\leq2k^{k}\sum_{i=0}^{\lfloor\frac{k-1}{2}\rfloor} {k\choose i}\exp(-2i)\\
    &\leq2k^{k}\sum_{i=0}^{k} {k\choose i}\exp(-2i)\\
    &=2k^{k}(1+e^{-2})^{k},
\end{align*}
where the first inequality is by ${k\choose i}={k\choose k-i}$, the second inequality is by $1-t \leq \exp(-t), \forall t \in \RR$, the last inequality is by ${k\choose i}\exp(-2i) > 0$.
\end{proof}
\newpage

\section*{NeurIPS Paper Checklist}

\begin{enumerate}

\item {\bf Claims}
    \item[] Question: Do the main claims made in the abstract and introduction accurately reflect the paper's contributions and scope?
    \item[] Answer: \answerYes{} 
    \item[] Justification: The claims made in abstract and Section~\ref{sec:intro} are supported by results in Section~\ref{sec:main} and also the additional expeiremnts in the Appendix.  
    \item[] Guidelines:
    \begin{itemize}
        \item The answer NA means that the abstract and introduction do not include the claims made in the paper.
        \item The abstract and/or introduction should clearly state the claims made, including the contributions made in the paper and important assumptions and limitations. A No or NA answer to this question will not be perceived well by the reviewers. 
        \item The claims made should match theoretical and experimental results, and reflect how much the results can be expected to generalize to other settings. 
        \item It is fine to include aspirational goals as motivation as long as it is clear that these goals are not attained by the paper. 
    \end{itemize}

\item {\bf Limitations}
    \item[] Question: Does the paper discuss the limitations of the work performed by the authors?
    \item[] Answer: \answerYes{} 
    \item[] Justification: The limitation section is presented in page 12. 
    \item[] Guidelines:
    \begin{itemize}
        \item The answer NA means that the paper has no limitation while the answer No means that the paper has limitations, but those are not discussed in the paper. 
        \item The authors are encouraged to create a separate "Limitations" section in their paper.
        \item The paper should point out any strong assumptions and how robust the results are to violations of these assumptions (e.g., independence assumptions, noiseless settings, model well-specification, asymptotic approximations only holding locally). The authors should reflect on how these assumptions might be violated in practice and what the implications would be.
        \item The authors should reflect on the scope of the claims made, e.g., if the approach was only tested on a few datasets or with a few runs. In general, empirical results often depend on implicit assumptions, which should be articulated.
        \item The authors should reflect on the factors that influence the performance of the approach. For example, a facial recognition algorithm may perform poorly when image resolution is low or images are taken in low lighting. Or a speech-to-text system might not be used reliably to provide closed captions for online lectures because it fails to handle technical jargon.
        \item The authors should discuss the computational efficiency of the proposed algorithms and how they scale with dataset size.
        \item If applicable, the authors should discuss possible limitations of their approach to address problems of privacy and fairness.
        \item While the authors might fear that complete honesty about limitations might be used by reviewers as grounds for rejection, a worse outcome might be that reviewers discover limitations that aren't acknowledged in the paper. The authors should use their best judgment and recognize that individual actions in favor of transparency play an important role in developing norms that preserve the integrity of the community. Reviewers will be specifically instructed to not penalize honesty concerning limitations.
    \end{itemize}

\item {\bf Theory Assumptions and Proofs}
    \item[] Question: For each theoretical result, does the paper provide the full set of assumptions and a complete (and correct) proof?
    \item[] Answer: \answerYes{} 
    \item[] Justification: Yes, the full set of assumptions are provided in Section~\ref{sec:problem} and the complete (and correct) proof are provided in the Appendix~\ref{sec:exp}.
    \item[] Guidelines:
    \begin{itemize}
        \item The answer NA means that the paper does not include theoretical results. 
        \item All the theorems, formulas, and proofs in the paper should be numbered and cross-referenced.
        \item All assumptions should be clearly stated or referenced in the statement of any theorems.
        \item The proofs can either appear in the main paper or the supplemental material, but if they appear in the supplemental material, the authors are encouraged to provide a short proof sketch to provide intuition. 
        \item Inversely, any informal proof provided in the core of the paper should be complemented by formal proofs provided in appendix or supplemental material.
        \item Theorems and Lemmas that the proof relies upon should be properly referenced. 
    \end{itemize}

    \item {\bf Experimental Result Reproducibility}
    \item[] Question: Does the paper fully disclose all the information needed to reproduce the main experimental results of the paper to the extent that it affects the main claims and/or conclusions of the paper (regardless of whether the code and data are provided or not)?
    \item[] Answer: \answerYes{} 
    \item[] Justification: This paper provides detailed experiment settings in Section~\ref{sec:exp}, which should be sufficient for reproducing the main experimental results. 
    \item[] Guidelines:
    \begin{itemize}
        \item The answer NA means that the paper does not include experiments.
        \item If the paper includes experiments, a No answer to this question will not be perceived well by the reviewers: Making the paper reproducible is important, regardless of whether the code and data are provided or not.
        \item If the contribution is a dataset and/or model, the authors should describe the steps taken to make their results reproducible or verifiable. 
        \item Depending on the contribution, reproducibility can be accomplished in various ways. For example, if the contribution is a novel architecture, describing the architecture fully might suffice, or if the contribution is a specific model and empirical evaluation, it may be necessary to either make it possible for others to replicate the model with the same dataset, or provide access to the model. In general. releasing code and data is often one good way to accomplish this, but reproducibility can also be provided via detailed instructions for how to replicate the results, access to a hosted model (e.g., in the case of a large language model), releasing of a model checkpoint, or other means that are appropriate to the research performed.
        \item While NeurIPS does not require releasing code, the conference does require all submissions to provide some reasonable avenue for reproducibility, which may depend on the nature of the contribution. For example
        \begin{enumerate}
            \item If the contribution is primarily a new algorithm, the paper should make it clear how to reproduce that algorithm.
            \item If the contribution is primarily a new model architecture, the paper should describe the architecture clearly and fully.
            \item If the contribution is a new model (e.g., a large language model), then there should either be a way to access this model for reproducing the results or a way to reproduce the model (e.g., with an open-source dataset or instructions for how to construct the dataset).
            \item We recognize that reproducibility may be tricky in some cases, in which case authors are welcome to describe the particular way they provide for reproducibility. In the case of closed-source models, it may be that access to the model is limited in some way (e.g., to registered users), but it should be possible for other researchers to have some path to reproducing or verifying the results.
        \end{enumerate}
    \end{itemize}

\item {\bf Open access to data and code}
    \item[] Question: Does the paper provide open access to the data and code, with sufficient instructions to faithfully reproduce the main experimental results, as described in supplemental material?
    \item[] Answer: \answerNo{} 
    \item[] Justification: The experiments in this paper are based on synthetic data generated to support the theoretical findings. As such, there is no real-world dataset or code to be made openly accessible. The question of open access to data and code is not applicable in this case, as the main contributions are purely theoretical and do not rely on empirical results from specific datasets.
    \item[] Guidelines:
    \begin{itemize}
        \item The answer NA means that paper does not include experiments requiring code.
        \item Please see the NeurIPS code and data submission guidelines (\url{https://nips.cc/public/guides/CodeSubmissionPolicy}) for more details.
        \item While we encourage the release of code and data, we understand that this might not be possible, so “No” is an acceptable answer. Papers cannot be rejected simply for not including code, unless this is central to the contribution (e.g., for a new open-source benchmark).
        \item The instructions should contain the exact command and environment needed to run to reproduce the results. See the NeurIPS code and data submission guidelines (\url{https://nips.cc/public/guides/CodeSubmissionPolicy}) for more details.
        \item The authors should provide instructions on data access and preparation, including how to access the raw data, preprocessed data, intermediate data, and generated data, etc.
        \item The authors should provide scripts to reproduce all experimental results for the new proposed method and baselines. If only a subset of experiments are reproducible, they should state which ones are omitted from the script and why.
        \item At submission time, to preserve anonymity, the authors should release anonymized versions (if applicable).
        \item Providing as much information as possible in supplemental material (appended to the paper) is recommended, but including URLs to data and code is permitted.
    \end{itemize}

\item {\bf Experimental Setting/Details}
    \item[] Question: Does the paper specify all the training and test details (e.g., data splits, hyperparameters, how they were chosen, type of optimizer, etc.) necessary to understand the results?
    \item[] Answer: \answerYes{} 
    \item[] Justification: This paper provides detailed experiment settings in Section~\ref{sec:exp}, which should be sufficient for reproducing the main experimental results. 
    \item[] Guidelines:
    \begin{itemize}
        \item The answer NA means that the paper does not include experiments.
        \item The experimental setting should be presented in the core of the paper to a level of detail that is necessary to appreciate the results and make sense of them.
        \item The full details can be provided either with the code, in appendix, or as supplemental material.
    \end{itemize}

\item {\bf Experiment Statistical Significance}
    \item[] Question: Does the paper report error bars suitably and correctly defined or other appropriate information about the statistical significance of the experiments?
    \item[] Answer: \answerYes{} 
    \item[] Justification: : This paper provides detailed experiment settings in Section~\ref{sec:exp}, which include important details such as the error bars reported in Table 3.
    \item[] Guidelines:
    \begin{itemize}
        \item The answer NA means that the paper does not include experiments.
        \item The authors should answer "Yes" if the results are accompanied by error bars, confidence intervals, or statistical significance tests, at least for the experiments that support the main claims of the paper.
        \item The factors of variability that the error bars are capturing should be clearly stated (for example, train/test split, initialization, random drawing of some parameter, or overall run with given experimental conditions).
        \item The method for calculating the error bars should be explained (closed form formula, call to a library function, bootstrap, etc.)
        \item The assumptions made should be given (e.g., Normally distributed errors).
        \item It should be clear whether the error bar is the standard deviation or the standard error of the mean.
        \item It is OK to report 1-sigma error bars, but one should state it. The authors should preferably report a 2-sigma error bar than state that they have a 96\% CI, if the hypothesis of Normality of errors is not verified.
        \item For asymmetric distributions, the authors should be careful not to show in tables or figures symmetric error bars that would yield results that are out of range (e.g. negative error rates).
        \item If error bars are reported in tables or plots, The authors should explain in the text how they were calculated and reference the corresponding figures or tables in the text.
    \end{itemize}

\item {\bf Experiments Compute Resources}
    \item[] Question: For each experiment, does the paper provide sufficient information on the computer resources (type of compute workers, memory, time of execution) needed to reproduce the experiments?
    \item[] Answer: \answerYes{} 
    \item[] Justification: The requirement of Compute Resources are stated in Section~\ref{sec:exp}.
    \item[] Guidelines: 
    \begin{itemize}
        \item The answer NA means that the paper does not include experiments.
        \item The paper should indicate the type of compute workers CPU or GPU, internal cluster, or cloud provider, including relevant memory and storage.
        \item The paper should provide the amount of compute required for each of the individual experimental runs as well as estimate the total compute. 
        \item The paper should disclose whether the full research project required more compute than the experiments reported in the paper (e.g., preliminary or failed experiments that didn't make it into the paper). 
    \end{itemize}
    
\item {\bf Code Of Ethics}
    \item[] Question: Does the research conducted in the paper conform, in every respect, with the NeurIPS Code of Ethics \url{https://neurips.cc/public/EthicsGuidelines}?
    \item[] Answer: \answerYes 
    \item[] Justification: As a theoretical paper, the research presented in this paper has been conducted with the highest ethical standards and is fully compliant with the NeurIPS Code of Ethics. 
    \item[] Guidelines:
    \begin{itemize}
        \item The answer NA means that the authors have not reviewed the NeurIPS Code of Ethics.
        \item If the authors answer No, they should explain the special circumstances that require a deviation from the Code of Ethics.
        \item The authors should make sure to preserve anonymity (e.g., if there is a special consideration due to laws or regulations in their jurisdiction).
    \end{itemize}

\item {\bf Broader Impacts}
    \item[] Question: Does the paper discuss both potential positive societal impacts and negative societal impacts of the work performed?
    \item[] Answer: \answerNA{} 
    \item[] Justification: Our work provides the theoretical understanding of generic optimization algorithm (SGD). Although there might be some potential social impacts on applications, according to the guidelines, we believe our result does not have a direct connection with these issues.
    \item[] Guidelines:
    \begin{itemize}
        \item The answer NA means that there is no societal impact of the work performed.
        \item If the authors answer NA or No, they should explain why their work has no societal impact or why the paper does not address societal impact.
        \item Examples of negative societal impacts include potential malicious or unintended uses (e.g., disinformation, generating fake profiles, surveillance), fairness considerations (e.g., deployment of technologies that could make decisions that unfairly impact specific groups), privacy considerations, and security considerations.
        \item The conference expects that many papers will be foundational research and not tied to particular applications, let alone deployments. However, if there is a direct path to any negative applications, the authors should point it out. For example, it is legitimate to point out that an improvement in the quality of generative models could be used to generate deepfakes for disinformation. On the other hand, it is not needed to point out that a generic algorithm for optimizing neural networks could enable people to train models that generate Deepfakes faster.
        \item The authors should consider possible harms that could arise when the technology is being used as intended and functioning correctly, harms that could arise when the technology is being used as intended but gives incorrect results, and harms following from (intentional or unintentional) misuse of the technology.
        \item If there are negative societal impacts, the authors could also discuss possible mitigation strategies (e.g., gated release of models, providing defenses in addition to attacks, mechanisms for monitoring misuse, mechanisms to monitor how a system learns from feedback over time, improving the efficiency and accessibility of ML).
    \end{itemize}
    
\item {\bf Safeguards}
    \item[] Question: Does the paper describe safeguards that have been put in place for responsible release of data or models that have a high risk for misuse (e.g., pretrained language models, image generators, or scraped datasets)?
    \item[] Answer: \answerNA{} 
    \item[] Justification:  As a theoretical paper, this work does not involve the development or release of any models or datasets, particularly those that might be considered high-risk for misuse, such as pretrained language models, image generators, or scraped datasets. Therefore, the question of safeguards for responsible release does not apply.
    \item[] Guidelines:
    \begin{itemize}
        \item The answer NA means that the paper poses no such risks.
        \item Released models that have a high risk for misuse or dual-use should be released with necessary safeguards to allow for controlled use of the model, for example by requiring that users adhere to usage guidelines or restrictions to access the model or implementing safety filters. 
        \item Datasets that have been scraped from the Internet could pose safety risks. The authors should describe how they avoided releasing unsafe images.
        \item We recognize that providing effective safeguards is challenging, and many papers do not require this, but we encourage authors to take this into account and make a best faith effort.
    \end{itemize}

\item {\bf Licenses for existing assets}
    \item[] Question: Are the creators or original owners of assets (e.g., code, data, models), used in the paper, properly credited and are the license and terms of use explicitly mentioned and properly respected?
    \item[] Answer: \answerNA{} 
    \item[] Justification: This paper is a theoretical work that does not utilize or rely on any existing assets such as code, data, or models from other sources. Therefore, the question of crediting creators or detailing licenses and terms of use for such assets does not apply.
    \item[] Guidelines:
    \begin{itemize}
        \item The answer NA means that the paper does not use existing assets.
        \item The authors should cite the original paper that produced the code package or dataset.
        \item The authors should state which version of the asset is used and, if possible, include a URL.
        \item The name of the license (e.g., CC-BY 4.0) should be included for each asset.
        \item For scraped data from a particular source (e.g., website), the copyright and terms of service of that source should be provided.
        \item If assets are released, the license, copyright information, and terms of use in the package should be provided. For popular datasets, \url{paperswithcode.com/datasets} has curated licenses for some datasets. Their licensing guide can help determine the license of a dataset.
        \item For existing datasets that are re-packaged, both the original license and the license of the derived asset (if it has changed) should be provided.
        \item If this information is not available online, the authors are encouraged to reach out to the asset's creators.
    \end{itemize}

\item {\bf New Assets}
    \item[] Question: Are new assets introduced in the paper well documented and is the documentation provided alongside the assets?
    \item[] Answer: \answerNA{} 
    \item[] Justification: As this is a theoretical paper, it does not introduce or release any new assets such as datasets, code, or models. Thus, there is no need for documentation related to new assets.
    \item[] Guidelines:
    \begin{itemize}
        \item The answer NA means that the paper does not release new assets.
        \item Researchers should communicate the details of the dataset/code/model as part of their submissions via structured templates. This includes details about training, license, limitations, etc. 
        \item The paper should discuss whether and how consent was obtained from people whose asset is used.
        \item At submission time, remember to anonymize your assets (if applicable). You can either create an anonymized URL or include an anonymized zip file.
    \end{itemize}

\item {\bf Crowdsourcing and Research with Human Subjects}
    \item[] Question: For crowdsourcing experiments and research with human subjects, does the paper include the full text of instructions given to participants and screenshots, if applicable, as well as details about compensation (if any)? 
    \item[] Answer: \answerNA{} 
    \item[] Justification:  This theoretical paper does not conduct any experiments involving crowdsourcing or research with human subjects, thus there are no participants, instructions, or compensation details to report.
    \item[] Guidelines:
    \begin{itemize}
        \item The answer NA means that the paper does not involve crowdsourcing nor research with human subjects.
        \item Including this information in the supplemental material is fine, but if the main contribution of the paper involves human subjects, then as much detail as possible should be included in the main paper. 
        \item According to the NeurIPS Code of Ethics, workers involved in data collection, curation, or other labor should be paid at least the minimum wage in the country of the data collector. 
    \end{itemize}

\item {\bf Institutional Review Board (IRB) Approvals or Equivalent for Research with Human Subjects}
    \item[] Question: Does the paper describe potential risks incurred by study participants, whether such risks were disclosed to the subjects, and whether Institutional Review Board (IRB) approvals (or an equivalent approval/review based on the requirements of your country or institution) were obtained?
    \item[] Answer: \answerNA{} 
    \item[] Justification:  As this paper is purely theoretical and does not involve any research with human subjects, there are no study participants, thus no associated risks or requirements for Institutional Review Board (IRB) approvals.
    \item[] Guidelines:
    \begin{itemize}
        \item The answer NA means that the paper does not involve crowdsourcing nor research with human subjects.
        \item Depending on the country in which research is conducted, IRB approval (or equivalent) may be required for any human subjects research. If you obtained IRB approval, you should clearly state this in the paper. 
        \item We recognize that the procedures for this may vary significantly between institutions and locations, and we expect authors to adhere to the NeurIPS Code of Ethics and the guidelines for their institution. 
        \item For initial submissions, do not include any information that would break anonymity (if applicable), such as the institution conducting the review.
    \end{itemize}

\end{enumerate}

\end{document}